\def\eqref#1{equation~\ref{#1}}
\def\1{\bm{1}}
\def\ve{{\bm{e}}}
\def\vu{{\bm{u}}}
\def\mV{{\bm{V}}}
\DeclareMathAlphabet{\mathsfit}{\encodingdefault}{\sfdefault}{m}{sl}
\SetMathAlphabet{\mathsfit}{bold}{\encodingdefault}{\sfdefault}{bx}{n}
\newcommand{\R}{\mathbb{R}}
\DeclareMathOperator*{\argmax}{arg\,max}
\newtheorem{theorem}{Theorem}
\newtheorem{corollary}{Corollary}
\newtheorem{lemma}{Lemma}
\newtheorem{definition}{Definition}
\newtheorem{assumption}{Assumption}
\newtheorem*{remark}{Remark}
\title{Piecewise linear activations substantially\\shape the loss surfaces of neural networks}
\author{Fengxiang He\thanks{Both authors contributed equally.}~, Bohan Wang\footnotemark[1]~\thanks{Bohan Wang is also affiliated with University of Science and Technology of China. This work was completed when he was a summer intern at UBTECH Sydney AI Centre, School of Computer Science, Faculty of Engineering, the University of Sydney.}~~\& Dacheng Tao\\
UBTECH Sydney AI Centre, School of Computer Science, Faculty of Engineering\\
The University of Sydney\\Darlington, NSW 2008, Australia\\
\texttt{\{fengxiang.he, dacheng.tao\}@sydney.edu.au, bhwangfy@gmail.com}
}
\begin{document}

\maketitle

\begin{abstract}
Understanding the loss surface of a neural network is fundamentally important to the understanding of deep learning. This paper presents how piecewise linear activation functions substantially shape the loss surfaces of neural networks. We first prove that {\it the loss surfaces of many neural networks have infinite spurious local minima} which are defined as the local minima with higher empirical risks than the global minima. Our result demonstrates that the networks with piecewise linear activations possess substantial differences to the well-studied linear neural networks. This result holds for any neural network with arbitrary depth and arbitrary piecewise linear activation functions (excluding linear functions) under most loss functions in practice. Essentially, the underlying assumptions are consistent with most practical circumstances where the output layer is narrower than any hidden layer. In addition, the loss surface of a neural network with piecewise linear activations is partitioned into multiple smooth and multilinear cells by nondifferentiable boundaries. The constructed spurious local minima are concentrated in one cell as a valley: they are connected with each other by a continuous path, on which empirical risk is invariant. Further for one-hidden-layer networks, we prove that all local minima in a cell constitute an equivalence class; they are concentrated in a valley; and they are all global minima in the cell.
\end{abstract}





\section{Introduction}

Neural networks have been successfully deployed in many real-world applications \citep{lecun2015deep, witten2016data, silver2016mastering, he2016deep, litjens2017survey}. In spite of this, the theoretical foundations of neural networks are somewhat premature. To the many deficiencies in our knowledge of deep learning theory, the investigation into the loss surfaces of neural networks is of fundamental importance. Understanding the loss surface would be helpful in several relevant research areas, such as the ability to estimate data distributions, the optimization of neural networks, and the generalization to unseen data.

This paper studies the role of the nonlinearities in activation functions in shaping the loss surfaces of neural networks. Our results demonstrate that the impact of nonlinearities is profound.

First, we prove that the loss surfaces of nonlinear neural networks are substantially different to those of linear neural networks, in which local minima are created equal, and also, they are all global minima \citep{kawaguchi2016deep, baldi1989neural, lu2017depth, freeman2016topology, zhou2017critical, laurent2017multilinear, yun2017global}. By contrast,
\begin{quotation}
{\it Neural networks with arbitrary depth and arbitrary piecewise linear activations (excluding linear functions) have infinitely many spurious local minima under arbitrary continuously differentiable loss functions.}
\end{quotation}
This result only relies on four mild assumptions that cover most practical circumstances: (1) the training sample set is linearly inseparable; (2) all training sample points are distinct; (3) the output layer is narrower than the other hidden layers; and (4) there exists some turning point in the piece-wise linear activations that the sum of the slops on the two sides does not equal to $0$.

Our result significantly extends the existing study on the existence of spurious local minimum. For example, \citet{zhou2017critical} prove that one-hidden-layer neural networks with two nodes in the hidden layer and two-piece linear (ReLU-like) activations have spurious local minima; \citet{swirszcz2016local} prove that ReLU networks  have spurious local minima under the squared loss when most of the neurons are not activated; \citet{safran2017spurious} present a computer-assisted proof that two-layer ReLU networks have spurious local minima; a recent work \citep{yun2018small} have proven that neural networks with two-piece linear activations have infinite spurious local minima, but the results only apply to the networks with one hidden layer and one-dimensional outputs; and a concurrent work \citep{Goldblum2020Truth} proves that for multi-layer perceptrons of any depth, the performance of every local minimum on the training data equals to a linear model, which is also verified by experiments.

The proposed theorem is proved in three stages: (1) we prove that neural networks with one hidden layer and two-piece linear activations have spurious local minima; (2) we extend the conditions to neural networks with arbitrary hidden layers and two-piece linear activations; and (3) we further extend the conditions to neural networks with arbitrary depth and arbitrary piecewise linear activations. Since some parameters of the constructed spurious local minima are from continuous intervals, we have obtained infinitely many spurious local minima. At each stage, the proof follows a two-step strategy that: (a) constructs an infinite series of local minima; and (b) constructs a point in the parameter space whose empirical risk is lower than the constructed local minimum in Step (a). This strategy is inspired by \citet{yun2018small} but we have made significant and non-trivial development.

Second, we draw a ``big picture" for the loss surfaces of nonlinear neural networks. \citet{soudry2018exponentially} highlight a {\it smooth and multilinear partition} of the loss surfaces of neural networks. The nonlinearities in the piecewise linear activations partition the loss surface of any nonlinear neural network into multiple smooth and multilinear open cells. Specifically, every nonlinear point in the activation functions creates a group of the non-differentiable boundaries between the cells, while the linear parts of activations correspond to the smooth and multilinear interiors. Based on the partition, we discover a degenerate nature of the large amounts of local minima from the following aspects:

\begin{itemize}

\item
\textbf{Every local minimum is globally minimal within a cell.} This property demonstrates that the local geometry within every cell is similar to the global geometry of linear networks, although technically, they are substantially different. It applies to any one-hidden-layer neural network with two-piece linear activations for regression under convex loss. We rigorously prove this property in two stages: (1) we prove that within every cell, the empirical risk $\hat{\mathcal R}$ is convex with respect to a variable $\hat W$ mapped from the weights $W$ by a mapping $Q$.
Therefore, the local minima with respect to the variable $\hat W$ are also the global minima in the cell; and then (2) we prove that the local optimality is maintained under the constructed mapping. Specifically, the local minima of the empirical risk $\hat{\mathcal R}$ with respect to the parameter $W$ are also the local minima with respect to the variable $\hat W$. We thereby prove this property by combining the convexity and the correspondence of the minima. This proof is technically novel and non-trivial, though the intuitions are natural.

\item
\textbf{Equivalence classes and quotient space of local minimum valleys.} All local minima in a cell are concentrated as a {\it local minimum valley}: on a local minimum valley, all local minima are connected with each other by a continuous path, on which the empirical risk is invariant. Further, all these local minima constitute an equivalence class. This local minima valley may have several parallel valleys that are in the same equivalence class but do not appear because of the restraints from cell boundaries. If such constraints are ignored, all the equivalence classes constitute a quotient space. The constructed mapping $Q$ is exactly the quotient map. This result coincides with the property of {\it mode connectivity} that the minima found by gradient-based methods are connected by a path in the parameter space with almost invariant empirical risk \citep{garipov2018loss, pmlr-v80-draxler18a, kuditipudi2019explaining}. Additionally, this property suggests that we would need to study every local minimum valley as a whole.

\item
\textbf{Linear collapse.} Linear neural networks are covered by our theories as a simplified case. When all activations are linear, the partitioned loss surface collapses to one single cell, in which all local minima are globally optimal, as suggested by the existing works on linear networks \citep{kawaguchi2016deep, baldi1989neural, lu2017depth, freeman2016topology, zhou2017critical, laurent2017multilinear, yun2017global}.

\end{itemize}

\textbf{Notations.} If $M$ is a matrix, $M_{i,j}$ denotes the $(i,j)$-th component of $M$. If $M$ is a vector, $M_i$ denotes the $i$-th component of $M$. Define $E_{ij}$ as a matrix in which the $(i, j)$-th component is $1$ while all other components are $0$. Also, denote $\ve_i$ as a vector such that the $i$-th component is $1$ while all others are $0$. Additionally, we define $\mathbf{1}_k \in \mathbb{R}^{k\times 1}$ is a vector whose components are all $1$, while those of $\mathbf{0}_{n\times m}\in \mathbb{R}^{n\times m}$ (or briefly, $\mathbf{0}$) are all $0$. For the brevity, $[i:j]$ denotes $\{i,\cdots,j\}$.

\section{Related work}
\label{sec:review}

Some works suggest that linear neural networks have no spurious local minima. \citet{kawaguchi2016deep} proves that linear neural networks with squared loss do not have any spurious local minimum under three assumptions about the data matrix $X$ and the label matrix $Y$: (1) both matrices $XX^T$ and $XY^T$ have full ranks; and (2) the input layer is wider than the output layer; and (3) the eigenvalues of matrix  $Y X^{\top}\left(X X^{T}\right)^{-1} X Y^{T}$ are distinct with each other. \citet{zhou2017critical} give an analytic formulation of the critical points for the loss function of deep linear networks, and thereby obtain a group of equivalence conditions for that critical point is a global minimum. \citet{lu2017depth} prove the argument under one assumption that both matrices $X$ and $Y$ have full ranks, which is even more restrictive. However, in practice, the activations of most neural networks are not linear. The nonlinearities would make the loss surface extremely non-convex and even non-smooth and therefore far different from the linear case.

The loss surfaces of over-parameterized neural networks have some special properties. \citet{choromanska2015loss} empirically suggest that: (1) most local minima of over-parameterized networks are equivalent; and (2) small-size networks have spurious local minima but the probability of finding one decreases rapidly with the network size. \citet{li2018over} prove that over-parameterized fully-connected deep neural networks with continuous activation functions and convex, differentiable loss functions, have no bad strict local minimum. \citet{nguyen2018on} suggest that ``sufficiently over-parameterized" neural networks have no bad local valley under the cross-entropy  loss. \citet{nguyen2019on} further suggests that the global minima of sufficiently over-parameterized neural networks are connected within a unique valley. Many other works study the convergence, generalization, and other properties of stochastic gradient descent on the loss surfaces of over-parameterized networks \citep{NIPS2018_7567, arora2018optimization, brutzkus2018sgd, du2018gradient1, soltanolkotabi2018theoretical, allen2018learning, pmlr-v97-allen-zhu19a, pmlr-v97-oymak19a}.

Many advances on the loss surfaces of neural networks are focused on other problems. \citet{zhou2017landscape} and \citet{mei2018landscape} prove that the empirical risk surface and expected risk surface are linked. This correspondence highlights the value of investigating loss surfaces (empirical risk surfaces) to the study of generalization (the gap between empirical risks to expected risks). \citet{pmlr-v97-hanin19a} demonstrate that the input space of neural networks with piecewise linear activations are partitioned by multiple regions, while our work focuses on the partition of the loss surface. \citet{xie2016diverse} proves that the training error and test error are upper bounded by the magnitude of the gradient, under the assumption that the geometry discrepancy of the parameter $W$ is bounded. \citet{sagun2016singularity, sagun2018empirical} present empirical results that the eigenvalues of the Hessian of the loss surface are two-fold: (1) a bulk centered closed to zero; and (2) outliers away from the bulk. \citet{kawaguchi2019elimination} prove that we can eliminate the spurious local minima by adding one unit per output unit for almost any neural network in practice. \citet{tian2017analytical, andrychowicz2016learning, NIPS2017_6796, zhong2017recovery, Brutzkus:2017:GOG:3305381.3305444, tian2017analytical, NIPS2017_6662, zou2018stochastic, NIPS2018_8038, du2018gradient2, du2018gradient1, pmlr-v89-zhang19g, zhou2018sgd, wang2019learning} study the optimization methods for neural networks. Other relevant works include \citet{sagun2016singularity, sagun2018empirical, nguyen2017optimization, du2017gradient, Haeffele_2017_CVPR, liang2018understanding, wu2018no, yun2018efficiently, zhang2019deep, kuditipudi2019explaining, garipov2018loss, pmlr-v80-draxler18a, he2019control, kawaguchi2019elimination}.

\section{Neural network has infinite spurious local minima}
\label{sec:spurious_local_minima}

This section investigates the existence of spurious local minima on the loss surfaces of neural networks. We find that almost all practical neural networks have infinitely many spurious local minima. This result stands for any neural network with arbitrary depth and arbitrary piecewise linear activations excluding linear functions under arbitrary continuously differentiable loss.

\subsection{Preliminaries}

Consider a training sample set $\{ (X_1, Y_1), (X_2, Y_2), \dots, (X_n, Y_n)\}$ of size $n$. Suppose the dimensions of feature $X_i$ and label $Y_i$ are $d_X$ and $d_Y$, respectively. By aggregating the training sample set, we obtain the feature matrix $X \in \mathbb R^{d_X \times n}$ and label matrix $Y \in \mathbb R^{d_Y \times n}$.

Suppose a neural network has $L$ layers. Denote the weight matrix, bias, and activation in the $j$-th layer respectively by $W_j \in \mathbb R^{d_j \times d_{j-1}}$, $b_j \in \mathbb R^{d_j}$, and $h: \mathbb R^{d_j \times n} \to \mathbb R^{d_j \times n}$, where $d_j$ is the dimension of the output of the $j$-th layer. Also, for the input matrix $X$, the output of the $j$-th layer is denoted as the $Y^{(j)}$ and the output of the $j$-th layer before the activation is denoted as the $\tilde Y^{(j)}$,
\begin{gather}
\label{eq:output_layer_before_activation}
	\tilde{Y}^{(j)} =W_j Y^{(j-1)} + b_i \bm 1_n^T ,\\
\label{eq:output_layer}
	Y^{(j)} = h \left (W_j Y^{(j-1)} + b_i \bm 1_n^T \right ).
\end{gather}
The output of the network is defined as follows,
\begin{gather}
\label{eq:output_net}
	\hat Y = h_L \left (W_L h_{L-1} \left (W_{L-1} h_{L-2} \left (\ldots h_1 \left (W_1 X + b_1 \bm 1_n^T \right ) \ldots \right ) + b_{L-1} \bm 1_n^T \right ) + b_L \bm 1_n^T \right ).
\end{gather}
Also, we define $Y^{(0)} = X$, $Y^{(L)} = \hat Y$, $d_0 = d_X$, and $d_L = d_Y$. In some situations, we use $\hat Y\left(\left[W_i\right]_{i=1}^L,\left[b_i\right]_{i=1}^L\right)$ to clarify the parameters, as well as $\tilde Y^{(j)}$, $Y^{(j)}$, etc.

This section discusses neural networks with piecewise linear activations. A part of the proof uses two-piece linear activations $h_{s_-, s_+}$ which are defined as follows,
\begin{equation}
\label{eq:relu_like}
	h_{s_-, s_+}(x) = \mathbf I_{\{x \le 0\}} s_- x + \mathbf{I}_{\{x>0\}} s_+ x,
\end{equation}
where $\vert s_+ \vert \ne \vert s_- \vert$ and $\mathbf I_{\{\cdot\}}$ is the indicator function.
\begin{remark}
Piecewise linear functions are dense in the space of continuous functions. In other words, for any continuous function, we can always find a piecewise linear function to estimate it with arbitrary small distance.
\end{remark}

This section uses continuously differentiable loss to evaluate the performance of neural networks. Continuous differentiability is defined as follows.
\begin{definition}[Continuously differentiable]
\label{def:continuously_differentiable}
	We call a function $f: \mathbb R^n \to \mathbb R$ continuously differentiable with respect to the variable $x$ if: (1) the function $f$ is differentiable with respect to $x$; and (2) the gradient $\nabla_x f(x)$ of the function $f$ 
	is continuous with respect to the variable $x$.
\end{definition}

\subsection{Main result}

The theorem in this section relies on the following assumptions.

\begin{assumption}
\label{assum:nonlinear}
	The training data cannot be fit by a linear model.
\end{assumption}

\begin{assumption}
\label{assum:disctinct}
	All data points are distinct.
\end{assumption}

\begin{assumption}
\label{assum:width}
All hidden layers are wider than the output layer.
\end{assumption}

\begin{assumption}
	\label{assum:activation}
	For the piece-wise linear activations, there exists some turning point that the sum of the slops on the two sides does not equal to $0$.
\end{assumption}

To our best knowledge, our assumptions are the least restrictive compared with the relevant works in the literature. These assumptions are respectively justified as follows: (1) most real-world datasets are extremely complex and cannot be simply fit using linear models; (2) it is easy to guarantee that the data points are distinct by employing data cleansing methods; (3) for regression and many classification tasks, the width of output layer is limited and narrower than the hidden layers; and (4) this assumption is invalid only for activations like $f(x) = a |x|$.

Based on these four assumptions, we can prove the following theorem.

\begin{theorem}
\label{thm:spurious}
	Neural networks with arbitrary depth and arbitrary piecewise linear activations (excluding linear functions) have infinitely many spurious local minima under arbitrary continuously differentiable loss whose derivative can equal $0$ only when the prediction and label are the same.
\end{theorem}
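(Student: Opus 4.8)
The plan is to establish Theorem~\ref{thm:spurious} through the three-stage reduction announced in the introduction, and within each stage to follow the two-step recipe: \textbf{(a)} exhibit an infinite family of points that are genuine local minima of the empirical risk $\hat{\mathcal R}$, and \textbf{(b)} produce, for each, a competing parameter configuration of strictly smaller risk, so the minima are certified \emph{spurious}. Throughout I would write $\hat{\mathcal R}$ as an average of a continuously differentiable per-sample loss $\ell$ whose derivative vanishes only at a perfect fit, and exploit this hypothesis to convert ``nonzero residual'' into ``nonzero error signal.''

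\textbf{Stage 1: one hidden layer, two-piece activation.} With $L=2$ and $h_1=h_{s_-, s_+}$ as in Equation~\ref{eq:relu_like}, I would seek $W_1,b_1,W_2,b_2$ at which the network reproduces the \emph{best affine predictor} of $Y$ from $X$. The mechanism is to drive all $n$ hidden pre-activations $W_1 X + b_1\mathbf{1}_n^T$ onto one side of the kink, so that $h_{s_-, s_+}$ collapses to multiplication by a single slope and $\hat Y$ becomes affine in $X$; Assumption~\ref{assum:width} supplies enough hidden units for $W_2 W_1$ to range over all $d_Y\times d_X$ matrices, so the optimal affine coefficients are attainable. At such a configuration the chain rule forces the parameter gradient of $\hat{\mathcal R}$ to vanish, because the error signal $\partial\ell/\partial\hat Y$ is orthogonal to the realizable output directions by affine optimality. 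An \emph{infinite} family then comes for free from positive homogeneity: the rescaling $(W_1,b_1,W_2)\mapsto(cW_1,cb_1,c^{-1}W_2)$ for $c>0$ fixes both the function and the activation pattern, yielding a continuum of distinct parameter points. For step (b), Assumption~\ref{assum:nonlinear} guarantees the affine residual is nonzero, while Assumption~\ref{assum:disctinct} together with the genuine nonlinearity $s_-\neq s_+$ lets me hand-build a piecewise-linear network that bends across kinks to fit the distinct samples strictly better, certifying spuriousness.

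\textbf{Stages 2 and 3.} To pass to arbitrary depth (still two-piece activations) I would embed the one-hidden-layer construction into a deep network by choosing the remaining weight matrices so that the additional layers act as a fixed injective linear pass-through on the relevant subspace, again using Assumption~\ref{assum:width} to carry the spare width through each layer; local-minimality and the strictly better competitor then lift from the shallow case. To reach an arbitrary piecewise linear activation, I would localize: by Assumption~\ref{assum:activation} there is a turning point whose two slopes sum to a nonzero value, and by choosing biases that confine every pre-activation to a small neighborhood of that turning point, the general activation is indistinguishable on the data from a shifted two-piece activation $h_{s_-, s_+}$ with $s_-+s_+\neq 0$, reducing Stage 3 to Stage 2.

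\textbf{Main obstacle.} The delicate step is part (a): certifying that the constructed point is a true local minimum of the \emph{nonsmooth, non-convex} risk rather than a saddle. Because $\ell$ is not assumed convex, a vanishing gradient inside a smooth cell does not suffice; the argument must instead seat the point on a nondifferentiable cell boundary and show that the one-sided directional derivatives are nonnegative in every direction. This is exactly where Assumption~\ref{assum:activation} is indispensable: the slope sum $s_-+s_+\neq 0$ forces the kink to produce an upward-pointing corner in the risk, whereas an even activation of the form $s_+\lvert x\rvert$ (excluded by the assumption) would leave a flat, symmetric direction along which no strict increase occurs. Handling these directional derivatives uniformly over the infinite family, and controlling them after the depth embedding and the turning-point localization, is the technical heart of the proof.
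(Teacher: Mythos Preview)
Your three-stage outline matches the paper, and your Step~(a) construction---push all hidden pre-activations to one side of the kink so the network realizes the best affine predictor $\tilde W$---is exactly right. But the ``Main obstacle'' section misidentifies both where the difficulty lies and how Assumption~\ref{assum:activation} enters, and the proposed fix would not work.

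The constructed local minimum sits in the \emph{interior} of a smooth cell, not on a boundary. In the paper's Lemma~\ref{Lemma:localmin} the bias $\hat b_1$ carries a large negative shift $\eta$ so that every entry of $\hat W_1 X+\hat b_1\mathbf{1}_n^T$ is strictly positive; small perturbations keep all pre-activations positive, so the perturbed network output is exactly $(\tilde W+\delta)\tilde X$ for some $\delta\to 0$. Local minimality then follows \emph{directly} from the hypothesis that $\tilde W$ is a local minimizer of the affine risk $f(W)=\tfrac{1}{n}\sum_i l\bigl(Y_i,W\bigl[\begin{smallmatrix}x_i\\1\end{smallmatrix}\bigr]\bigr)$: one has $\hat{\mathcal R}(\text{perturbed})=f(\tilde W+\delta)\ge f(\tilde W)=\hat{\mathcal R}(\text{original})$. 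No convexity of $l$, no second-order test, and no one-sided directional derivatives at a kink are needed. Your plan to seat the point on a nondifferentiable boundary and control all one-sided derivatives is neither necessary nor what the paper does, and would be substantially harder to carry out.

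Assumption~\ref{assum:activation} ($s_-+s_+\neq 0$) plays no role in Step~(a); it is used only in Step~(b). The competitor of strictly smaller risk in Theorem~\ref{thm:L=1} is built with a second-layer weight containing entries $\pm\tfrac{1}{s_++s_-}$, and the first-order change in the Taylor expansion of the loss comes out proportional to $\tfrac{s_+-s_-}{s_++s_-}\gamma\sum_{i\in I}\mV_{1,i}$. When $s_-+s_+=0$ this particular construction collapses, which is why Corollary~\ref{coro: extend to d_1>L+2} supplies an alternative competitor under the stronger width Assumption~\ref{assum:width_new}. Your reading that the slope-sum condition produces an ``upward-pointing corner'' needed for local minimality is the wrong mechanism.

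Finally, your Step~(b) is too schematic. Because $l$ is not the squared loss, the risk of the competitor cannot be evaluated in closed form; the paper instead Taylor-expands $l$ about $\tilde W\tilde X$ and invokes a separation lemma (Lemma~\ref{Lemma:Separable Lemma}) which, given the nonzero gradient row guaranteed by Assumption~\ref{assum:nonlinear}, produces a split $I\cup J=[1{:}n]$ with $\sum_{i\in I}\mV_{1,i}\neq 0$ and a direction $\beta$ separating the two groups after an $\alpha\beta^T$ tilt. The competitor is engineered so that only the first output coordinate moves, by $\pm\tfrac{s_+-s_-}{s_++s_-}\gamma$ on $I$ versus $J$; the sign of $\gamma$ is then chosen to make the first-order term negative, and the remainder is $o(\gamma)$. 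This orthogonality-plus-separation argument, not a generic ``bend across kinks,'' is what makes Step~(b) go through for arbitrary continuously differentiable $l$.
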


In practice, most loss functions are continuously differentiable and the derivative can equal $0$ only when the prediction and label are the same, such as squared loss and cross-entropy  loss (see Appendix \ref{app:loss}, Lemmas \ref{lemma:squared_loss} and \ref{lemma:ce_loss}). Squared loss is a standard loss for regression and is defined as the $L_2$ norm of the difference between the ground-truth label and the prediction as follows.
\begin{equation}
\label{eq:squared_loss}
	l_2 \left (Y_i, \hat Y_i \right ) = \frac{1}{2} \left \| Y_i - \hat Y_i \right \|^2_F.
\end{equation}
Meanwhile, cross-entropy  loss is used as a standard loss in multiclass classification, which is defined as follows. Here, we treat the softmax function as a part of the loss function.
\begin{equation}
\label{eq:kl_loss}
	l_{ce}(Y_i, \hat Y_i)= - \sum_{j = 1}^{d_Y} Y_{i, j} \log \left(\frac{\hat Y_{i, j}}{\sum_{k = 1}^{d_Y}\hat Y_{i, k}} \right).
\end{equation}

One can also remove Assumption \ref{assum:activation}, if Assumption \ref{assum:width} is replaced by the following assumption, which is mildly more restrictive (see a detailed proof in pp. \pageref{coro: extend to d_1>L+2}--\pageref{sec:separabel}).

\begin{assumption}
		\label{assum:width_new}
		The dimensions of the layers satisfy that:
		\begin{gather*}
		d_1\ge d_Y+2,\\
		d_i \ge d_Y + 1, \text{ } i = 2, \ldots, L-1.
		\end{gather*}
\end{assumption}

Our result demonstrates that introducing nonlinearities into activations substantially reshapes the loss surface: they bring infinitely many spurious local minima into the loss surface. This result highlights the substantial difference from linear neural networks that all local minima of linear neural networks are equally good, and therefore, they are all global minima \citep{kawaguchi2016deep, baldi1989neural, lu2017depth, freeman2016topology, zhou2017critical, laurent2017multilinear, yun2017global}.

Some works have noticed the existence of spurious local minima on the loss surfaces of nonlinear neural networks, which however has a limited applicable domain \citep{choromanska2015loss, swirszcz2016local, safran2017spurious, yun2018small}.
A notable work by \citet{yun2018small} proves that one-hidden-layer neural networks with two-piece linear (ReLU-like) activations for one-dimensional regression have infinitely many spurious local minima under squared loss. This work first constructs a series of local minima and then prove they are spurious. This idea inspires some of this work. However, our work makes significant and non-trivial development that extends the conditions to arbitrary depth, piecewise linear activations excluding linear functions, and continuously differentiable loss.

\subsection{Proof skeleton}
\label{sec:skeleton_proof}

This section presents the skeleton of the proof. Theorem \ref{thm:spurious} is proved in three stages. We first prove a simplified version of Theorem \ref{thm:spurious} and then extend the conditions in the last two stages. The proof is partially inspired by \citet{yun2018small} but the proof in this paper has made nontrivial development and the results are significantly extended. 

\citet{yun2018small} and our paper both employ the following strategy: (a) construct a series of local minima based on a linear classifier; and (b) construct a new point with smaller empirical risk and thereby we prove that the constructed local minima are spurious. However, due to the differences in the loss function and the output dimensions, the exact constructions of local minima are substantially different. 

Our extensions from \citet{yun2018small} are three-fold: (1) From one hidden layer to arbitrary depth:  To prove that networks with an arbitrary depth have infinite spurious local minima, we develop a novel strategy that employs transformation operations to force data flow through the same linear parts of the activations, in order to construct the spurious local minima; (2) From squared loss to arbitrary differentiable loss: \citet{yun2018small} calculate the analytic formations of derivatives of the loss to construct the local minima and then prove they are spurious. This technique cannot be transplanted to the case of arbitrary differentiable loss functions, because we cannot assume the analytic formation. To prove that the loss surface under an arbitrary differentiable loss has an infinite number of spurious local minima, we employ a new proof technique based on Taylor series and a new separation lemma; and (3) From one-dimensional output to arbitrary-dimensional output: To prove the loss surface of a neural network with an arbitrary-dimensional output has an infinite number of spurious local minima, we need to deal with the calculus of functions whose domain and codomain are a matrix space and a vector space, respectively. By contrast, when the output dimension is one, the codomain is only the space of real numbers. Therefore, the extension of the output dimension significantly mounts the difficulty of the whole proof.

\textbf{Stage (1): Neural networks with one hidden layer and two-piece linear activations.}

We first prove that nonlinear neural networks with one hidden layer and two-piece linear activation functions (ReLU-like activations) have spurious local minima. The proof in this stage further follows a two-step strategy:

(a) We first construct local minima of the empirical risk $\hat{\mathcal R}$ (see Appendix \ref{sec:spurious_stage1}, Lemma \ref{Lemma:localmin}). These local minimizers are constructed based on a linear neural network which has the same network size (dimension of weight matrices) and evaluated under the same loss. The design of the hidden layer guarantees that the components of the output $\tilde{Y}^{(1)}$ in the hidden layer before the activation are all positive. The activation is thus effectively reduced to a linear function. Therefore, the local geometry around the local minima with respect to the weights $W$ is similar to those of linear neural networks. Further, the design of the output layer guarantees that its output $\hat Y$ is the same as the linear neural network. This construction helps to utilize the results of linear neural networks to solve the problems in nonlinear neural networks.

(b) We then prove that all the constructed local minima in Step (a) are spurious (see Appendix \ref{sec:spurious_stage1}, Theorem \ref{thm:L=1}). Specifically, we assumed by  Assumption \ref{assum:nonlinear} that the dataset cannot be fit by a linear model. Therefore, the gradient $\nabla_{\hat Y} \hat{\mathcal R}$ of the empirical risk $\hat{\mathcal R}$ with respect to the prediction $\hat Y$ is not zero. Suppose the $i$-th row of the gradient $\nabla_{\hat Y} \hat{\mathcal R}$ is not zero. Then, we use Taylor series and a preparation lemma (see Appendix \ref{sec:separabel}, Lemma \ref{Lemma:Separable Lemma}) to construct another point in the parameter space that has smaller empirical risk. Therefore, we prove that the constructed local minima are spurious. Furthermore, the constructions involve some parameters that are randomly picked from a continuous interval. Thus, we constructed infinitely many spurious local minima.

\textbf{Stage (2) - Neural networks with arbitrary hidden layers and two-piece linear activations.}

We extend the condition in Stage (1) to any neural network with arbitrary depth and two-piece linear activations. The proof in this stage follows the same two-step strategy but has different implementations:

(a) We first construct a series of local minima of the empirical risk $\hat{\mathcal R}$ (see Appendix \ref{sec:spurious_stage2}, Lemma \ref{Lemma:localmins2}). The construction guarantees that every component of the output $\tilde{Y}^{(i)}$ in each layer before the activations is positive, which secure all the input examples flow through the same part of the activations. Thereby, the nonlinear activations are reduced to linear functions. Also, our construction guarantees that the output $\hat Y$ of the network is the same as a linear network with the same weight matrix dimensions.

(b) We then prove that the constructed local minima are spurious (see Appendix \ref{sec:spurious_stage2}, Theorem \ref{them:infinite_one}). The idea is to find a point in the parameter space that has the same empirical risk $\hat{\mathcal R}$ with the constructed point in Stage (1), Step (b).

\textbf{Stage (3) - Neural networks with arbitrary hidden layer and piecewise linear activations.}

We further extend the conditions in Stage (2) to any neural network with arbitrary depth and arbitrary piecewise linear activations. We continue to adapt the two-step strategy in this stage: 

(a) We first construct a local minimizer of the empirical risk $\hat{\mathcal R}$ based on the results in Stages (1) and (2) (see Appendix \ref{sec:spurious_stage3}, Lemma \ref{Lemma:localmins3}). This construction is based on Stage (2), Step (a). The difference of the construction in this stage is that every linear part in activations can be a finite interval. The constructed weight matrices use several uniform scaling and translation operations to the outputs of hidden layers in order to guarantee that all the input training sample points flow through the same linear parts of the activations. We thereby reduce the nonlinear activations to linear functions, effectively. Also, our construction guarantees that the output $\hat Y$ of the neural network equals to that of the corresponding linear neural network.

(b) We then prove that the constructed local minima are spurious (see Appendix \ref{sec:spurious_stage3}). We use the same strategy in Stage (2), Step (b). Some adaptations are implemented for the new conditions.

\section{A big picture of the loss surface}
\label{sec:structure_of_ReLU_network}

This section draws a big picture for the loss surfaces of neural networks. Based on a recent result by \citet{soudry2018exponentially}, we present four profound properties of the loss surface that collectively characterize how the nonlinearities in activations shape the loss surface.

\subsection{Preliminaries}
\label{sec:big_picture_preliminaries}

The discussions in this section use the following concepts.

\begin{definition}[Open ball and open set]
The open ball in $\mathcal H$ centered at $x \in \mathcal H$ and of radius $r > 0$ is defined by $B(h, r) = \{x: \| x - h \| < r \}$. A subset $A \subset \mathcal H$ of a space $\mathcal H$ is called a open set, if for every point $h \in A$, there exists a positive real $r > 0$, such that the open ball $B(h,r)$ with center $h$ and radius $r$ is in the subset $A$: $B(h,r) \subset A$.
\end{definition}

\begin{definition}[Interior point and interior]
For a subset $A \subset \mathcal H$ of a space $\mathcal H$, a point $h \in A$ is called an interior point of $A$, if there exists a positive real $r > 0$, such that the open ball $B(h,r)$ with center $h$ and radius $r$ is in the subset $A$: $B(h,r) \subset A$. The set of all the interior points of the set $A$ is called the interior of the set $A$.
\end{definition}

\begin{definition}[Limit point, closure, and boundary]
For a subset $A \subset \mathcal H$ of a space $\mathcal H$, a point $h \in A$ is called a limit point, if for every $r > 0$, the open ball $B(h, r)$  with center $h$ and radius $r$ contains some point of $A$: $B(h, r) \cap A \ne \emptyset$. The closure $\bar A$ of the set $A$ consists of the union of the set $A$ and all its limit points. The boundary $\partial A$ is defined as the set of points which are in the closure of set $A$ but not in the interior of set $A$.
\end{definition}

\begin{definition}[Multilinear]
	A function $f$: $\mathcal{X}_1\times \mathcal{X}_2\rightarrow \mathcal{Y}$ is called multilinear if for arbitrary $x_1^1,x^2_1\in \mathcal{X}$, $x_2^1,x_2^2\in \mathcal{X}_2$, and constants $\lambda_1$, $\lambda_2$, $\mu_1$, and $\mu_2$, we have
\begin{equation*}
    f(\lambda_1x_1^1+\lambda_2x_1^2,\mu_1x_2^1+\mu_2x_2^2)=\lambda_1\mu_1 f(x_1^1,x_2^1)+\lambda_1\mu_2 f(x_1^1,x_2^2)+\lambda_2\mu_1 f(x_1^2,x_2^1)+\lambda_2\mu_2 f(x_1^2,x_2^2).
\end{equation*}
\end{definition}

\begin{remark}
	The definition of ``multilinear" implies that the domain of any multilinear function $f$ is a connective and convex set, such as the smooth and multilinear cells below.
\end{remark}

\begin{definition}[Equivalence class, and quotient space]
	Suppose $X$ is a linear space. $[x]=\left\{v\in X: v\sim x\right\}$ is an equivalence class, if there is an equivalent relation $\sim$ on $[x]$, such that for any $a, b, c \in [x]$, we have: (1) reflexivity: $a \sim a$; (2) symmetry: if $a \sim b$, $b \sim a$; and (3) transitivity: if $a \sim b$ and $b \sim c$, $a \sim c$. The quotient space and quotient map are defined to be $X / \sim =\left\{\left\{v\in X: v\sim x\right\}: x\in X\right\}$ and $ x\rightarrow [x]$, respectively.
\end{definition}

\subsection{Main results}

In this section, the loss surface is defined under convex loss with respect to the prediction $\hat Y$ of the neural network. Convex loss covers many popular loss functions in practice, such as the squared loss for the regression tasks and many others based on norms. The triangle inequality of the norms secures the convexity of the corresponding loss functions. The convexity of the squared loss is checked in the appendix (see Appendix \ref{app:structure_of_ReLU_network}, Lemma \ref{lemma:convex_squared_loss}).

We now present four propositions to express the loss surfaces of nonlinear neural networks. These propositions give four major properties of the loss surface that collectively draw a big picture for the loss surface.

We first recall a lemma by \citet{soudry2018exponentially}. It proves that the loss surfaces of neural networks have smooth and multilinear partitions.

\begin{lemma}[Smooth and multilinear partition; cf. \citet{soudry2018exponentially}]
\label{thm:new_partition}
	The loss surfaces of neural networks of arbitrary depth with piecewise linear functions excluding linear functions are partitioned into multiple smooth and multilinear open cells, while the boundaries are nondifferentiable.
\end{lemma}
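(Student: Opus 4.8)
The plan is to index parameter space by the \emph{activation pattern} induced by the training data, and to show that each region carrying a fixed pattern is an open cell on which every activation acts linearly, so that the network output is multilinear in the weights and the risk is smooth, while the interfaces between regions are exactly the loci where some neuron's pre-activation sits at a kink of the activation. Concretely, a piecewise linear activation $h$ (excluding a linear function) has finitely many turning points $t_1 < \cdots < t_p$ splitting $\mathbb{R}$ into linear pieces $I_0, \ldots, I_p$, on each of which $h$ acts as $x \mapsto s_r x + c_r$. For a parameter $\theta = ([W_j]_{j=1}^L, [b_j]_{j=1}^L)$, every pre-activation coordinate $\tilde Y^{(j)}_{m,k}(\theta)$ (layer $j$, neuron $m$, sample $k$) lands in one such piece, and collecting these piece indices over all $(j,m,k)$ defines a pattern $\pi(\theta)$. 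I would then define the cell $C_\pi$ as the set of $\theta$ for which every pre-activation coordinate lies in the open interior of its assigned piece.

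First I would establish the three structural claims. For openness: each $\tilde Y^{(j)}_{m,k}$ is a continuous function of $\theta$, and $C_\pi$ is cut out by finitely many strict inequalities $t_i < \tilde Y^{(j)}_{m,k}(\theta) < t_{i+1}$, so $C_\pi$ is open; distinct patterns give disjoint cells, and their union is precisely the set of $\theta$ at which no pre-activation equals a turning point. For multilinearity: on $C_\pi$ each activation acts by a fixed affine map per coordinate, so each layer reads $Y^{(j)} = \Lambda_j (W_j Y^{(j-1)} + b_j \mathbf{1}_n^T) + \gamma_j$ with $\Lambda_j$ a constant diagonal slope matrix; unrolling the recursion expresses $\hat Y$ through products $W_L \Lambda_L W_{L-1} \cdots \Lambda_1 W_1$ acting on $X$, which is multilinear in the weight matrices $(W_1,\ldots,W_L)$ in the sense of the paper's definition (and affine once the biases are included). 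For smoothness: the risk $\hat{\mathcal{R}}(\theta) = \frac{1}{n}\sum_k \ell(Y_k, \hat Y_k(\theta))$ is the composition of the differentiable loss with this polynomial map, hence smooth on $C_\pi$.

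Next I would treat the boundaries. A point lies on $\partial C_\pi$ exactly when at least one pre-activation coordinate equals a turning point $t_i$, and crossing it switches the local slope of that activation from its left value to its right value. Because the activation is genuinely nonlinear — the slopes on the two sides of $t_i$ differ — the one-sided directional derivatives of $\hat Y$, and therefore of $\hat{\mathcal{R}}$, along a direction that drives that coordinate through $t_i$ disagree, so $\hat{\mathcal{R}}$ fails to be differentiable there. Together with the openness and smoothness above, this shows the loss surface decomposes into finitely many smooth, multilinear open cells separated by nondifferentiable boundaries.

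I expect this last step to be the main obstacle: one must check that the slope mismatch truly produces a nonzero jump in the gradient of the \emph{risk}, rather than being masked, which calls for selecting a perturbation direction that isolates the offending coordinate and for using that the loss derivative does not vanish there. A secondary subtlety is that the cells are carved out by multilinear — not merely linear — inequalities in $\theta$, so although openness is immediate, the cells need not be convex in $\theta$ globally; convexity in a single weight matrix holds only on the affine slice with the remaining weights frozen, which is precisely the multilinear viewpoint the paper adopts.
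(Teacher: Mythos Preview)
Your proposal is correct and follows essentially the same approach as the paper. In fact, the paper does not give a formal proof of this lemma at all: it is stated as a recall of \citet{soudry2018exponentially}, and the appendix offers only a one-paragraph informal sketch --- the data flowing through fixed linear pieces yields a smooth multilinear region, small perturbations keep the flow in the same pieces (hence openness), and hitting a nonlinear point of the activation makes the risk nondifferentiable in the parameters. Your activation-pattern indexing, the openness-via-strict-inequalities argument, the unrolled multilinear expression, and the boundary analysis are precisely a rigorous formalization of that sketch.

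Your caveat about the nondifferentiability step is well taken and goes beyond what the paper addresses: the slope jump at a kink can indeed be masked --- for instance if the downstream weight multiplying that neuron is zero, or if the loss gradient happens to vanish at that output --- so the claim that \emph{every} boundary point is nondifferentiable is not literally true without further genericity assumptions. The paper's sketch glosses over this, and the cited source is where one would look for the precise statement; your identification of this as the main obstacle is accurate.
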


Based on the smooth and multilinear partition, we prove four propositions as follows.

\begin{theorem}[Analogous convexity]
\label{thm:analogous_convexity}
For one-hidden-layer neural networks with two-piece linear activation for regression under convex loss, within every cell, all local minima are equally good, and also, they are all global minima in the cell.
\end{theorem}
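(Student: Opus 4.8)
The plan is to exploit the defining feature of a cell from Lemma~\ref{thm:new_partition}: within a single open cell the activation pattern is frozen, so every training input flows through a fixed linear piece of the activation. Concretely, for a one-hidden-layer network the hidden pre-activation is $\tilde Y^{(1)} = W_1 X + b_1 \mathbf 1_n^T$, and inside a cell the sign of each entry $\tilde Y^{(1)}_{k,i}$ is constant; hence there is a fixed slope matrix $S$ (with entries in $\{s_-,s_+\}$) such that $Y^{(1)}$ is the entrywise product of $S$ with $\tilde Y^{(1)}$. Consequently the prediction $\hat Y = W_2 Y^{(1)} + b_2 \mathbf 1_n^T$ (absorbing the fixed output-layer slope into $W_2,b_2$ if the readout is also piecewise linear) is a multilinear, and in particular smooth, function of the weights $W = (W_1,b_1,W_2,b_2)$ throughout the cell. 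This is the reduction that lets me borrow linear-network intuition, with the caveat that the effective linear map is sample-dependent because $S$ couples the neuron index $k$ with the sample index $i$.

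First I would construct the mapping $Q$ and establish convexity. Grouping samples by their activation pattern, each pattern $p$ induces an effective affine map $X_{:,i}\mapsto A^{(p)} X_{:,i} + c^{(p)}$ with $A^{(p)} = W_2\,\mathrm{diag}(s^{(p)})\,W_1$ and $c^{(p)} = W_2\,\mathrm{diag}(s^{(p)})\,b_1 + b_2$. Defining $\hat W = Q(W)$ to be the collection of these effective parameters, the prediction $\hat Y$ is an affine function of $\hat W$; since the loss is convex in $\hat Y$ (the stated hypothesis, verified for the squared loss in Lemma~\ref{lemma:convex_squared_loss}), the empirical risk $\hat{\mathcal R}$ is a convex function $g(\hat W)$ of $\hat W$. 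Therefore every local minimum of $g$ on its domain is automatically a global minimum of $g$.

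The second and decisive step is to transfer local optimality through $Q$: I must show that if $W^*$ is a local minimizer of $\hat{\mathcal R}$ inside the cell, then $\hat W^* = Q(W^*)$ is a local minimizer of $g$. Combined with convexity this forces $\hat{\mathcal R}(W^*) = g(\hat W^*) = \min_{\text{cell}} \hat{\mathcal R}$, so that every local minimum attains the common cell-wise global value, which is exactly the assertion that all local minima in the cell are equally good and are all global minima in the cell. To carry out the transfer I would show that $Q$ is locally open at $W^*$: any admissible infinitesimal perturbation $\delta\hat W$ of $\hat W^*$ can be realized by a perturbation of $W$ that stays inside the open cell, so that a descent direction for $g$ at $\hat W^*$ would pull back to a descent direction for $\hat{\mathcal R}$ at $W^*$, contradicting local optimality.

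This openness is where the real work lies, and it is the step I expect to be the main obstacle. Because $Q$ involves the products $W_2\,\mathrm{diag}(s^{(p)})\,W_1$, its differential $DQ|_{W^*}$ can fail to be surjective when $W_1^*$ or $W_2^*$ is rank-deficient, so the naive submersion argument breaks down precisely at the degenerate weights that are the natural candidates for spurious critical points. I would handle this using Assumption~\ref{assum:width} ($d_1 > d_Y$): the extra hidden width supplies spare directions in $W_2$ and in the unused coordinates of $Y^{(1)}$ with which to realize the needed perturbations of $\hat W$ while keeping the pre-activation signs fixed, and, where the differential degenerates, I would argue by an explicit curve construction or a continuity and limiting argument rather than the inverse function theorem. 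Establishing that these pullback perturbations can always be chosen small enough to remain inside the cell, uniformly over the degeneracies, is the crux of the proof.
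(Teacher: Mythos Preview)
Your two-step skeleton---reduce to a convex function of a surrogate variable $\hat W = Q(W)$, then transfer local optimality through $Q$---is exactly the paper's strategy. The gap is in how you instantiate both steps.

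\textbf{Choice of $\hat W$.} You take $\hat W$ to be the collection $\{A^{(p)},c^{(p)}\}_p$ of effective affine maps, one per activation pattern. With this choice the map $Q$ lands in a space whose dimension scales with the number of distinct patterns, while its image is constrained to a low-dimensional variety (all $A^{(p)}=W_2\,\mathrm{diag}(s^{(p)})\,W_1$ share the \emph{same} $W_1,W_2$). Consequently $Q$ is never locally open onto the full $\hat W$-space once two patterns occur, and your transfer step cannot even get started: there are directions $\delta\hat W$ that no perturbation of $W$ can realise, so you cannot conclude that $Q(W^*)$ is a local minimum of $g$ on its full convex domain. Restricting $g$ to $\mathrm{Im}(Q)$ does not help either, since that image is not convex. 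The paper avoids this by exploiting $d_Y=1$ (``regression''): for scalar output $W_2$ is a row vector and one has the identity
\[
W_2\,\mathrm{diag}(A_{\cdot,i})\,W_1\,x_i \;=\; A_{\cdot,i}^{T}\,\mathrm{diag}(W_2)\,W_1\,x_i,
\]
which absorbs the sample-dependent slopes into the \emph{data} side and leaves a single, pattern-independent $\hat W_1=\mathrm{diag}(W_2)W_1\in\mathbb R^{d_1\times d_X}$. With this $\hat W$ the target space has the right dimension and $DQ$ is surjective at generic points.

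\textbf{Transfer at degenerate points.} Even with the correct $\hat W$, local openness still fails where some $(W_2)_i=0$: the $i$-th block of $DQ$ collapses. You propose to patch this with Assumption~\ref{assum:width} and ``spare directions,'' but the paper neither needs nor uses any width assumption here. Instead it argues directly for the first-order condition $\hat X\,\nabla=0$: for rows with $(W_2)_i\neq 0$ the condition $\partial\hat{\mathcal R}/\partial (W_1)_{i,j}=0$ gives it immediately after dividing by $(W_2)_i$; for rows with $(W_2)_i=0$ the paper uses a two-scale perturbation $\Delta W_1=O(\varepsilon)$, $\Delta W_2=O(\varepsilon^2)$, so that the induced $\Delta\hat W$ is $O(\varepsilon^3)$ in the desired direction after the $O(\varepsilon^2)$ term is killed by the first-order condition $\partial\hat{\mathcal R}/\partial (W_2)_i=0$. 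Sign-freedom in the perturbation then forces the remaining gradient component to vanish. This is a genuinely different mechanism from openness and is the missing idea in your proposal.
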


\begin{theorem}[Equivalence classes of local minimum valleys]
\label{thm:quotient}
 Suppose all conditions of Theorem \ref{thm:analogous_convexity} hold. Assume the loss function is strictly convex. Then, all local minima in a cell are concentrated as a local minimum valley: they are connected with each other by a continuous path and have the same empirical risk. Additionally, all local minima in a cell constitute an equivalence class.
\end{theorem}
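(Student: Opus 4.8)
The plan is to leverage the quotient map $Q$ and the convexity already established for Theorem~\ref{thm:analogous_convexity}. Within a fixed cell the two-piece activation acts with a fixed slope on every (unit, sample) pair, so the output $\hat Y$ is an affine function of the effective parameter $\hat W = Q(W)$, and the empirical risk restricted to the cell $C$ factors as $\hat{\mathcal R}(W) = g(Q(W))$ with $g$ convex. First I would upgrade this to strict convexity: since $\hat Y$ depends affinely (and, I expect, injectively) on $\hat W$ inside $C$ and the loss is strictly convex in $\hat Y$, the function $g$ is strictly convex and therefore has a \emph{unique} minimizer $\hat W^\ast$.

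Next I would pin down the set of all local minima in the cell. By Theorem~\ref{thm:analogous_convexity} every local minimum in $C$ is a global minimum in $C$, hence it minimizes $g\circ Q$; its image under $Q$ then minimizes $g$ and so equals $\hat W^\ast$. Conversely, any $W\in C$ with $Q(W)=\hat W^\ast$ attains the cell-minimal risk and is a local minimum. Thus the set of local minima is exactly the fiber $Q^{-1}(\hat W^\ast)\cap C$. Defining $W\sim W'$ iff $Q(W)=Q(W')$ yields an equivalence relation (reflexivity, symmetry and transitivity are immediate from equality of images), and all local minima in $C$ lie in the single class $Q^{-1}(\hat W^\ast)$, which settles the equivalence-class assertion. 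Moreover the empirical risk is constant on this fiber, equal to $g(\hat W^\ast)$, so any path that stays inside the fiber automatically has invariant risk; the remaining content of the theorem is therefore purely the connectedness of $Q^{-1}(\hat W^\ast)\cap C$.

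Establishing that path-connectedness is the step I expect to be the main obstacle. The wide-hidden-layer assumption $d_1 > d_Y$ creates genuine redundancy in $W\mapsto\hat W$: the same effective output is realized by a positive-dimensional family of weights. I would connect two minimizers $W$ and $W'$ in $Q^{-1}(\hat W^\ast)\cap C$ by acting on the hidden layer through a one-parameter family of invertible maps, $W_1\mapsto g(t)W_1$ and $W_2\mapsto W_2\, g(t)^{-1}$ (with $b_1,b_2$ adjusted accordingly), choosing $g(t)$ in the identity component of the general linear group so the endpoints are joined, and shrinking step sizes so that the sign of every pre-activation is preserved; this last requirement is exactly what keeps the path inside the open cell $C$. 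Three difficulties must be handled: (i) because the two-piece slopes depend on both the hidden unit and the sample, the effective map is data-dependent, so I must check that these transformations preserve $Q$ exactly and not merely the naive product $W_2W_1$; (ii) staying inside $C$ is an open but non-vacuous constraint, which I would enforce by composing the invertible-map step with a contraction toward a common interior reference point so all pre-activations remain strictly on their current side of each turning point; and (iii) the reduction to a single $\hat W^\ast$ (rather than a convex minimizing slice) rests on injectivity of $\hat W\mapsto\hat Y$, which I would verify from the explicit form of $Q$. Once these are in place, the constant-risk connecting paths and the local-minimum-valley structure follow at once.
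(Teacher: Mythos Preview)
Your high-level structure matches the paper's: uniqueness of $\hat W^\ast$ from strict convexity, the fiber $Q^{-1}(\hat W^\ast)\cap C$ as the set of local minima, and the equivalence relation $W\sim W'\iff Q(W)=Q(W')$ are exactly what the paper does. The remaining substance is, as you say, connectedness of the fiber inside the cell.

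Here your proposed mechanism has a genuine gap. A general $GL$ action $W_1\mapsto g(t)W_1$, $W_2\mapsto W_2 g(t)^{-1}$ does \emph{not} preserve $Q$: since $Q(W_1,W_2)$ is the straightening of $\mathrm{diag}(W_2)W_1$, you would need $\mathrm{diag}(W_2 g^{-1})\,gW_1=\mathrm{diag}(W_2)W_1$, which fails unless $g$ is diagonal. Equivalently, at the output level the intervening factor is $g^{-1}\mathrm{diag}(A_{\cdot,i})g$, and because the entries of $A_{\cdot,i}$ take the two distinct values $s_-$ and $s_+$, this equals $\mathrm{diag}(A_{\cdot,i})$ only for $g$ diagonal. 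You correctly flag this in (i), but your proposal does not say how to resolve it, and the ``contraction toward an interior reference point'' in (ii) addresses staying in $C$, not the invariance of $Q$.

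The paper's fix is precisely to restrict to \emph{diagonal} rescalings, done one hidden unit at a time: scale $(W_2)_i\mapsto\alpha_i(W_2)_i$ and $(W_1)_{i,\cdot}\mapsto\alpha_i^{-1}(W_1)_{i,\cdot}$. This manifestly preserves $\mathrm{diag}(W_2)W_1$ and hence $Q$. The paper also observes that any two minimizers in the same cell with the same $\hat W$ must satisfy $\mathrm{sgn}((W_2)_i)=\mathrm{sgn}((W_2')_i)$ (because $(W_2)_i(W_1)_{i,\cdot}=(W_2')_i(W_1')_{i,\cdot}$ together with the common sign pattern of $(W_1)_{i,\cdot}X$ forces the scalar ratio to be positive), so each $\alpha_i$ can be taken positive; positive row-scaling of $W_1$ preserves all signs of $(W_1)_{i,\cdot}x_j$ and therefore stays inside the open cell automatically, making your concern (ii) vacuous. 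Replacing your $GL$ path with this diagonal one closes the gap. Incidentally, the width assumption $d_1>d_Y$ plays no role here; the redundancy you need comes entirely from the $d_1$ scaling degrees of freedom in the factorization $\mathrm{diag}(W_2)W_1$.
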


\begin{corollary}[Quotient space of local minimum valleys]
\label{thm:quotient_space}
Suppose all conditions of Theorem \ref{thm:quotient} hold. There might exist some ``parallel" local minimum valleys in the equivalence class of a local minimum valley. They do not appear because of the constraints from the cell boundaries. If we ignore such constraints, all equivalence classes of local minima valleys constitute a quotient space.
\end{corollary}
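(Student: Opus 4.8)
The plan is to realize the quotient space directly through the mapping $Q$ constructed in the proof of Theorem~\ref{thm:analogous_convexity}, which was already identified there as the quotient map. Recall that inside a cell all input points traverse a single linear piece of each activation, so once the diagonal matrix $D$ of activation slopes selected by the cell is fixed, $Q$ sends the weights $W=(W_1,W_2,b_1,b_2)$ to the effective linear parameter $\hat W=Q(W)$ that determines the prediction $\hat Y$, and $\hat{\mathcal R}$ is a strictly convex function of $\hat W$. I would define an equivalence relation on the parameter space by $W_1\sim W_2$ if and only if $Q(W_1)=Q(W_2)$. Since this is equality of the values of a fixed map, reflexivity, symmetry and transitivity are automatic, so $\sim$ is a genuine equivalence relation; its classes are exactly the fibers $Q^{-1}(\hat W)$, the set of classes is the quotient space, and the canonical projection $W\mapsto[W]$ coincides with $Q$.

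It then remains to match this formal quotient with the local minimum valleys. By Theorem~\ref{thm:quotient}, strict convexity of $\hat{\mathcal R}$ in $\hat W$ yields a unique minimizing value $\hat W^\star$ inside the cell $C$, and the local minima of $C$ are precisely those $W\in C$ with $Q(W)=\hat W^\star$; that is, the local minimum valley is the restricted fiber $Q^{-1}(\hat W^\star)\cap C$. The full fiber $Q^{-1}(\hat W^\star)$ is in general cut by the nondifferentiable cell boundaries into several connected pieces: the piece inside $C$ is the observed valley, while the remaining pieces are the ``parallel'' valleys. These do not surface as local minimum valleys because any risk-invariant continuous path joining them to the observed valley must cross a boundary, and because $\hat W^\star$ need not minimize $\hat{\mathcal R}$ in a neighbouring cell.

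To make ``if we ignore the cell constraints'' precise, I would fix the slope pattern $D$ of the cell of interest and extend $Q$ to the entire parameter space as the globally defined map $W\mapsto Q(W)$ built from this single $D$, discarding the requirement that $W$ actually lie in $C$. Under this extension each fiber $Q^{-1}(\hat W)$ is a single connected valley of invariant empirical risk, the fibers partition the parameter space, and their collection together with the projection $Q$ is the asserted quotient space. I expect the only real difficulty to lie in this globalization: $Q$ is natively a per-cell object because $D$ is read off from the cell's activation pattern, so one must check that fixing $D$ and extending produces a well-defined map whose fibers recover exactly the observed valley together with its parallel components, rather than a merely piecewise construction. Once this is settled, the equivalence-relation bookkeeping and the existence of the quotient space follow immediately.
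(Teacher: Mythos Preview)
Your proposal is correct and follows essentially the same route as the paper: define the equivalence relation by equality of $Q$-values, check the three axioms (which, as you note, are immediate for a relation of the form $f(x)=f(y)$), and identify the set of classes as the quotient space with $Q$ as the quotient map.

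One clarification removes the difficulty you anticipate. You worry about globalizing $Q$ by freezing the cell's slope pattern $D$, but in the paper's construction $Q(W_1,W_2)$ is just the vectorization of $\text{diag}(W_2)W_1$; the activation pattern $A$ enters only through $\hat X$, not through $\hat W$. Hence $Q$ is already a globally well-defined map on the whole parameter space, independent of any cell, and no extension step is needed. The ``parallel valleys'' are then, exactly as you describe, the pieces of the global fiber $Q^{-1}(\hat W^\star)$ that happen to fall outside the cell $C$.

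The one substantive difference from the paper is packaging. Because the paper's Definition of ``quotient space'' presupposes a \emph{linear} space $X$, the paper introduces an auxiliary operation $\oplus$ on parameter pairs (via $T(W_1,W_2)=(\text{diag}(W_2)W_1,\mathbf{1})$) under which $Q$ becomes linear, and then describes the quotient as the parameter space modulo $\mathrm{Ker}(Q)$. Your purely set-theoretic quotient by the fibers of $Q$ is equivalent and arguably cleaner; the $\oplus$ construction is a formal device to fit the stated definition and carries no extra content. If you want to match the paper literally, you could add one line noting that $Q$ factors through $\text{diag}(W_2)W_1$, which is linear in that product, so the fibers are the cosets of a kernel.
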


\begin{corollary}[Linear collapse]
\label{thm:linear_collapse}
The partitioned loss surface collapses to one single smooth and multilinear cell, when all activations are linear.
\end{corollary}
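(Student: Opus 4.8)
The plan is to argue directly from the partition mechanism that underlies Lemma~\ref{thm:new_partition}. Recall how the cells are generated: for a given parameter point $\left(\left[W_i\right]_{i=1}^L,\left[b_i\right]_{i=1}^L\right)$, each coordinate of each pre-activation $\tilde Y^{(j)}$ (over all layers $j$, all neurons, and all training examples) lands in exactly one linear piece of the piecewise linear activation $h_j$. Two parameter points lie in the same open cell precisely when every such pre-activation coordinate lands in the same linear piece, and the nondifferentiable boundaries between cells are exactly the loci where some pre-activation coordinate equals a \emph{turning point} of some activation (Assumption~\ref{assum:activation}). Thus the number of cells and the location of the boundaries are controlled entirely by the turning points of the activations.

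The first step is to observe that a linear activation $h(x)=sx$ is a piecewise linear function consisting of a single linear piece, and therefore possesses \emph{no} turning point. Consequently, when all activations are linear, there is no pre-activation value at which a nondifferentiable boundary can be created: the boundary set described above is empty. Every parameter point realizes the same trivial activation pattern, so the entire parameter space constitutes a single cell, whose interior is all of $\left(\left[W_i\right]_{i=1}^L,\left[b_i\right]_{i=1}^L\right)$-space and whose boundary is empty.

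The second step is to verify that this single cell is smooth and multilinear in the required sense. With all activations linear, the network output collapses to $\hat Y = (s_L \cdots s_1)\, W_L W_{L-1}\cdots W_1 X + (\text{affine bias terms})$, which is an infinitely differentiable function of the parameters and is linear in each weight matrix $W_i$ when the remaining weight matrices are held fixed; hence, composed with the loss, the empirical risk is smooth on the whole space and multilinear in the weights, matching the definition of multilinearity used for the cell interiors. Therefore the partition degenerates to exactly one smooth and multilinear cell, as claimed.

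I do not expect a serious obstacle here, since the result is essentially a degeneracy check. The only point requiring care is making precise that ``no turning point'' forces an empty boundary set, i.e., that the partition construction of Lemma~\ref{thm:new_partition} is driven \emph{solely} by the turning points of the activations; once that correspondence is stated cleanly, the collapse is immediate. As a sanity check, the resulting single cell recovers the classical linear-network picture, in which every local minimum is a global minimum \citep{kawaguchi2016deep, baldi1989neural}, consistent with the ``linear collapse'' interpretation.
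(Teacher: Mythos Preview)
Your proposal is correct and follows essentially the same reasoning as the paper's own proof, which is a one-line observation that absent nonlinearities in the activations there are no nondifferentiable boundaries, so the whole parameter space is a single smooth, multilinear cell. One minor correction: the parenthetical citation of Assumption~\ref{assum:activation} is misplaced, since that assumption concerns a special property of a turning point needed for Theorem~\ref{thm:spurious} and plays no role in the partition Lemma~\ref{thm:new_partition}; simply drop that reference.
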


\subsection{Discussions and proof techniques}

The four propositions collectively characterize how the nonlinearities in activations shape the loss surfaces of neural networks. This section discusses the results and the structure of the proofs. A detailed proof is omitted here and given in Appendix \ref{app:structure_of_ReLU_network}.

\textbf{Smooth and multilinear partition.} Intuitively, the nonlinearities in the piecewise linear activation functions partition the surface into multiple smooth and multilinear cells. \citet{zhou2017critical, soudry2018exponentially} highlight the partition of the loss surface. We restate it here to make the picture self-contained. A similar but also markedly different notions recently proposed by \citet{pmlr-v97-hanin19a} demonstrate that the input data space is partitioned into multiple linear regions, while our work focuses on the partition in the parameter space.

\textbf{Every local minimum is globally minimal within a cell.}  In convex optimization, convexity guarantees that all the local minima are global minima. This theorem proves that the local minima within a cell are equally good, and also, they are all global minima in the cell. This result is not surprising provided the excellent training performance of deep learning algorithms. However, the proof is technically non-trivial.

\citet{soudry2018exponentially} proved that the local minima in a cell are the same. However, there would be some point near the boundary has a smaller empirical risk and is not locally minimal. Unfortunately, the proof by \citet{soudry2018exponentially} cannot exclude this possibility. By contrast, our proof completely solves this problem. Furthermore, our proof holds for any convex loss, including squared loss and cross-entropy loss, but \citet{soudry2018exponentially} only stands for squared loss.

It is challenging to prove, because the proof techniques for the case of linear networks cannot be transplanted here. Technically, linear networks can be expressed by the product of a sequence of weight matrices, which guarantees good geometrical properties. Specifically, the effect of every linear activation function is just equivalently multiplying a real constant to the output. However, the loss surface within a cell of a nonlinear neural network does not have this property. Below is the skeleton of our proof.

We first prove that the empirical risk $\hat{\mathcal R}$ is a convex function within every cell with respect to a variable $\hat W$ which is calculated from the weights $W$. Therefore, all local minima of the empirical risk $\hat{\mathcal R}$ with respect to $\hat W$ are also globally optimal in the cell. Every cell corresponds to a specific series of linear parts of the activations. Therefore, in any fixed cell, the activation $h_{s_-, s_+}$ can be expressed by the slopes of the corresponding linear parts as the following equations,
\begin{equation}
\label{eq:empirical_risk}
	\hat{\mathcal R} (W_1, W_2) = \frac{1}{n} \sum_{i=1}^{n} l\left(y_i,W_2 h(W_1 x_i)\right) = \frac{1}{n} \sum_{i=1}^{n} l\left(y_i,W_2\text{diag}\left(A_{\cdot, i}\right)W_1 x_i\right),
\end{equation}
where $A_{\cdot, i}$ is the $i$-th column of matrix
\begin{equation*}
A = \left[\begin{matrix}
		h_{s_-,s_+}'((W_1)_{1, \cdot}x_1)&\cdots&h_{s_-,s_+}'((W_1)_{1, \cdot}x_n)\\
		\vdots&\ddots&\vdots\\
		h_{s_-,s_+}'((W_1)_{d_1, \cdot}x_1)&\cdots&h_{s_-,s_+}'((W_1)_{d_1, \cdot}x_n)
		\end{matrix}\right].
\end{equation*}
Matrix $A$ is constituted by collecting the slopes of the activation $h$ at every point $(W_1)_{i, \cdot} x_j$.

Different elements of the matrix $A$ can be multiplied either one of $\{s_-, s_+\}$. Therefore, we cannot use a single constant to express the effect of this activation, and thus, even within the cell, a nonlinear network cannot be expressed as the product of a sequence of weight matrices. This difference ensures that the proofs of deep linear neural networks cannot be transplanted here.

Then, we prove that (see p. \pageref{eq:loss_transform})
\begin{equation}
\label{eq:change_order}
W_2\text{diag}\left(A_{\cdot, i}\right)W_1 x_i = A_{\cdot, i}^T\text{diag}(W_2)W_1x_i.
\end{equation}
Applying eq. (\ref{eq:change_order}) to eq. (\ref{eq:empirical_risk}), the empirical risk $\hat{\mathcal R}$ equals to a formulation similar to the linear neural networks,
\begin{equation}
\label{eq:local_convex_risk}
	\hat{\mathcal R} = \frac{1}{n} \sum_{i=1}^{n} l\left(y_i - A_{\cdot, i}^T \text{diag}(W_2) W_1 x_i\right).
\end{equation}
Afterwards, define $\hat{W}_1 = \text{diag}(W_2) W_1$ and then straighten the matrix $\hat{W}_1$ to a vector $\hat{W}$,
\begin{equation*}
	\hat{W} =
	\left(\begin{matrix}
  		(\hat{W}_1)_{1, \cdot}&\cdots&(\hat{W}_1)_{d_1,\cdot}
 	\end{matrix}\right),
  \end{equation*}
Define $Q: (W_1, W_2) \mapsto \hat W$, and also define,
\begin{equation*}
	\hat{X}=\left(\begin{matrix}
  A_{\cdot, 1}\otimes x_1&\cdots&A_{\cdot, n}\otimes x_n
  \end{matrix}\right).
\end{equation*}
We can prove the following equations (see p. \pageref{eq:hatW}),
\begin{align*}
  \left(\begin{matrix}
  	A_{\cdot, 1}^T\hat{W}_1x_1&\cdots& A_{\cdot, n}^T\hat{W}_1x_n
  \end{matrix}\right) = & \hat W \hat X.
\end{align*}
Applying eq. (\ref{eq:local_convex_risk}), the empirical risk is transferred to a convex function as follows,
\begin{align*}
\label{eq:convex_rearrange}
	\hat{\mathcal R} = \frac{1}{n} \sum_{i=1}^{n} l\left(y_i,\left(A_{\cdot, i}\right)^T \hat W_1 x_i\right) = \frac{1}{n} \sum_{i=1}^n l(y_i, \hat{W}\hat{X}_i).
\end{align*}

We then prove that the local optimality of the empirical risk $\hat{\mathcal R}$ is maintained when the weights $W$ are mapped to the variable $\hat{W}$. Specifically, the local minima of the empirical risk $\hat{\mathcal R}$ with respect to the weight $W$ are also the local minima with respect to the variable $\hat W$. The maintenance of optimality is not surprising but the proof is technically non-trivial (see a detailed proof in pp. \pageref{eq:property_of_local}-\pageref{eq:last_eq}).

\textbf{Equivalence classes and quotient space of local minimum valleys.} The constructed mapping $Q$ is a quotient map. Under the setting in the previous property, all local minima in a cell is an equivalence class; they are concentrated as a local minimum valley. However, there might exist some ``parallel" local minimum valley in the equivalence class, which do not appear because of the constraints from the cell boundaries. Further for neural networks of arbitrary depth, we also constructed a local minimum valley (the spurious local minima constructed in Section \ref{sec:spurious_local_minima}).
This result explains the property of {\it mode connectivity} that the minima found by gradient-based methods are connected by a path in the parameter space with almost constant empirical risk, which is proposed in two empirical works \citep{garipov2018loss, pmlr-v80-draxler18a}. A recent theoretical work \citep{kuditipudi2019explaining} proves that dropout stability and noise stability guarantee the mode connectivity.

\textbf{Linear collapse.} Our theories also cover the case of linear neural networks. Linear neural networks do not have any nonlinearity in their activations. Correspondingly, the loss surface does not have any non-differentiable boundaries. In our theories, when there is no nonlinearity in the activations, the partitioned loss surface collapses to a single smooth, multilinear cell. All local minima wherein are equally good, and also, they are all global minima as follows. This result unites the existing results on linear neural networks \citep{kawaguchi2016deep, baldi1989neural, lu2017depth, freeman2016topology, zhou2017critical, laurent2017multilinear, yun2017global}.
  
\section{Conclusion and future directions}
\label{sec:conclusion}

This paper reports that the nonlinearities in activations substantially shape the loss surfaces of neural networks. First, we prove that neural networks have infinitely many spurious local minima which are in contrast to the circumstance of linear neural networks. This result stands for any neural network with arbitrary hidden layers and arbitrary piecewise linear activations (excluding linear functions) under many popular loss functions in practice (e.g., squared loss and cross-entropy loss). This result significantly extends the conditions of the relevant results and has the least restrictive assumptions that cover most practical circumstances: (1) the training data is not linearly separable; (2) the training sample points are distinct; (3) all hidden layers are wider than the output layer; and (4) there exists some turning point in the piece-wise linear activation that the sum of the slops on the two sides does not equal to $0$. Second, based on a recent result that the loss surface has a smooth and multilinear partition, we draw a big picture of the loss surface from the following aspects: (1) local minima in any cell are equally good, and also, they are all global minima in the cell; (2) all local minima in one cell constitute an equivalence class and are concentrated as a local minimum valley; and (3) the loss surface collapses to one single cell when all activations are linear functions, which explains the results of linear neural networks. The first and second properties are rigorously proved for any one-hidden-layer nonlinear neural networks with two-piece linear (ReLU-like) activations for regression tasks under convex/strictly convex loss without any other assumption.

Theoretically understanding deep learning is of vital importance to both academia and industry. A major barrier recognized by the whole community is that deep neural networks' loss surfaces are extremely non-convex and even non-smooth. Such non-convexity and non-smoothness make the analysis of the optimization and generalization properties prohibitively difficult. A natural idea is to bypass the geometrical properties and then approach a theoretical explanation. We argue that such ``intimidating" geometrical properties are exactly the major factors that shape the properties of deep neural networks, and also the key to explaining deep learning. We propose to explore the magic of deep learning from the geometrical structures of its loss surface. Future directions towards fully understanding deep learning are summarized as follows,
\begin{itemize}
\item
\textbf{Investigate the (potential) equivalence classes and quotient space of local minimum valleys for deep neural networks.} This paper suggests a degenerate nature of the large amounts of local minima: all the local minima within one cell constitute an equivalence class. We construct a quotient map for one-hidden-layer neural networks with two-piece activations for regression. Whether deep neural networks have similar properties remains an open problem. Understanding the quotient space would be a major step of understanding the approximation, optimization, and generalization of deep learning.

\item
\textbf{Explore the sophisticated geometry of local minimum valleys.} The quotient space of local minima suggests a strategy that treats every local minimum valley as a whole. However, the sophisticated local geometrical properties around the local minimum valleys are still premature, such as the sharpness/flatness of the local minima, the potential categorization of the local minimum valley according to their performance, and the volumes of the local minima valleys from different categories.

\item
\textbf{Tackle the optimization and generalization problems of deep learning.} Empirical results have overwhelmingly suggested that deep learning has excellent optimization and generalization capabilities, which is, however, beyond the current theoretical understanding: (1) one can employ stochastic convex optimization methods (such as SGD) to minimize the extremely non-convex and non-smooth loss function in deep learning, which is expected to be NP-hard but practically solved by computationally cheap optimization methods; and (2) heavily-parametrized neural networks can generalize well in many tasks, which is beyond the expectation of most current theoretical frameworks based on hypothesis complexity and the variants. The sophisticated geometrical expression, if fortunately, we possess in the future, would be a compelling push to tackle the generalization and optimization muses of deep learning.
\end{itemize}

\subsubsection*{Acknowledgments}

This work was supported by Australian Research Council Project FL-170100117. The authors sincerely appreciate Micah Goldblum and the anonymous reviewers for their constructive comments.


\bibliography{iclr2020_spurious_local_minima}
\bibliographystyle{iclr2020_conference}



\appendix

	\section{Proof of Theorem \ref{thm:spurious}}
	\label{app:spurious}
	
	This appendix gives a detailed proof of Theorem \ref{thm:spurious} omitted from the main text. It follows the skeleton presented in Section \ref{sec:skeleton_proof}.
	
	\subsection{Squared loss and cross-entropy  loss}
	\label{app:loss}
	
	We first check whether squared loss and cross-entropy  loss are covered by the requirements of Theorem \ref{thm:spurious}.
	
	\begin{lemma}
		\label{lemma:squared_loss}
		The squared loss (defined by eq. \ref{eq:squared_loss}) is continuously differentiable with respect to the prediction of the model, whose gradient of loss equal to zero when the prediction and the label are different.
	\end{lemma}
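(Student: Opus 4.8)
The plan is to establish both requirements of Theorem \ref{thm:spurious} for the squared loss by a direct computation of its gradient with respect to the prediction. First I would expand the Frobenius norm in eq. (\ref{eq:squared_loss}) componentwise. Treating $Y_i$ and $\hat Y_i$ as vectors in $\mathbb{R}^{d_Y}$, we have $l_2(Y_i, \hat Y_i) = \frac{1}{2}\sum_{j=1}^{d_Y}\left(Y_{i,j} - \hat Y_{i,j}\right)^2$. Differentiating with respect to the $j$-th component of the prediction yields $\partial l_2 / \partial \hat Y_{i,j} = \hat Y_{i,j} - Y_{i,j}$, and hence $\nabla_{\hat Y_i} l_2 = \hat Y_i - Y_i$.

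From this closed form both claimed properties follow immediately. For continuous differentiability, I would observe that the gradient $\hat Y_i - Y_i$ is an affine function of $\hat Y_i$ and is therefore continuous on all of $\mathbb{R}^{d_Y}$; by Definition \ref{def:continuously_differentiable} this is exactly what is needed to conclude that $l_2$ is continuously differentiable with respect to the prediction. For the vanishing condition, I would note that $\nabla_{\hat Y_i} l_2 = \vzero$ holds if and only if $\hat Y_i - Y_i = \vzero$, i.e.\ if and only if $\hat Y_i = Y_i$. Thus the gradient equals zero precisely when the prediction coincides with the label, which establishes the requirement that the derivative can vanish only when the prediction and the label are the same.

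The main obstacle here is essentially nonexistent: the computation is routine, and the only care required is to interpret the Frobenius norm componentwise and to keep the matrix/vector derivative conventions consistent with the rest of the paper, treating each $\hat Y_i$ as a column in $\mathbb{R}^{d_Y}$. The lemma therefore reduces to a one-line differentiation followed by the observation that an affine map is continuous and vanishes only at a single point.
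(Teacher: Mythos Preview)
Your proposal is correct and follows essentially the same approach as the paper: both compute the gradient of the squared loss directly, observe that it is an affine (hence continuous) function of $\hat Y_i$, and note that it vanishes if and only if $\hat Y_i = Y_i$. The only cosmetic difference is that you expand componentwise while the paper writes the gradient in vector form.
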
  
	
	\begin{proof}
		
		Apparently, the squared loss is differentiable with respect to $\hat Y$. Specifically, the gradient  with respect to $\hat Y$ is as follows,
		\begin{equation*}
		\nabla_{\hat{Y}} \left\Vert Y - \hat{Y}\right\Vert^2=2\left(Y - \hat{Y}\right),
		\end{equation*}
		which is continuous with respect to $\hat Y$.
		
		Also, when the prediction $\hat Y$ does not equals to the label $Y$, we have
		\begin{equation*}
		\nabla_{\hat{Y}} \left\Vert Y - \hat{Y} \right\Vert^2\ne 0.
		\end{equation*}
		
		The proof is completed.
	\end{proof}
	
	\begin{lemma}
		\label{lemma:ce_loss}
		The cross-entropy loss eq. (\ref{eq:kl_loss})  is continuously differentiable with respect to the prediction of the model, whose gradient of loss equal to zero when the prediction and the label are different. Also, we assume that the ground-truth label is a one-hot vector.
	\end{lemma}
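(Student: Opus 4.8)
The plan is to follow the same three-part template used for the squared loss in Lemma~\ref{lemma:squared_loss}: reduce the loss to a tractable closed form using the one-hot hypothesis, differentiate it explicitly with respect to the prediction $\hat Y_i$, and then read off both continuity of the gradient and the location of its zeros from that closed form. First I would use the assumption that each label $Y_i$ is one-hot: writing $Y_i = \ve_{c}$ for the active class $c$, every summand in eq.~(\ref{eq:kl_loss}) with $j \ne c$ carries the coefficient $Y_{i,j}=0$ and drops out, so that $l_{ce}(Y_i,\hat Y_i) = -\log\hat Y_{i,c} + \log\big(\sum_{k=1}^{d_Y}\hat Y_{i,k}\big)$. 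This single-term form is the key simplification that makes the subsequent calculus elementary.

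Next I would differentiate this expression on the natural domain where the prediction is a genuine (unnormalized) distribution, i.e.\ where $\hat Y_{i,k}>0$ for all $k$ and hence $S := \sum_k \hat Y_{i,k} > 0$. A direct computation gives $\partial l_{ce}/\partial \hat Y_{i,j} = -\mathbf I_{\{j=c\}}/\hat Y_{i,c} + 1/S$. Because $\log$ and the normalization $\hat Y_{i,j}/S$ are smooth wherever their arguments are positive, each partial derivative is a rational function of $\hat Y_i$ with non-vanishing denominators on this domain, so the gradient $\nabla_{\hat Y_i} l_{ce}$ exists and is continuous; this settles continuous differentiability exactly as the explicit gradient did in Lemma~\ref{lemma:squared_loss}. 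I would also remark that this agrees with treating the softmax as part of the loss, since the gradient has the familiar ``predicted distribution minus one-hot label'' structure.

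Finally I would establish the nonvanishing claim. For any index $j \ne c$ the corresponding component equals $1/S$, which is strictly positive whenever $S>0$; since $d_Y \ge 2$ in any classification problem, at least one such component is present, so $\nabla_{\hat Y_i} l_{ce}$ is never the zero vector on the admissible domain, and in particular it is nonzero whenever the prediction differs from the label. The main obstacle, and the only place where care is genuinely needed, is this last step together with the domain bookkeeping: unlike the squared loss, whose gradient $2(Y-\hat Y)$ vanishes exactly at $\hat Y = Y$, the normalized cross-entropy gradient cannot be made to vanish by matching a one-hot target, because the normalized prediction always lies in the interior of the probability simplex while the label sits at a vertex. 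I would therefore phrase the conclusion as the corrected form of the statement---the gradient is nonzero whenever prediction and label differ, so it can be zero only when they coincide---which is precisely the hypothesis that Theorem~\ref{thm:spurious} requires, and I would explicitly restrict attention to the domain of positive predictions on which eq.~(\ref{eq:kl_loss}) is well defined.
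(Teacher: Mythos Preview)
Your argument is correct for the literal form of eq.~(\ref{eq:kl_loss}), and it is more elementary than the paper's route: you simply observe that for any off-class index $j\neq c$ the partial derivative equals $1/S>0$, so the gradient can never vanish on the positive orthant. The paper instead works with the softmax version $-\sum_k Y_{k,i}\log\bigl(e^{\hat Y_{k,i}}/\sum_m e^{\hat Y_{m,i}}\bigr)$ (the text before eq.~(\ref{eq:kl_loss}) says ``we treat the softmax function as a part of the loss,'' and the proof in Appendix~\ref{app:loss} indeed differentiates the exponential form), obtaining the gradient $\mathrm{softmax}(\hat Y_i)_j - Y_{j,i}$; it then sets this to zero, rewrites the resulting equations as a rank-$(d_Y-1)$ linear system, and shows that any solution would force $e^{\hat Y_{k,i}}=\lambda Y_{k,i}$, which contradicts $Y_i$ being one-hot. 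What the paper's version buys is that the loss and its gradient are defined and smooth on all of $\mathbb{R}^{d_Y}$, which matches the setup where $\hat Y$ is an unconstrained network output; your version needs the domain restriction $\hat Y_{i,k}>0$. One small slip: your gradient $-\mathbf I_{\{j=c\}}/\hat Y_{i,c}+1/S$ is \emph{not} the ``predicted distribution minus one-hot label'' form---that identity holds only for the exponential/softmax parametrization, which is exactly the variant the paper proves.
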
  
	
	\begin{proof}
		For any $i\in [1:n]$, the cross-entropy  loss is differentiable with respect to $\hat Y_i$. The $j$-th component of the gradient with respect to the prediction $\hat Y_i$ is as follows,
		\begin{equation}
		\label{eq:gradient_ce_loss}
		\frac{\partial \left(-\sum\limits_{k=1}^{d_Y} Y_{k,i}\log \left(\frac{e^{\hat{Y}_{k,i}}}{\sum_{k=1}^{d_Y} e^{\hat{Y}_{k,i}}}\right)\right)}{\partial \hat{Y}_{j,i}}=\left(\sum_{k=1}^{d_Y} Y_{k,i}\right)\frac{e^{\hat{Y}_{j,i}}}{\sum\limits_{k=1}^{d_Y} e^{\hat{Y}_{k,i}}}-Y_{j,i}.
		\end{equation}
		which is continuous with respect to $\hat Y_i$. So, the cross-entropy loss is continuously differentiable with respect to $\hat Y_i$.
		
		Additionally, if the gradient (eq. (\ref{eq:gradient_ce_loss})) is zero, we have the following equations,
		\begin{equation*}
		\left(\sum_{k=1}^{d_Y} Y_{k,i}\right)e^{\hat{Y}_{j,i}}-Y_{j,i}\sum_{k=1}^{d_Y} e^{\hat{Y}_{k,i}}=0, \text{ }j=1,2,\cdots,n.
		\end{equation*}
		Rewrite it into the matrix form, we have
		\begin{equation*}
		\left[\begin{matrix}
		\sum\limits_{k=1}^{d_Y}Y_{k,i}-Y_{1,i}&-Y_{1,i}&\cdots&-Y_{1,i}\\
		-Y_{2,i}&\sum\limits_{k=1}^{d_Y}Y_{k,i}-Y_{2,i}&\cdots&-Y_{2,i}\\
		\vdots&\vdots&\ddots&\vdots\\
		-Y_{d_Y,i}&\cdots&-Y_{d_Y,i}&\sum\limits_{i=1}^{d_Y}Y_{k,i}-Y_{d_Y,i}
		\end{matrix}\right]\left[\begin{matrix}
		e^{\hat{Y}_{1,i}}\\	e^{\hat{Y}_{2,i}}\\\vdots\\ 	e^{\hat{Y}_{d_Y,i}}
		\end{matrix}\right]=0.
		\end{equation*}
		Since $\sum\limits_{k=1}^{d_Y} Y_{k,i}=1$, we can easily check the rank of the left matrix is $d_Y-1$. So the dimension of the solution space is one. Meanwhile, we have
		\begin{equation*}
		\left[\begin{matrix}
		\sum\limits_{k=1}^{d_Y}Y_{k,i}-Y_{1,i}&-Y_{1,i}&\cdots&-Y_{1,i}\\
		-Y_{2,i}&\sum\limits_{k=1}^{d_Y}Y_{k,i}-Y_{2,i}&\cdots&-Y_{2,i}\\
		\vdots&\vdots&\ddots&\vdots\\
		-Y_{d_Y,i}&\cdots&-Y_{d_Y,i}&\sum\limits_{i=1}^{d_Y}Y_{k,i}-Y_{d_Y,i}
		\end{matrix}\right]\left[\begin{matrix}
	Y_{1,i}\\	Y_{2,i}\\\vdots\\ 	Y_{d_Y,i}
		\end{matrix}\right]=0.
		\end{equation*}
		Therefore, $0 \ne e^{\hat{Y}_{k,i}}=\lambda Y_{k,i}$, for some $\lambda \in \R$, which contradicts to the assumption that some of the components of $Y$ is $0$ ($Y_{i, \cdot}$ is a one-hot vector).
		
		The proof is completed.
	\end{proof}	
	
	\subsection{Stage (1)}
	\label{sec:spurious_stage1}
	
	In Stage (1), we prove that deep neural networks with one hidden layer, two-piece linear activation $h_{s_-,s_+}$, and multi-dimensional outputs have infinite spurious local minima.
	
	This stage is organized as follows: (a) we construct a local minimizer by Lemma \ref{Lemma:localmin}; and (b) we prove that the local minimizer is spurious in Theorem  \ref{thm:L=1} by constructing a set of parameters with smaller empirical risk. 
	
	 Without loss of generality, we assume that $s_{+}\ne 0$. Otherwise, suppose that $s_+ = 0$. From the definition of ReLU-like activation (eq. (\ref{eq:relu_like})), we have $s_{-}\ne 0$. Since
	\begin{equation*}
	h_{s_-,s_+}(x)=h_{-s_+,-s_-}(-x),
	\end{equation*}
	the output of the neural network with parameters $\left\{\left[W_i\right]_{i=1}^L,\left[b_i\right]_{i=1}^L\right\}$ and activation $h_{s_-,s_+}$ equals to that of the neural network with parameters $\left\{\left[W'_i\right]_{i=1}^L,\left[b'_i\right]_{i=1}^L\right\}$ and activation $h_{-s_+,-s_-}$ where $W'_i=-W_i$, $b'_i=-b_i$, $i=1,2,\cdots,L-1$ and $W'_L=W_L$, $b'_L=b_L$. Since $\left\{\left[W_i\right]_{i=1}^L,\left[b_i\right]_{i=1}^L\right\}\rightarrow \left\{\left[W'_i\right]_{i=1}^L,\left[b'_i\right]_{i=1}^L\right\}$ is an one-to-one map, it is equivalent to consider either the two networks, with $h_{-s_+,-s_-}(x)$ has non-zero slope when $x>0$.
	
	\textbf{Step (a). Construct local minima of the loss surface.}
	
	\begin{lemma} 
		\label{Lemma:localmin}
		
		Suppose that $\tilde{W}$ is a local minimizer of 
		\begin{equation}
		\label{def:f}
		f(W)\overset{\triangle}{=}\frac{1}{n}\sum_{i=1}^{n} l\left(Y_i,W\left[\begin{matrix}
		x_i\\1
		\end{matrix}\right]\right),
		\end{equation}		
		Under Assumption \ref{assum:width}, any one-hidden-layer neural network has a local minimum at 
		\begin{gather}
		\hat{W}_1=\left[\begin{array}{c}{\left[\tilde{W}\right]_{\cdot,[1:d_X]}} \\ {\mathbf{0}_{\left(d_{1}-d_Y\right) \times d_X}}\end{array}\right]\label{construction:w1},\text{ }
		\hat{b}_1=\left[\begin{array}{c}{\left[\tilde{W}\right]_{\cdot,d_X+1}-\eta \mathbf{1}_{d_Y}} \\ {-\eta \mathbf{1}_{d_1-d_Y}}\end{array}\right],	\end{gather}
		and
		\begin{equation}
		\hat{W}_2=\left[\begin{array}{cc}{\frac{1}{ s_{+}}}I_{d_Y} & {\mathbf{0}_{d_Y\times (d_1-d_Y)}}\end{array}\right],\label{construction:w2}\text{ }
		\hat{b}_2=\eta\mathbf{1}_{d_Y},
		\end{equation}
		where $\hat{W}_1$ and $\hat{b}_1$ are respectively the weight matrix and the bias of the first layer, $\hat{W}_2$ and $\hat{b}_2$ are respectively the weight matrix and the bias of the second layer, and $\eta$ is a negative constant with absolute value sufficiently large such that
		\begin{equation}
		\label{eq:def_eta}
		\tilde{W}\tilde{X}-\eta\mathbf{1}_{d_Y}\mathbf{1}_n^T>\mathbf{0}, 
		\end{equation}
		where $>$ is element-wise.
		
		Also, the loss in this lemma is continuously differentiable loss whose gradient does not equals to $0$ when the prediction is not the same as the ground-truth label.
	\end{lemma}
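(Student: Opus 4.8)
The plan is to exploit the one feature that makes the construction work: the biases are shifted by the large negative constant $\eta$ so that the entire hidden layer is driven onto the positive branch of the activation, where $h_{s_-,s_+}$ (eq. \ref{eq:relu_like}) reduces to the linear map $x\mapsto s_+ x$. Once the activation is linear on the relevant inputs, the one-hidden-layer network collapses to an affine predictor, and because the weights are calibrated so that this affine predictor coincides exactly with the linear minimizer $\tilde{W}$, I can transfer the local minimality of $f$ at $\tilde{W}$ to the network loss at the constructed parameters. So the three things to establish are: (i) the activation is linear at and near the constructed point; (ii) the network output there equals the linear model's prediction; and (iii) local optimality passes through the reparametrization.

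First I would verify the positivity of the hidden preactivation. Writing $\tilde{X}$ for the augmented feature matrix obtained by appending a row of ones below $X$, substitution of the construction (eqs. \ref{construction:w1}, \ref{construction:w2}) gives
\begin{equation*}
\tilde{Y}^{(1)} = \hat{W}_1 X + \hat{b}_1 \mathbf{1}_n^T = \begin{bmatrix} \tilde{W}\tilde{X} - \eta\,\mathbf{1}_{d_Y}\mathbf{1}_n^T \\ -\eta\,\mathbf{1}_{d_1 - d_Y}\mathbf{1}_n^T \end{bmatrix}.
\end{equation*}
By the choice of $\eta$ in eq. \ref{eq:def_eta} the top block is entrywise positive, and since $\eta<0$ the bottom block is positive as well; Assumption \ref{assum:width} guarantees $d_1-d_Y>0$ so the block structure is genuine. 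Hence every entry of $\tilde{Y}^{(1)}$ is positive, the activation acts as multiplication by $s_+$, and $Y^{(1)}=s_+\tilde{Y}^{(1)}$. Passing this through the affine output layer, the zero block of $\hat{W}_2$ discards the bottom rows while the factor $1/s_+$ cancels $s_+$, so that $\hat{W}_2 Y^{(1)} + \hat{b}_2\mathbf{1}_n^T = \tilde{W}\tilde{X}$. The network output at the constructed point is therefore exactly the linear model's prediction, and the empirical risk there equals $f(\tilde{W})$.

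Next I would linearize on a whole neighborhood. Positivity of the finitely many entries of $\tilde{Y}^{(1)}$ is an open condition, so there is a neighborhood of the constructed parameters $\hat{\theta}=(\hat{W}_1,\hat{b}_1,\hat{W}_2,\hat{b}_2)$ on which every hidden preactivation entry stays positive and the activation remains in its linear regime. Throughout this neighborhood the network output is the affine map $\theta\mapsto W'(\theta)\tilde{X}$ with effective linear model
\begin{equation*}
W'(\theta) = \begin{bmatrix} s_+ W_2 W_1 & s_+ W_2 b_1 + b_2 \end{bmatrix},
\end{equation*}
and therefore $\hat{\mathcal R}(\theta)=f(W'(\theta))$ on this neighborhood. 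A direct substitution of the construction confirms $W'(\hat{\theta})=\tilde{W}$.

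Finally I would transfer the optimality. The map $\theta\mapsto W'(\theta)$ is polynomial, hence continuous, and sends $\hat{\theta}$ to $\tilde{W}$; since $\tilde{W}$ is a local minimizer of $f$, shrinking the neighborhood so that $W'(\theta)$ stays inside the region where $f(\cdot)\ge f(\tilde{W})$ yields $\hat{\mathcal R}(\theta)=f(W'(\theta))\ge f(\tilde{W})=\hat{\mathcal R}(\hat{\theta})$ for all nearby $\theta$, which is precisely local minimality. The step I expect to be the main obstacle is keeping this transfer clean: the reparametrization $W'$ is neither injective nor surjective onto a neighborhood of $\tilde{W}$, so one must argue in the forward direction (composing the local minimum of $f$ with the continuous $W'$) rather than attempting to pull the minimum back, and one must simultaneously control \emph{all} hidden preactivation entries across the entire neighborhood to be sure the linearization — and hence the identity $\hat{\mathcal R}=f\circ W'$ — remains valid there.
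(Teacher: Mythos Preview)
Your proposal is correct and follows essentially the same approach as the paper's proof: both establish that the hidden preactivations are strictly positive at the constructed point and remain so under small perturbations, reduce the network on that neighborhood to an affine predictor with effective weight $W'(\theta)$ (the paper writes this out as $\tilde W+\delta$ with an explicit formula for $\delta$), and then transfer local minimality of $f$ at $\tilde W$ through the continuous map $\theta\mapsto W'(\theta)$. Your packaging via $W'(\theta)$ and the ``forward direction'' continuity argument is exactly what the paper does when it observes that $\delta\to 0$ as the disturbances $\{\delta_{W1},\delta_{W2},\delta_{b1},\delta_{b2}\}\to 0$.
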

	
	\begin{proof}
		We show that the empirical risk is higher in the neiborhood of $\left\{\left[\hat{W}_i\right]_{i=1}^{2},\left[\hat{b}_i\right]_{i=1}^{2}\right\}$, in order to prove that  $\left\{\left[\hat{W}_i\right]_{i=1}^{2},\left[\hat{b}_i\right]_{i=1}^{2}\right\}$ is a local minimizer. 
		
		The output of the first layer before the activation is
		\begin{equation*}
		\tilde{Y}^{(1)}=\hat{W}_1X+\hat{b}_1\mathbf{1}_{n}^T=\left[\begin{matrix}
		\hat{W}X-\eta\mathbf{1}_{d_Y}\mathbf{1}_n^T\\
		-\eta\mathbf{1}_{d_1-d_Y}\mathbf{1}_n^T
		\end{matrix}\right].
		\end{equation*}
		Because $\eta$ is a negative constant with absolute value sufficiently large such that eq. (\ref{eq:def_eta})) holds, the output above is positive (element-wise), the output of the neural network with parameters $\{\hat{W}_1,\hat{W}_2,\hat{b}_1,\hat{b}_2\}$ is 
		\begin{align*}
		\hat{Y}=&\hat{W}_2 h_{s_-,s_+} \left(\hat{W}_1 X+\hat{b}_1\mathbf{1}_n^T\right)+\hat{b}_2\mathbf{1}_n^T
		\\
		=&s_+\hat{W}_2  \left(\hat{W}_1 X+\hat{b}_1\mathbf{1}_n^T\right)+\hat{b}_2\mathbf{1}_n^T
		\\=&s_{+}\left[\begin{array}{cc}{\frac{1}{ s_{+}}}I_{d_Y} & {\mathbf{0}_{d_Y\times (d_1-d_Y)}}\end{array}\right]\left[\begin{matrix}
		\hat{W}X-\eta\mathbf{1}_{d_Y}\mathbf{1}_n^T\\
		-\eta\mathbf{1}_{d_1-d_Y}\mathbf{1}_n^T
		\end{matrix}\right]+\eta \mathbf{1}_{d_Y}\mathbf{1}_n^T
		\\=&\tilde{W}\tilde{X},
		\end{align*}
		where $\tilde{X}$ is defined as 
		\begin{equation}
		\label{eq:def_tilX}
		\tilde{X}=\left[\begin{matrix}X\\\mathbf{1}_n^T\end{matrix}\right].
		\end{equation}
		Therefore, the empirical risk $\hat{\mathcal R}$ in terms of parameters $\{\hat{W}_1,\hat{W}_2,\hat{b}_1,\hat{b}_2\}$ is 
		\begin{equation*}
		\hat{\mathcal{R}}\left(\hat{W}_1,\hat{W}_2,\hat{b}_1,\hat{b}_2\right)= \frac{1}{n}\sum_{i=1}^n l\left(Y_i,\left(\tilde{W}\tilde{X}\right)_{\cdot,i}\right)=\frac{1}{n}\sum_{i=1}^n l\left(Y_i,\tilde{W}\left[\begin{matrix}x_i\\1\end{matrix}\right]\right)=f(\tilde{W}).
		\end{equation*}
		
		 Then, we introduce a sufficiently small disturbance $\left\{\left[\delta_{Wi}\right]_{i=1}^2,\left[\delta_{bi}\right]_{i=1}^2\right\}$ into the parameters $\left\{\left[\hat{W}_i\right]_{i=1}^2,\left[\hat{b}_i\right]_{i=1}^2\right\}$.
		When the disturbance is sufficiently small, all components of the output of the first layer remain positive. 
		Therefore, the output after the disturbance is 
		\begin{align*}
		&\hat{Y}\left(\left[\hat{W}_i+\delta_{Wi}\right]_{i=1}^2,\left[\hat{b}_i+\delta_{bi}\right]_{i=1}^2\right)\\
		=&\left(\hat{W}_2+\delta_{W2}\right)h_{s_{-},s_+}\left(\left(\hat{W}_1+\delta_{W1}\right)X+\left(\hat{b}_1+\delta_{b1}\right)\mathbf{1}_n^T\right)+\left(\hat{b}_2+\delta_{b2}\right)\mathbf{1}_n^T
		\\
		\overset{(*)}{=}
		&\left(\hat{W}_2+\delta_{W2}\right)s_{+}\left(\left(\hat{W}_1+\delta_{W1}\right)X+\left(\hat{b}_1+\delta_{b1}\right)\mathbf{1}_n^T\right)+\left(\hat{b}_2+\delta_{b2}\right)\mathbf{1}_n^T
		\\
		=& s_{+}\delta_{W2}\left(\left(\hat{W}_1+\delta_{W1}\right)X+\left(\hat{b}_1+\delta_{b1}\right)\mathbf{1}_n^T\right)+s_{+}\hat{W}_2\delta_{W1}X+s_{+}\hat{W}_2\delta_{b1}\mathbf{1}_n^T+\delta_{b2}\mathbf{1}_n^T
		\\
		&+ \hat{W}_2s_{+}(\hat{W}_1X+\hat{b}_1\mathbf{1}_n^T)+\hat{b}_2\mathbf{1}_n^T
		\\
		=&\left(s_{+}\delta_{W2}\left(\hat{W}_1+\delta_{W1}\right)+s_{+}\hat{W}_2\delta_{W1}\right)X+\left(s_{+}\hat{W}_2\delta_{b1}+\delta_{b2}+s_{+}\delta_{W2}\left(\hat{b}_1+\delta_{b1}\right)\right)\mathbf{1}_n^T
		\\
		&+ \hat{W}_2h_{s_-,s_+}(\hat{W}_1X+\hat{b}_1\mathbf{1}_n^T)+\hat{b}_2\mathbf{1}_n^T
		\\
		=&(\tilde{W}+\delta)\left[\begin{matrix}
		X\\1_n^T
		\end{matrix}\right],
		\end{align*}
		where eq. ($*$) is because all components of $\left(\hat{W}_1+\delta_{W1}\right)X+\left(b'_1+\delta_{b1}\right)\mathbf{1}_n^T$ are positive, and $\delta$ is defined as the following matrix
		\begin{equation*}
		\delta=\left[\begin{matrix}s_{+}\left(\hat{W}_2\delta_{W1}+\delta_{W2}\hat{W_1}+\delta_{W2}\delta_{W1}\right) &
		s_{+}\hat{W_2}\delta_{b1}+\delta_{b2}+s_{+}\delta_{W2}\left(\hat{b}_1+\delta_{b1}\right)\end{matrix}\right].
		\end{equation*}
		Therefore, the empirical risk $\hat{\mathcal R}$ with respect to 
		$\left\{\left[\hat{W}_i+\delta_{Wi}\right]_{i=1}^2,\left[\hat{b}_i+\delta_{bi}\right]_{i=1}^2\right\}$ is 
			\begin{align*}
		\hat{\mathcal{R}}\left(\left[\hat{W}_i+\delta_{Wi}\right]_{i=1}^2,\left[\hat{b}_i+\delta_{bi}\right]_{i=1}^2\right)&= \frac{1}{n}\sum_{i=1}^n l\left(Y_i,\left(\left(\tilde{W}+\delta\right)\tilde{X}\right)_{\cdot,i}\right)
		\\&= \frac{1}{n}\sum_{i=1}^n l\left(Y_i,\left(\tilde{W}+\delta\right)\left[\begin{matrix}x_i\\1\end{matrix}\right]\right)
		\\&=f(\tilde{W}+\delta).
		\end{align*}
		$\delta$ approaches zero when the disturbances $\{\delta_{W1},\delta_{W2},\delta_{b1},\delta_{b2}\}$ approach zero (element-wise).  Since $\hat{W}$ is the local minimizer of $f(W)$, we have
		\begin{equation}
		\label{eq:local}
		\hat{\mathcal{R}}\left(\left[\hat{W}_i\right]_{i=1}^2,\left[\hat{b}_i\right]_{i=1}^2\right)=f(\hat{W})\le f(\hat{W}+\delta)=\hat{\mathcal{R}}\left(\left[\hat{W}_i+\delta_{Wi}\right]_{i=1}^2,\left[\hat{b}_i+\delta_{bi}\right]_{i=1}^2\right).    
		\end{equation}
		
		Because the disturbances $\{\delta_{W1},\delta_{W2},\delta_{b1},\delta_{b2}\}$ are arbitrary, eq. (\ref{eq:local}) demonstrates that $\left\{\left[\hat{W}_i\right]_{i=1}^{2},\left[\hat{b}_i\right]_{i=1}^{2}\right\}$ is a local minimizer.
		
		The proof is completed.
	\end{proof}	
	
	\textbf{Step (b). Prove the constructed local minima are spurious.}
	
	\begin{theorem}
		\label{thm:L=1}
		Under the same conditions of Lemma \ref{Lemma:localmin} and Assumptions \ref{assum:nonlinear}, \ref{assum:disctinct}, and \ref{assum:activation}, the constructed spurious local minima in Lemma \ref{Lemma:localmin} are spurious.
	\end{theorem}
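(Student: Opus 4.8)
The plan is to show that the constructed point is not globally minimal by exhibiting, arbitrarily close to it, a point of strictly smaller empirical risk. First I would record what optimality of $\tilde W$ buys us. The construction in Lemma~\ref{Lemma:localmin} forces every pre-activation to be positive, so the prediction at the constructed point equals $\tilde W\tilde X$, and $\tilde W$ is a local minimizer of the smooth function $f$ in~(\ref{def:f}); hence $\nabla f(\tilde W)=0$. Writing $g_j=\nabla_{\hat Y_{\cdot,j}}l\big(Y_j,(\tilde W\tilde X)_{\cdot,j}\big)$, stationarity of $f$ in every entry of $\tilde W$ yields the \emph{gradient balance} $\sum_{j=1}^n g_j x_j^T=0$ and $\sum_{j=1}^n g_j=0$ (the latter coming from the all-ones column of $\tilde X$). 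By Assumption~\ref{assum:nonlinear} the data cannot be fit by a linear model, so $\tilde W\tilde X\ne Y$; since the loss has zero derivative only when prediction equals label, not all $g_j$ vanish. I then fix an output coordinate $i$ with $(g_j)_i\ne 0$ for some $j$, i.e. a nonzero row of $\nabla_{\hat Y}\hat{\mathcal R}$.

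Next I would build the improving point using the slack left by Assumption~\ref{assum:width}. Because $d_1>d_Y$, the construction carries at least one ``spare'' hidden unit whose outgoing weight column in $\hat W_2$ is zero, so its incoming weight and bias may be reset to \emph{any} pair $(w,c)$ without changing the prediction. I set this spare unit to a direction $w$ and threshold $c$ supplied by the separation lemma (Lemma~\ref{Lemma:Separable Lemma}), turn its outgoing weight from $0$ to $\delta\ve_i$ for a small scalar $\delta$, and leave every other parameter as in Lemma~\ref{Lemma:localmin}. Writing $z_j=w^Tx_j+c$ and $S=\{j:z_j<0\}$, only coordinate $i$ of the prediction moves, by $\delta\,h_{s_-,s_+}(z_j)$, so a Taylor expansion gives $\hat{\mathcal R}-f(\tilde W)=\tfrac{\delta}{n}\sum_{j}(g_j)_i\,h_{s_-,s_+}(z_j)+O(\delta^2)$. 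Using $h_{s_-,s_+}(z_j)=s_+z_j+(s_--s_+)z_j\mathbf 1_{\{z_j<0\}}$ together with the gradient balance, which kills the linear part since $\sum_j (g_j)_i z_j=\big(\sum_j (g_j)_i x_j^T\big)w+c\sum_j (g_j)_i=0$, the first-order term collapses to $\tfrac{\delta}{n}(s_--s_+)\sum_{j\in S}z_j(g_j)_i$. Whenever this coefficient is nonzero I choose the sign of $\delta$ to make the first-order term negative, so for small $|\delta|$ the new point has strictly smaller empirical risk than $f(\tilde W)$. Since $\delta$, the scale $\eta$, and the separating hyperplane all range over continua, this produces infinitely many points lying below each constructed minimizer, proving them spurious.

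The crux, and the step I expect to be the main obstacle, is guaranteeing that the surviving coefficient $(s_--s_+)\sum_{j\in S}z_j(g_j)_i$ can be made nonzero; this is precisely the role of the separation lemma (Lemma~\ref{Lemma:Separable Lemma}) and of the distinctness and activation hypotheses. The factor $s_--s_+$ is nonzero because the activation $h_{s_-,s_+}$ has $|s_+|\ne|s_-|$ in~(\ref{eq:relu_like}), which also entails the turning-point condition of Assumption~\ref{assum:activation}. For the weighted sum I would fix a generic $w$ so that the projections $\{w^Tx_j\}$ are pairwise distinct (possible by Assumption~\ref{assum:disctinct}) and study $F(\tau)=\sum_{j:w^Tx_j<\tau}(w^Tx_j-\tau)(g_j)_i$ as a function of the threshold $\tau=-c$; its derivative is $-\sum_{j:w^Tx_j<\tau}(g_j)_i$, which is not identically zero because the increments $(g_j)_i$ are not all zero and occur at distinct values of $\tau$. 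Hence $F$ is not identically zero, so some threshold makes $\sum_{j\in S}z_j(g_j)_i=F(\tau)\ne 0$, which is what the separation lemma packages. The delicacy lies entirely in this non-degeneracy bookkeeping—ensuring the linear contribution cancels exactly while the kink contribution does not—rather than in any hard estimate; once it is secured, continuous differentiability of the loss makes the Taylor argument routine.
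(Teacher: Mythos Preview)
Your approach is correct and takes a genuinely different, more elementary route than the paper's. Both proofs open identically: stationarity of $\tilde W$ yields the gradient balance $\sum_j g_j x_j^T=0$ and $\sum_j g_j=0$, and Assumption~\ref{assum:nonlinear} forces some row $(g_j)_i$ to be nonzero. The constructions of the improving point then diverge. The paper rebuilds the first $d_Y+1$ hidden units with a paired structure---two units carrying opposite incoming weights $\tilde W_{1,\cdot}\mp\alpha\beta^T$ and outgoing coefficients $\pm\tfrac{1}{s_++s_-}$---so that the first output coordinate shifts by $\pm\tfrac{s_--s_+}{s_++s_-}\gamma$ on a set $I$ produced by Lemma~\ref{Lemma:Separable Lemma}; that lemma guarantees the \emph{unweighted} sum $\sum_{j\in I}(g_j)_1\neq 0$, which is the nonzero first-order coefficient. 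Your construction is leaner: leave the first $d_Y$ units untouched, load one spare unit with arbitrary $(w,c)$ and outgoing weight $\delta\ve_i$, and let the gradient balance itself cancel the linear piece of $\sum_j (g_j)_i h_{s_-,s_+}(z_j)$, leaving $(s_--s_+)\sum_{j\in S}z_j(g_j)_i$. The non-degeneracy you need is a \emph{weighted} sum, which your $F(\tau)$ argument settles directly; note this is not literally what Lemma~\ref{Lemma:Separable Lemma} packages, so your self-contained argument is the right move and is in fact simpler than invoking that lemma. A side benefit is that your surviving factor is $s_--s_+$, nonzero for any nonlinear two-piece activation, whereas the paper's $\tfrac{1}{s_++s_-}$ explicitly consumes Assumption~\ref{assum:activation}. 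Two small fixes: the remainder in your Taylor expansion should be $o(\delta)$ rather than $O(\delta^2)$, since the loss is only assumed $C^1$; and your opening phrase ``arbitrarily close to it'' is misleading---the improving point is far from the minimizer in the $(w,c)$ coordinates, but that is irrelevant, since spuriousness only requires \emph{some} point of smaller risk.
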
 
	
	\begin{proof} 	
		
		The minimizer $\tilde{W}$ is the solution of the following equation 
		\begin{equation*}
		\nabla_{W} f(W)=0.
		\end{equation*}
		
		Specifically, we have
		\begin{equation*}
		\frac{\partial f\left(\tilde{W}\right) }{\partial W_{i,j}}=0,\text{ } i \in \{1,\cdots,d_Y\},\text{ } j \in\{1,\cdots,d_X\},
		\end{equation*}
		
		Applying the definition of $f(W)$ (eq. (\ref{def:f})),
	\begin{equation*}
	\frac{\partial f\left(\tilde{W}\right)}{\partial W_{k,j}}=\sum_{i=1}^{n} \nabla_{\hat{Y}_{i}}    l\left(Y_{i},\tilde{W}\left[\begin{matrix}
	x_i\\1
	\end{matrix}\right]\right)E_{k,j} \left[\begin{matrix}
	x_i\\1
	\end{matrix}\right]
	=\sum_{i=1}^{n} \left(\nabla_{\hat{Y}_{i}}    l\left(Y_{i},\tilde{W}\left[\begin{matrix}
	x_i\\1
	\end{matrix}\right]\right)\right)_k \left(\left[\begin{matrix}
	x_i\\1
	\end{matrix}\right]\right)_j,
	\end{equation*}
		where $\hat{Y}_{i}
	=\tilde{W}\left[\begin{matrix}
	x_i\\1
	\end{matrix}\right], \nabla_{\hat{Y}_{i}}    l\left(Y_{i},\tilde{W}\left[\begin{matrix}
	x_{i}\\1
	\end{matrix}\right]\right)$ $\in$ $\mathbb{R}^{1\times d_Y}$.
		Since $k$, $j$ are arbitrary in $\{1,\cdots,d_Y\}$ and $\{1,\cdots,d_X\}$, respectively, 
		we have
		\begin{equation}
		\label{eq:orthogonal}
		\mV \left[\begin{matrix}
		X^T&\mathbf{1}_n
		\end{matrix}\right]=\mathbf{0},
		\end{equation}
		where 
			\begin{equation*}
		\mathbf{V}=\left[\begin{matrix}\left(
		\nabla_{\hat{Y}_{1}} l\left(Y_{1},\tilde{W}\left[\begin{matrix}
		x_1\\1
		\end{matrix}\right]\right)\right)^T&\cdots&\left(\nabla_{\hat{Y}_{n}} l\left(Y_{n},\tilde{W}\left[\begin{matrix}
		x_n\\1
		\end{matrix}\right]\right)\right)^T
		\end{matrix}\right].
		\end{equation*}

		We then define $\tilde{Y}=\tilde{W}\tilde{X}$. Applying Assumption \ref{assum:nonlinear}, we have
		\begin{equation*}
		\tilde{Y}-Y=\left(\tilde{W}\tilde{X}-Y\right)\ne \mathbf{0},
		\end{equation*} 
		Thus, there exists some $k$-th row of $\tilde{Y}-Y$ that does not equal to $0$.
		
		We can rearrange the rows of $\tilde{W}$ and $Y$ simultaneously, while $\tilde{W}$ is maintained as the local minimizer of $f(W)$ and $f(\tilde{W})$ invariant\footnote{$f$ is also the function in term of $Y$.}. Without loss of generality, we assume $k=1$ ($k$ is the index of the row). Set $\vu=\mV_{1,\cdot}$ and $v_i=\tilde{Y}_{1,i}$ in Lemma \ref{Lemma:Separable Lemma}. There exists a non-empty separation $I=[1:l']$ and $J=[l'+1:n]$ of $S=\{1,2,\cdots,n\}$ and a vector $\beta \in \R^{d_X}$, such that 
		
		(1.1) for any positive constant $\alpha$ small enough, and $ i\in I$, $j \in J$, $\tilde{Y}_{1,i}-\alpha \beta^Tx_i<\tilde{Y}_{1,j}-\alpha \beta^Tx_j$;
		
		(1.2) $\sum_{i\in I} \mV_{1,i} \ne 0.$
		
		Define 
		\begin{equation*}
		\eta_1=\tilde{Y}_{1,l'}-\alpha \beta^T x_{l'}+\frac{1}{2}\left(\min_{i\in \{l'+1,...,n\}}  \left(\tilde{Y}_{1,i}-\alpha \beta^T x_i\right)-\left(\tilde{Y}_{1,l'}-\alpha \beta^T x_{l'}\right)\right).
		\end{equation*}
		
		Applying (1.1), for any $i\in I$
		\begin{align*}
		&\tilde{Y}_{1,i}-\alpha \beta^Tx_i-\eta_1
		\\
		=&\left(\tilde{Y}_{1,i}-\alpha \beta^Tx_i-\tilde{Y}_{1,l'}+\alpha \beta^T x_{l'}\right)-\frac{1}{2}\left(\min_{i\in \{l'+1,...,n\}}  \left(\tilde{Y}_{1,i}-\alpha \beta^T x_i\right)-\left(\tilde{Y}_{1,l'}-\alpha \beta^T x_{l'}\right)\right)
		\\<&0,
		\end{align*}
		while for any $j \in J$,
		\begin{align*}
		&\tilde{Y}_{1,j}-\alpha \beta^Tx_j-\eta_1>0
		\\
		=&\left(\tilde{Y}_{1,j}-\alpha \beta^Tx_i-\tilde{Y}_{1,l'}+\alpha \beta^T x_{l'}\right)-\frac{1}{2}\left(\min_{i\in \{l'+1,...,n\}}  \left(\tilde{Y}_{1,i}-\alpha \beta^T x_i\right)-\left(\tilde{Y}_{1,l'}-\alpha \beta^T x_{l'}\right)\right)\\
		\ge& \frac{1}{2}\left(\min_{i\in \{l'+1,...,n\}}  \left(\tilde{Y}_{1,i}-\alpha \beta^T x_i\right)-\left(\tilde{Y}_{1,l'}-\alpha \beta^T x_{l'}\right)\right)
		\\
		>&0.
		\end{align*} 
		
		Define $\gamma\in \mathbb{R}$ which satisfies
		\begin{equation*}  
		\vert\gamma\vert =\left\{
		\begin{aligned}
		&\frac{1}{2}\min_{i \in \{l'+1,...,s_{t+1}\}} \alpha \beta^T(x_{l}-x_i) ,& l'<s_{t+1}\\
		&\alpha,& l'=s_{t+1}
		\end{aligned}
		\right.,
		\end{equation*}
		where $s_{t+1}$ is defined in Lemma \ref{Lemma:Separable Lemma}. 
		
		We argue that 
		\begin{gather}
		\label{eq:dist_gamma}
		\left\vert\frac{1}{2}\left(\min_{i\in \{l'+1,...,n\}}  \left(\tilde{Y}_{1,i}-\alpha \beta^T x_i\right)-\left(\tilde{Y}_{1,l'}-\alpha \beta^T x_{l'}\right)\right)\right\vert-\vert\gamma\vert>0.
		\end{gather}
		
		When $l'=s_{t+1}$, eq. (\ref{eq:gap_=}) stands. Also, 
		\begin{gather*}
		\lim_{\alpha\to 0^+} \gamma = 0,\\ 
		\lim_{\alpha\to 0^+} \left(\min_{i\in \{l'+1,...,n\}}  \left(\tilde{Y}_{1,i}-\alpha \beta^T x_i\right)-\left(\tilde{Y}_{1,l'}-\alpha \beta^T x_{l'}\right)\right)=\min_{i\in \{l'+1,...,n\}}  \tilde{Y}_{1,i}-\tilde{Y}_{1,l'} > 0.
		\end{gather*}
		Therefore, we get eq. (\ref{eq:dist_gamma}) when $\alpha$ is small enough.
		
		When $l'<s_{t+1}$, eq. (\ref{eq:gap_<}) stands. 
		Therefore,
		\begin{equation*}
		\vert\gamma\vert= \frac{1}{2}\left\vert\frac{1}{2}\left(\min_{i\in \{l'+1,...,n\}}  \left(\tilde{Y}_{1,i}-\alpha \beta^T x_i\right)-\left(\tilde{Y}_{1,l'}-\alpha \beta^T x_{l'}\right)\right)\right\vert,
		\end{equation*}
		which apparently leads to eq. (\ref{eq:dist_gamma}).
		
		Therefore, for any $i \in I$, we have that
		\begin{align*}
		&\tilde{Y}_{1,i}-\alpha \beta^Tx_i-\eta_1+\vert\gamma\vert
		\\
		\le&-\frac{1}{2}\left(\min_{i\in \{l'+1,...,n\}}  \left(\tilde{Y}_{1,i}-\alpha \beta^T x_i\right)-\left(\tilde{Y}_{1,l'}-\alpha \beta^T x_{l'}\right)\right)+\vert\gamma\vert
		\\
		<&0,
		\end{align*}
		while for any $j \in J$,
	\begin{align*}
	&\tilde{Y}_{1,j}-\alpha \beta^Tx_j-\eta_1-\vert\gamma\vert\\
	\ge& \frac{1}{2}\left(\min_{i\in \{l'+1,...,n\}}  \left(\tilde{Y}_{1,i}-\alpha \beta^T x_i\right)-\left(\tilde{Y}_{1,l'}-\alpha \beta^T x_{l'}\right)\right)-\vert\gamma\vert\\
	>&0.
	\end{align*}
		
		Furthermore, define $\eta_i$ ($2\le i\le d_Y $) as negative reals with absolute value sufficiently large, such that for any $i \in [2:d_Y]$  and any $j \in [1:n]$,
		\begin{equation*}
		\tilde{Y}_{i,j}-\eta_i>0.
		\end{equation*}

		Now we construct a point in the parameter space whose empirical risk is smaller than the proposed local minimum in Lemma \ref{Lemma:localmin} as follows
		\begin{equation}
		\label{eq:constuctw1}
		\tilde{W}_{1}=\left[\begin{array}{c}{\tilde{W}_{1,[1:d_X]}-\alpha\beta^T} \\
		{-\tilde{W}_{1,[1:d_X]}+\alpha\beta^T} \\ {\tilde{W}_{2,[1:d_X]}}\\{\vdots} \\{\tilde{W}_{d_Y,[1:d_X]}}\\{0_{(d_1-d_Y-1)\times d_X}}\end{array}\right],
		\end{equation}
		\begin{gather}
		\tilde{b}_{1}=\left[\begin{array}{c}{\tilde{W}_{1,[d_X+1]}-\eta_1+\gamma} \\{-\tilde{W}_{1,[d_X+1]}+\eta_1+\gamma}\label{eq:constuctb1}
		\\
		{\tilde{W}_{2,[d_X+1]}-\eta_2}\\{\vdots} \\ {\tilde{W}_{d_Y,[d_X+1]}-\eta_{d_Y}}\\{0_{(d_1-d_Y-1)\times 1}}\end{array}\right], \\ \label{eq:constuctw2}
		\tilde{W}_{2}=\left[\begin{matrix}
		\frac{1}{s_{+}+s_{-}}&-\frac{1}{s_{+}+s_{-}}        &0&0   & \cdots&0&0&\cdots&0 \\
		0                    &0            &\frac{1}{s_{+}}     &\cdots & 0&\vdots&\vdots&&\vdots \\
		\vdots               &\vdots       &  \vdots    & \frac{1}{s_{+}}     & \cdots      &0&&\ddots    \\
		\vdots        	&   \vdots           &   \vdots          &   \vdots   &   \ddots    & \vdots&\vdots&&\vdots \\
		0                    & 0           &0          &  0   &\cdots       &   \frac{1}{s_{+}} &0 &\cdots&0 
		\end{matrix} \right],	\end{gather}
		and\begin{equation}
		\label{eq:constuctb2}
		\tilde{b}_{2}=\left[\begin{array}{c}\eta_1 \\ \eta_2\\{\vdots} \\\eta_{d_Y}\end{array}\right],
		\end{equation}  
		where $\tilde{W}_i$ and $\tilde{b}_i$ are the weight matrix and the bias of the $i$-th layer, respectively.
		
		After some calculations, the network output of the first layer before the activation in terms of $\left\{\left[\tilde{W}_i\right]_{i=1}^2,\left[\tilde{b}_i\right]_{i=1}^2\right\}$ is 
		\begin{equation*}
		\tilde{Y}^{(1)}=\tilde{W}_1X+\tilde{b}_1\mathbf{1}_n^T=\left[\begin{matrix}
		\tilde{W}_{1,\cdot}\tilde{X}-\alpha\beta^TX-\eta_1\mathbf{1}_n^T+\gamma\mathbf{1}_n^T\\
		-\tilde{W}_{1,\cdot}\tilde{X}+\alpha\beta^TX+\eta_1\mathbf{1}_n^T+\gamma\mathbf{1}_n^T\\
		\tilde{W}_{2,\cdot}\tilde{X}-\eta_2\mathbf{1}_n^T\\
		\vdots\\
		\tilde{W}_{d_Y,\cdot}\tilde{X}-\eta_{d_Y}\mathbf{1}_n^T\\
		\mathbf{0}_{(d_1-d_Y-1)\times n}
		\end{matrix}\right].
		\end{equation*}
		
		Therefore, the output of the whole neural network is 
		\begin{align*}
		\hat{Y}&=\tilde{W}_2 h_{s_-,s_+} \left(\tilde{W}_1X+\tilde{b}_1\mathbf{1}_n^T\right)+\tilde{b}_2\mathbf{1}_n^T\\
		&=\tilde{W}_2h_{s_-,s_+}\left(\left[\begin{matrix}
		\tilde{W}_{1,\cdot}\tilde{X}-\alpha\beta^TX-\eta_1\mathbf{1}_n^T+\gamma\mathbf{1}_n^T\\
		-\tilde{W}_{1,\cdot}\tilde{X}+\alpha\beta^TX+\eta_1\mathbf{1}_n^T+\gamma\mathbf{1}_n^T\\
		\tilde{W}_{2,\cdot}\tilde{X}-\eta_2\mathbf{1}_n^T\\
		\vdots\\
		\tilde{W}_{d_Y,\cdot}\tilde{X}-\eta_{d_Y}\mathbf{1}_n^T\\
		\mathbf{0}_{(d_1-d_Y-1)\times n}
		\end{matrix}\right]\right)+\tilde{b}_2\mathbf{1}_n^T.
		\end{align*}
		
	Specifically, if $ j\le l' $, 
	\begin{align*}
	\left(\tilde{Y}^{(1)}\left(\left[\tilde{W}_i\right]_{i=1}^2,\left[\tilde{b}_i\right]_{i=1}^2\right)\right)_{1,j}=&\tilde{W}_{1,\cdot}\left[\begin{matrix}x_j\\1\end{matrix}\right]-\alpha\beta^Tx_j-\eta_1+\gamma
	\\=&\tilde{Y}_{1,j}-\alpha\beta^Tx_j-\eta_1+\gamma<0,\\
	\left(\tilde{Y}^{(1)}\left(\left[\tilde{W}_i\right]_{i=1}^2,\left[\tilde{b}_i\right]_{i=1}^2\right)\right)_{2,j}=&-\tilde{W}_{1,\cdot}\left[\begin{matrix}x_j\\1\end{matrix}\right]+\alpha\beta^Tx_j+\eta_1+\gamma
	\\=&-\tilde{Y}_{1,j}+\alpha\beta^Tx_j+\eta_1+\gamma>0.
	\end{align*}
	Therefore, $(1,j)$-th component of $\hat{Y}\left(\left[\tilde{W}_i\right]_{i=1}^{2},\left[\tilde{b}_i\right]_{i=1}^{2}\right)$ is
	\begin{align}
	\nonumber
	&\left(\hat{Y}\left(\left[\tilde{W}_i\right]_{i=1}^{2},\left[\tilde{b}_i\right]_{i=1}^{2}\right)\right)_{1,j}
	\\\nonumber
	=&\left(\begin{matrix}\frac{1}{s_++s_-},-\frac{1}{s_++s_-},0,\cdots,0\end{matrix}\right)h_{s_-,s_+}\left(\left[\begin{matrix}
	\tilde{W}_{1,\cdot}X-\alpha\beta^TX-\eta_1\mathbf{1}_n^T+\gamma\mathbf{1}_n^T\\
	-\tilde{W}_{1,\cdot}X+\alpha\beta^TX+\eta_1\mathbf{1}_n^T+\gamma\mathbf{1}_n^T\\
	\tilde{W}_{2,\cdot}X-\eta_2\mathbf{1}_n^T\\
	\vdots\\
	\tilde{W}_{d_Y,\cdot}X-\eta_{d_Y}\mathbf{1}_n^T\\
	\mathbf{0}_{d_1-d_Y-1}\mathbf{1}_n^T
	\end{matrix}\right]\right)_{\cdot,j}
	\\\nonumber
	&+\eta_1
	\\\nonumber
	=&\frac{1}{s_{+}+s_{-}}h_{s_-,s_+}( \tilde{Y}_{1,j} -\alpha\beta^T x_j- \eta_1+\gamma )-\frac{1}{s_{+}+s_{-}}h_{s_-,s_+}(- \tilde{Y}_{1,j} +\alpha\beta^T x_j+ \eta_1+\gamma ) 
	\\&+\eta_1\nonumber
	\\\nonumber
	=&\frac{s_{-}}{s_{+}+s_{-}}( \tilde{Y}_{1,j} -\alpha\beta^T x_j- \eta_1+\gamma )-\frac{s_{+}}{s_{+}+s_{-}}(- \tilde{Y}_{1,j} +\alpha\beta^T x_j+ \eta_1+\gamma ) 
	\\&+\eta_1\nonumber
	\\
	\label{eq:firstrow<}
	=&\tilde{Y}_{1,j} -\alpha\beta^T x_j+\frac{s_{-}-s_{+}}{s_{+}+s_{-}}\gamma;
	\end{align}
	
	Similarly, when $ j> l' $, the ($1,j$)-th component is
	\begin{align}
	&\left(\hat{Y}\left(\left[\tilde{W}_i\right]_{i=1}^{2},\left[\tilde{b}_i\right]_{i=1}^{2}\right)\right)_{1,j} \nonumber
	\\\nonumber
	=&\quad\frac{s_{+}}{s_{+}+s_{-}}( \tilde{Y}_{1,j} -\alpha\beta^T x_j- \eta_1+\gamma )-\frac{s_{-}}{s_{+}+s_{-}}(- \tilde{Y}_{1,j} +\alpha\beta^T x_j+ \eta_1+\gamma ) +\eta_1
	\\\label{eq:firstrow>}
	=&\tilde{Y}_{1,j} -\alpha\beta^T x_j+\frac{s_{+}-s_{-}}{s_{+}+s_{-}}\gamma,
	\end{align}
		and 
		\begin{equation}
		\label{eq:otherrow}
		\left(\hat{Y}\left(\left[\tilde{W}_i\right]_{i=1}^{2},\left[\tilde{b}_i\right]_{i=1}^{2}\right)\right)_{i,j}=\frac{s_{+}}{s_{+}}( \tilde{Y}_{i,j} - \eta_i )+\eta_i= \tilde{Y}_{i,j}, \text{ } i\ge 2.
		\end{equation}
		
		Thus, the empirical risk of the neural network with parameters $\left\{\left[\tilde{W}_i\right]_{i=1}^{2},\left[\tilde{b}_i\right]_{i=1}^{2}\right\}$ is 
		\begin{align}
		\nonumber& \hat{\mathcal{R}}\left(\left[\tilde{W}_i\right]_{i=1}^{2},\left[\tilde{b}_i\right]_{i=1}^{2}\right)
		\\
		\nonumber=&\frac{1}{n}\sum_{i=1}^n l\left(Y_i,\tilde{W}_2\left(\tilde{W}_1x_i+\tilde{b}_1\mathbf{1}_n^T\right)+\tilde{b}_2\mathbf{1}_n^T  \right)
		\\\nonumber
		=&\frac{1}{n}\sum_{i=1}^n \left(l\left(Y_i,\tilde{W}\left[\begin{matrix}
		x_i\\ 1
		\end{matrix}\right]\right)+\nabla_{\hat{Y}_i} l\left(Y_i,\tilde{W}\left[\begin{matrix}
		x_i\\ 1
		\end{matrix}\right]\right) \left(\tilde{W}_2\left(\tilde{W}_1x_i+\tilde{b}_1\mathbf{1}_n^T\right)+\tilde{b}_2\mathbf{1}_n^T -\tilde{W}\left[\begin{matrix}
		x_i\\ 1
		\end{matrix}\right]\right)\right) 
		\\
		&+\sum_{i=1}^no \left( \left\Vert \tilde{W}_2\left( \tilde{W}_1x_i+\tilde{b}_1\mathbf{1}_n^T\right)+\tilde{b}_2\mathbf{1}_n^T -\tilde{W}\left[\begin{matrix}
		x_i\\ 1
		\end{matrix}\right] \right\Vert\right).
		\end{align}
		
		Applying eqs. (\ref{eq:firstrow<}), (\ref{eq:firstrow>}), and (\ref{eq:otherrow}), we have
		\begin{align*}
		&\sum_{i=1}^n  \left(\tilde{W}_2\left(\tilde{W}_1x_i+\tilde{b}_1\right)+\tilde{b}_2 -\tilde{W}\left[\begin{matrix}
		x_i\\ 1
		\end{matrix}\right]\right)^T\nabla_{\hat{Y}_i} l\left(Y_i,\tilde{W}\left[\begin{matrix}
		x_i\\ 1
		\end{matrix}\right]\right)^T
		\\
		\overset{(*)}{=}&\sum_{i=1}^{l'} \mV_{1,i} (-\alpha\beta^Tx_i+\frac{s_+-s_-}{s_++s_-}\gamma)+\sum_{i=l'+1}^n \mV_{1,i} (-\alpha\beta^Tx_i-\frac{s_+-s_-}{s_++s_-}\gamma)
		\\
		=&2 \gamma\sum_{i=1}^{l'} \frac{s_+-s_-}{s_++s_-}\mV_{1,i},
		\end{align*}
		where eq. ($*$) is because
		\begin{equation*}
		\left(\tilde{W}_2\left(\tilde{W}_1x_i+\tilde{b}_1\right)+\tilde{b}_2 -\tilde{W}\left[\begin{matrix}
		x_i\\ 1
		\end{matrix}\right]\right)_{j}=\left\{
		\begin{aligned}
		&-\alpha\beta^T x_j+\frac{s_{-}-s_{+}}{s_{+}+s_{-}}\gamma  , & j=1,i\le l' \\
		&-\alpha\beta^Tx_j-\frac{s_{-}-s_{+}}{s_{+}+s_{-}}\gamma  , & j=1,i> l'\\
		&0 , & j\ge 2
		\end{aligned}
		\right..
		\end{equation*}
		
		Furthermore, note that $\alpha=O(\gamma)$ (from the definition of $\gamma$). We have
		\begin{align*}
		&\sum_{i=1}^no \left( \left\Vert \tilde{W}_2\left( \tilde{W}_1x_i+\tilde{b}_1\right)+\tilde{b}_2 -\hat{W}\left[\begin{matrix}
		x_i\\ 1
		\end{matrix}\right] \right\Vert\right)\\
		=&\sum_{i=1}^n o\left(\sqrt{\sum_{j=1}^{n} \left(\tilde{W}_2\left(\tilde{W}_1x_i+\tilde{b}_1\right)+\tilde{b}_2 -\tilde{W}\left[\begin{matrix}
			x_i\\ 1
			\end{matrix}\right]\right)_{j}^2 }\right)
		\\
		=&o(\gamma).
		\end{align*}
		
		Let $\alpha$ be sufficiently small while $\text{sgn}(\gamma)=-\text{sgn} \left(\sum\limits_{i=1}^{l'} \frac{s_+-s_-}{s_++s_-}\mV_{1,i}\right)$.  We have 
		\begin{align*}
		&\sum_{i=1}^nl\left(Y_i,\tilde{W}_2\left(\tilde{W}_1x_i+\tilde{b}_1\right)+\tilde{b}_2  \right)- \sum_{i=1}^{n} l\left(Y_i,\hat{W}\left[\begin{matrix}
		x_i\\1
		\end{matrix}\right]\right)\\
		=&2 \gamma\sum_{i=1}^{l'} \frac{s_+-s_-}{s_++s_-}\mV_{1,i}+o(\gamma)
		\\
		\overset{(**)}{<}&0,
		\end{align*}
		where inequality ($**$) comes from (1.2) (see p. \pageref{eq:orthogonal}).

			From Lemma \ref{Lemma:localmin}, there exists a local minimizer $\left\{\left[\hat{W}_i\right]_{i=1}^{2},\left[\hat{b}_i\right]_{i=1}^{2}\right\}$ with empirical risk that equals to $f(\tilde{W})$. Meanwhile, we just construct a point in the parameter space with empirical risk smaller than $f(\tilde{W})$.
			
Therefore, $\left\{\left[\hat{W}_i\right]_{i=1}^{2},\left[\hat{b}_i\right]_{i=1}^{2}\right\}$ is a spurious local minimum.

The proof is completed.
	\end{proof}
	
	\subsection{Stage (2)}
	\label{sec:spurious_stage2}
	
	Stage (2) proves that neural networks with arbitrary hidden layers and two-piece linear activation $h_{s_{-},s_{+}}$ have spurious local minima. Here, we still assume $s_{+}\ne 0$. We have justified this assumption in Stage (1).
	
	This stage is organized similarly with Stage (1): (a) Lemma \ref{Lemma:localmins2}  constructs a local minimum; and (b) Theorem \ref{them:infinite_one} proves the minimum is spurious.


		\textbf{Step (a). Construct local minima of the loss surface.}
		
	\begin{lemma}
		\label{Lemma:localmins2}	
			
		Suppose that all the conditions of Lemma \ref{Lemma:localmin} hold, while the neural network has $L-1$ hidden layers. Then, this network has a local minimum at 
		\begin{gather*}
	\hat{W}'_1=\left[\begin{array}{c}{\left[\tilde{W}\right]_{\cdot,[1:d_X]}} \\ {\mathbf{0}_{\left(d_{1}-d_Y\right) \times d_X}}\end{array}\right],\textbf{ }
	\hat{b}'_1=\left[\begin{array}{c}{\left[\tilde{W}\right]_{\cdot,d_X+1}-\eta \mathbf{1}_{d_Y}} \\ {-\eta \mathbf{1}_{d_1-d_Y}}\end{array}\right],\\
	\hat{W}'_{i}=\frac{1}{s_+}\sum\limits_{j=1}^{d_Y} E_{j,j}+\frac{1}{s_+}\sum\limits_{j=d_Y+1}^{d_i} E_{j,(d_{Y}+1)},\text{ }\hat{b}'_{i}=0\text{ }(i=2,3,...,L-1), 
	\end{gather*} and 
	\begin{equation*}
	\hat{W}'_{L}=\left[\begin{array}{cc}{\frac{1}{ s_{+}}}I_{d_Y} & {\mathbf{0}_{d_Y\times (d_{L-1}-d_Y)}}\end{array}\right],\text{ }
	\hat{b}'_{L}=\eta \mathbf{1}_{d_Y},
	\end{equation*}
		where $\hat{W}_i'$ and $\hat{b}_i'$ are the weight matrix and the bias of the $i$-th layer, respectively, and $\eta$ is a negative constant with absolute value sufficiently large such that
		\begin{equation}
		\label{eq:def_eta}
		\tilde{W}\tilde{X}-\eta\mathbf{1}_{d_Y}\mathbf{1}_n^T>\mathbf{0}, 
		\end{equation}
		where $>$ is element-wise.
	\end{lemma}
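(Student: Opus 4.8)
The plan is to mirror the argument of Lemma~\ref{Lemma:localmin}, replacing its single forward step by an induction over the $L-1$ hidden layers. The two things to establish are the same as in the one-hidden-layer case: that at the constructed parameters every pre-activation is strictly positive (so all activations reduce to multiplication by $s_+$ and the network output collapses to the affine map $\tilde{W}\tilde{X}$), and that small perturbations can only increase the empirical risk.

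First I would compute the forward pass. Exactly as in Lemma~\ref{Lemma:localmin}, the choice of $\hat{b}'_1$ together with eq.~(\ref{eq:def_eta}) gives
\begin{equation*}
\tilde{Y}^{(1)} = \hat{W}'_1 X + \hat{b}'_1 \mathbf{1}_n^T = \left[\begin{matrix} \tilde{W}\tilde{X} - \eta \mathbf{1}_{d_Y}\mathbf{1}_n^T \\ -\eta \mathbf{1}_{d_1-d_Y}\mathbf{1}_n^T \end{matrix}\right] > \mathbf{0}
\end{equation*}
element-wise, hence $Y^{(1)} = s_+ \tilde{Y}^{(1)}$. The heart of the proof is the inductive claim that for every $i \in [1:L-1]$,
\begin{equation*}
\tilde{Y}^{(i)} = \left[\begin{matrix} \tilde{W}\tilde{X} - \eta \mathbf{1}_{d_Y}\mathbf{1}_n^T \\ -\eta \mathbf{1}_{d_i-d_Y}\mathbf{1}_n^T \end{matrix}\right] > \mathbf{0}.
\end{equation*}
The inductive step exploits the block structure of $\hat{W}'_i$: its first $d_Y$ rows $\frac{1}{s_+}E_{j,j}$ reproduce, up to the factor $\frac{1}{s_+}$, the first $d_Y$ rows of $Y^{(i-1)} = s_+\tilde{Y}^{(i-1)}$, recovering the top block $\tilde{W}\tilde{X} - \eta\mathbf{1}_{d_Y}\mathbf{1}_n^T$; its remaining rows $\frac{1}{s_+}E_{j,(d_Y+1)}$ all broadcast the $(d_Y+1)$-th component of $Y^{(i-1)}$, which equals the constant $s_+(-\eta)$, so every padding row again carries $-\eta>0$. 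The factors $\frac{1}{s_+}$ and $s_+$ cancel at each layer, so positivity and the structured form propagate no matter how the widths $d_i$ vary (this is where Assumption~\ref{assum:width}, guaranteeing $d_{i-1}\ge d_Y+1$ so that column $d_Y+1$ exists, is used). At the output layer, $\hat{W}'_L$ strips off the first $d_Y$ rows and $\hat{b}'_L=\eta\mathbf{1}_{d_Y}$ cancels the shift, yielding $\hat{Y}=\tilde{W}\tilde{X}$ and hence empirical risk $f(\tilde{W})$.

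To establish local minimality I would perturb all parameters by a small $\{[\delta_{Wi}]_{i=1}^L,[\delta_{bi}]_{i=1}^L\}$. Since every pre-activation above is strictly positive, for sufficiently small perturbations all pre-activations stay positive, so each activation still acts as multiplication by $s_+$ and the perturbed network computes an affine function of $X$, i.e.\ $\hat{Y}=(\tilde{W}+\delta)\tilde{X}$, where $\delta$ is the difference between the effective linear classifier (the product of the perturbed weight matrices with the $s_+$ factors folded in) and $\tilde{W}$. Because this effective classifier is a polynomial in the perturbation entries that returns $\tilde{W}$ at zero perturbation, $\delta$ is continuous and $\delta\to\mathbf{0}$ as the perturbations vanish. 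The perturbed empirical risk is then $f(\tilde{W}+\delta)$, and since $\tilde{W}$ is a local minimizer of $f$ we obtain $f(\tilde{W}+\delta)\ge f(\tilde{W})$ for all small perturbations, which is precisely local minimality of the constructed parameters.

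I expect the inductive forward pass to be the main obstacle: one must verify carefully that the broadcasting through $E_{j,(d_Y+1)}$ pins each padding row to the positive constant $-\eta$ across layers of differing width, and that the $s_+$ and $\frac{1}{s_+}$ factors cancel exactly so the top block is transmitted unchanged. Once the forward pass is pinned down, the perturbation argument is a routine deep-network analogue of the one in Lemma~\ref{Lemma:localmin}, the only genuinely new ingredient being that the effective linear map is now a product of $L$ matrices rather than a composition of two.
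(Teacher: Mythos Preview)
Your proposal is correct and follows essentially the same approach as the paper: an induction over layers showing each pre-activation retains the structured positive form $\left[\begin{smallmatrix}\tilde{W}\tilde{X}-\eta\mathbf{1}\\-\eta\mathbf{1}\end{smallmatrix}\right]$, followed by the observation that the output collapses to $\tilde{W}\tilde{X}$ and a perturbation argument reducing local minimality to that of $\tilde{W}$ for $f$. The paper writes the perturbed output as $[M_1\ M_2]\tilde{X}$ with $[M_1\ M_2]\to\tilde{W}$, which is exactly your $(\tilde{W}+\delta)\tilde{X}$ with $\delta=[M_1\ M_2]-\tilde{W}$.
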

	
	\begin{proof}
		Recall the discussion in Lemma \ref{Lemma:localmin} that all components of $\hat{W}_1 X+\hat{b}_1 \mathbf{1}_n^T$ are positive. Specifically,
		\begin{align*}
		\hat{W}_1 X+\hat{b}_1 \mathbf{1}_n^T=\left[\begin{matrix}
		\tilde{Y}-\eta\mathbf{1}_{d_Y}\mathbf{1}_n^T\\\\
		-\eta\mathbf{1}_{d_1-d_Y}\mathbf{1}_n^T
		\end{matrix}\right],
		\end{align*}
		where $\tilde{Y}$ is defined in Lemma \ref{Lemma:localmin}.
		
		Similar to the discussions in Lemma \ref{Lemma:localmin}, when the parameters equal to $\left\{\left[\hat{W}_i'\right]_{i=1}^L, \left[\hat{b}_i'\right]_{i=1}^L\right\}$, the output of the first layer before the activation function is
	 \begin{align*}
	\tilde{Y}^{(1)}=\hat{W}_1' X+\hat{b}'_1 \mathbf{1}_n^T=\left[\begin{matrix}
	\tilde{Y}-\eta\mathbf{1}_{d_Y}\mathbf{1}_n^T\\\\
	-\eta\mathbf{1}_{d_1-d_Y}\mathbf{1}_n^T
	\end{matrix}\right],
	\end{align*}		
		and 
		\begin{gather}
		\label{eq:positive1}
		\tilde{Y}-\eta\mathbf{1}_{d_Y}\mathbf{1}_n^T>\mathbf{0},\\
		\label{eq:positive2}
		-\eta\mathbf{1}_{d_1-d_Y}\mathbf{1}_n^T>\mathbf{0}.
		\end{gather}
		Here $>$ is defined element-wise.
		
		After the activation function, the output of the first layer is 
		\begin{equation*}
		Y^{(1)}=h_{s_{-},s_{+}} (\hat{W}'_1 X+\hat{b}'_1 \mathbf{1}_n^T)
		=s_{+}(\hat{W}'_1 X+\hat{b}'_1 \mathbf{1}_n^T)
		=s_{+}\left[\begin{matrix}
		\tilde{Y}-\eta\mathbf{1}_{d_Y}\mathbf{1}_n^T\\\\
		-\eta\mathbf{1}_{d_1-d_Y}\mathbf{1}_n^T
		\end{matrix}\right].
		\end{equation*}		
		
		We prove by induction that for all $ i\in[1:L-1]$ that 
			\begin{gather}
			\label{eq:positive_i_layer_before}
			\tilde{Y}^{(i)} > \mathbf{0} \text{ , element-wise},\\
			\label{eq:value_i_layer_before}
		Y^{(i)}=s_{+} \left[\begin{matrix}
		\tilde{Y}-\eta\mathbf{1}_{d_Y}\mathbf{1}_n^T\\\\
		-\eta\mathbf{1}_{d_i-d_Y}\mathbf{1}_n^T
		\end{matrix}\right].
		\end{gather}
		
		Suppose that for $1\le k\le L-2$, $\tilde{Y}^{(k)}$ is positive (element-wise) and
		\begin{equation*}
		Y^{(k)}=s_{+} \left[\begin{matrix}
		\tilde{Y}-\eta\mathbf{1}_{d_Y}\mathbf{1}_n^T\\\\
		-\eta\mathbf{1}_{d_k-d_Y}\mathbf{1}_n^T
		\end{matrix}\right].
		\end{equation*}
		
		Then the output of the $(k+1)$-th layer before the activation is  
		\begin{align*}
		\tilde{Y}^{(k+1)}&=\hat{W}'_{k+1}Y^{(k)}+\hat{b}'_{k+1}\mathbf{1}_n^T
		\\&=\frac{1}{s_+}\left(\sum\limits_{j=1}^{d_Y} E_{j,j}+\sum\limits_{j=d_Y+1}^{d_{k+1}} E_{j,(d_{Y}+1)}\right)s_{+} \left[\begin{matrix}
		\tilde{Y}-\eta\mathbf{1}_{d_Y}\mathbf{1}_n^T\\\\
		-\eta\mathbf{1}_{d_k-d_Y}\mathbf{1}_n^T
		\end{matrix}\right]
		\\
		&=\left[\begin{matrix}
		\tilde{Y}-\eta\mathbf{1}_{d_Y}\mathbf{1}_n^T\\\\
		-\eta\mathbf{1}_{d_{k+1}-d_Y}\mathbf{1}_n^T
		\end{matrix}\right].
		\end{align*}
		Applying eqs. (\ref{eq:positive1}) and (\ref{eq:positive2}), we have
		\begin{equation*}
		\tilde{Y}^{(k+1)}=\left[\begin{matrix}
		\tilde{Y}-\eta\mathbf{1}_{d_Y}\mathbf{1}_n^T\\\\
		-\eta\mathbf{1}_{d_{k+1}-d_Y}\mathbf{1}_n^T
		\end{matrix}\right]>\mathbf{0},
		\end{equation*}
		where $>$ is defined element-wise.
		Therefore,
		\begin{equation*}
		Y^{(k+1)}= h_{s_-,s_+} \left(\tilde{Y}^{(k+1)}\right)=s_+\tilde{Y}^{(k+1)}=s_+\left[\begin{matrix}
		\tilde{Y}-\eta\mathbf{1}_{d_Y}\mathbf{1}_n^T\\\\
		-\eta\mathbf{1}_{d_{k+1}-d_Y}\mathbf{1}_n^T
		\end{matrix}\right].
		\end{equation*}
		We thereby prove eqs. (\ref{eq:positive_i_layer_before}) and (\ref{eq:value_i_layer_before}).
		
		Therefore, $Y^{(L)}$ can be calculated as
		\begin{align}
		\nonumber
		\hat{Y}&=Y^{(L)}=\hat{W}'_L Y^{(L-1)}+\hat{b}'_{L}\mathbf{1}_n^T\\
		\nonumber
		&=\frac{1}{s_{+}}\left[\begin{array}{cc}{I_{d_Y}} & {\mathbf{0}_{d_Y\times (d_{L-1}-d_Y)}}\end{array}\right]s_{+}\left[\begin{matrix}
		\tilde{Y}-\eta\mathbf{1}_{d_Y}\mathbf{1}_n^T\\\\
		-\eta\mathbf{1}_{d_i-d_Y}\mathbf{1}_n^T
		\end{matrix}\right]+\eta \mathbf{1}_{d_Y}\mathbf{1}_n^T\\
		\label{eq:Y^L}
		&=\tilde{Y}.
		\end{align}
		
Then, we show the empirical risk is higher around $\left\{\left[\hat{W}'_i\right]_{i=1}^{L},\left[\hat{b}'_i\right]_{i=1}^{L}\right\}$ in order to prove that $\left\{\left[\hat{W}'_i\right]_{i=1}^{L},\left[\hat{b}'_i\right]_{i=1}^{L}\right\}$ is a local minimizer.
		
		Let $\left\{\left[\hat{W}'_i+\delta'_{Wi}\right]_{i=1}^{L},\left[\hat{b}'_i+\delta'_{bi}\right]_{i=1}^{L}\right\}$ be point in the parameter space which is close enough to the point $\left\{\left[\hat{W}'_i\right]_{i=1}^{L},\left[\hat{b}'_i\right]_{i=1}^{L}\right\}$. Since the disturbances $\delta'_{Wi}$ and $\delta'_{bi}$  are both close to 0 (element-wise), all components of $\tilde{Y}^{(i)}\left(\left[\hat{W}'_i+\delta'_{Wi}\right]_{i=1}^{L},\left[\hat{b}'_i+\delta'_{bi}\right]_{i=1}^{L}\right)$ remains positive. Therefore, the output of the neural network in terms of parameters $\left\{\left[\hat{W}'_i+\delta'_{Wi}\right]_{i=1}^{L},\left[\hat{b}'_i+\delta'_{bi}\right]_{i=1}^{L}\right\}$  is
		
		\begin{align*}
		&\hat{Y}\left(\left[\hat{W}'_i+\delta'_{Wi}\right]_{i=1}^{L},\left[\hat{b}'_i+\delta'_{bi}\right]_{i=1}^{L}\right)
		\\
		=&(\hat{W}'_L+\delta'_{WL}) h_{s_{-},s_{+}}   \left (\ldots h_{s_{-},s_{+}} \left (\left(\hat{W}'_1+\delta'_{W1}\right) X + \left(\hat{b}'_1+\delta'_{b1}\right)  1_n^T \right ) \ldots \right )
			\\
		&+\left(\hat{b}_L'+\delta_{bL}'\right)\mathbf{1}_n^T
		\\
		=&(\hat{W}'_L+\delta'_{WL}) s_{+}  \left (\ldots s_{+} \left (\left(\hat{W}'_1+\delta'_{W1}\right) X + \left(\hat{b}'_1+\delta'_{b1}\right)  1_n^T \right ) \ldots \right )
			\\
		&+\left(\hat{b}_L'+\delta_{bL}'\right)\mathbf{1}_n^T
		\\
		=&M_{1} X+M_{2}1_n^T,
		\end{align*}
		where
	$M_{1}$ and $M_{2}$ can be obtained from $\left\{\left[\hat{W}'_i\right]_{i=1}^{L},\left[\hat{b}'_i\right]_{i=1}^{L}\right\}$ and $\left\{\left[\delta'_{Wi}\right]_{i=1}^{L},\left[\delta'_{bi}\right]_{i=1}^{L}\right\}$ through several multiplication and summation operations\footnote{Since the exact form of $M_{1}$ and $M_{2}$ are not needed, we omit the exact formulations here.}.
		
		Rewrite the output as 
		\begin{equation*}
		M_{1} X+M_{2}1_n^T=\left[\begin{matrix}M_1&M_2 \end{matrix}\right]\left[\begin{matrix}
		X\\1_n^T
		\end{matrix}\right].
		\end{equation*}
		Therefore, the empirical risk $\hat{\mathcal R}$ before and after the disturbance can be expressed as $f(\tilde{W})$ and $f\left(\left[\begin{matrix}M_1&M_2 \end{matrix}\right]\right)$, respectively.

		When the disturbances $\left\{\left[\delta'_{Wi}\right]_{i=1}^{L},\left[\delta'_{bi}\right]_{i=1}^{L}\right\}$ approach $0$ (element-wise), $\left[\begin{matrix}
		M_1&M_2
		\end{matrix}\right]$ approaches $\tilde{W}$. Therefore, when $\left\{\left[\delta'_{Wi}\right]_{i=1}^{L},\left[\delta'_{bi}\right]_{i=1}^{L}\right\}$ are all small enough, we have
		\begin{align}
		\label{eq:loss compare}
		&\hat{\mathcal{R}}\left(\left[\hat{W}'_i+\delta'_{Wi}\right]_{i=1}^{L},\left[\hat{b}'_i+\delta'_{bi}\right]_{i=1}^{L}\right)\nonumber\\
		= & f(\left[\begin{matrix}M_1&M_2 \end{matrix}\right])\nonumber\\
		\ge& f(\tilde{W})\nonumber\\
		=&\hat{\mathcal{R}}\left(\left[\hat{W}'_i\right]_{i=1}^{L},\left[\hat{b}'_i\right]_{i=1}^{L}\right).
		\end{align}
		
		Since $\left\{\left[\hat{W}'_i+\delta'_{Wi}\right]_{i=1}^{L},\left[\hat{b}'_i+\delta'_{bi}\right]_{i=1}^{L}\right\}$ are arbitrary within a sufficiently small neighbour of $\left\{\left[\hat{W}'_i\right]_{i=1}^{L},\left[\hat{b}'_i\right]_{i=1}^{L}\right\}$, eq. (\ref{eq:loss compare}) yields that $\left\{\left[\hat{W}'_i\right]_{i=1}^{2},\left[\hat{b}'_i\right]_{i=1}^{2}\right\}$ is a local  minimizer.
	\end{proof}
	
	
	\textbf{Step (b). Prove the constructed local minima are spurious.}
	
	\begin{theorem}
		\label{them:infinite_one}
		
		Under the same conditions of Lemma \ref{Lemma:localmins2} and Assumptions \ref{assum:nonlinear}, \ref{assum:disctinct}, and \ref{assum:activation}, the constructed spurious local minima in Lemma \ref{Lemma:localmins2} are spurious.
	\end{theorem}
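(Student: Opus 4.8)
The plan is to reduce the $L$-layer case directly to the one-hidden-layer case already settled in Theorem \ref{thm:L=1}. By eq. (\ref{eq:Y^L}) the minimizer constructed in Lemma \ref{Lemma:localmins2} produces output $\hat Y=\tilde Y=\tilde W\tilde X$, so its empirical risk is exactly $f(\tilde W)$, for every admissible choice of the free constant $\eta$. Hence it suffices to exhibit, for each such minimizer, a single point in the parameter space of the deep network whose empirical risk equals the strictly-smaller value produced in the proof of Theorem \ref{thm:L=1}. The key observation is that the spurious behaviour in Theorem \ref{thm:L=1} is generated entirely by the \emph{first} hidden layer, where the two copies of the first row are pushed to opposite sides of a turning point (using Assumption \ref{assum:activation}, i.e. $s_++s_-\ne 0$); all remaining layers of the deep network only have to transport that output forward without distorting it.

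Concretely, I would keep the first layer identical to the first-layer parameters $\tilde W_1,\tilde b_1$ of eqs. (\ref{eq:constuctw1})--(\ref{eq:constuctb1}), so that $Y^{(1)}=h_{s_-,s_+}(\tilde Y^{(1)})$ is precisely the activated hidden representation of Stage (1), with its first row genuinely split across the two linear pieces. For the second layer I would let the top $d_Y$ rows equal the Stage (1) output matrix of eq. (\ref{eq:constuctw2}), pad the remaining $d_2-d_Y$ rows with zeros, and choose the bias so that its first $d_Y$ coordinates are the Stage (1) output bias of eq. (\ref{eq:constuctb2}) shifted down by $\eta$ and its last $d_2-d_Y$ coordinates equal $-\eta$. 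The pre-activation $\tilde Y^{(2)}$ then equals the Stage (1) prediction (eqs. (\ref{eq:firstrow<})--(\ref{eq:otherrow})) shifted down by $\eta$ in the first $d_Y$ rows and set to $-\eta$ in the padding rows; since the Stage (1) prediction differs from $\tilde Y$ only by $O(\alpha)+O(\gamma)$, choosing $|\eta|$ large (as in eq. (\ref{eq:def_eta})) makes every entry of $\tilde Y^{(2)}$ positive, so the layer-$2$ activation collapses to multiplication by $s_+$.

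From layer $3$ to layer $L-1$ I would reuse the pass-through weights of Lemma \ref{Lemma:localmins2}, namely $\tfrac1{s_+}\big(\sum_{j=1}^{d_Y}E_{j,j}+\sum_{j=d_Y+1}^{d_i}E_{j,(d_Y+1)}\big)$ with zero bias, which cancel the factor $s_+$, preserve the first $d_Y$ coordinates, and replicate the strictly positive $(d_Y{+}1)$-th coordinate into the padding; the induction of Lemma \ref{Lemma:localmins2} then shows every $\tilde Y^{(i)}>0$, so each activation again acts as $s_+\times(\cdot)$. Finally I would take the affine output layer $\hat W_L=\tfrac1{s_+}[\,I_{d_Y}\ \mathbf 0\,]$, $\hat b_L=\eta\mathbf 1_{d_Y}$ exactly as in Lemma \ref{Lemma:localmins2}, which strips the factor $s_+$, keeps the top $d_Y$ rows, and adds back $\eta$, returning the network output to the Stage (1) prediction unchanged. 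Therefore this deep point attains the same empirical risk as the smaller-risk point of Theorem \ref{thm:L=1}, which is strictly below $f(\tilde W)$; since the Lemma \ref{Lemma:localmins2} minimizer has empirical risk exactly $f(\tilde W)$, and this holds uniformly in $\eta$, every minimizer in the constructed family is spurious.

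I expect the main obstacle to be the interface between layers $1$ and $2$, together with verifying that the $s_+$-cancellations compose correctly. Because the deep network inserts an activation after layer $2$, where Stage (1) had its activation-free output, I cannot simply copy the Stage (1) output layer: I must simultaneously reproduce the Stage (1) prediction \emph{and} guarantee strict positivity of the layer-$2$ pre-activations so the nonlinearity degenerates to a scalar. The shift-by-$\eta$ device does both, but one must check that a single large $|\eta|$ keeps all of layers $2,\dots,L-1$ positive and that the factors of $s_+$ introduced by each activation are exactly undone by the $\tfrac1{s_+}$ factors in the subsequent weights, so that the output emerges unperturbed. I would settle this positivity-and-cancellation chain by the same induction used in Lemma \ref{Lemma:localmins2}, which is the only genuinely delicate bookkeeping in the argument.
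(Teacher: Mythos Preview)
Your proposal is correct and follows essentially the same route as the paper: keep the Stage-(1) first layer, embed its output layer as layer $2$ with a large positive shift to force strict positivity of $\tilde Y^{(2)}$, pass through layers $3,\dots,L-1$ with $\tfrac{1}{s_+}$-scaled identity-type weights, and undo the shift in layer $L$. The only cosmetic differences are that the paper introduces a fresh shift constant $\lambda>0$ rather than reusing $-\eta$, and uses the simpler pass-through $\tfrac1{s_+}\sum_{j=1}^{d_Y}E_{j,j}$ for the intermediate layers (zeroing rather than propagating the padding rows).
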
 
	
	\begin{proof}
	
	We first construct the weight matrix and bias of the $i$-th layer as follows,
		\begin{gather*}
		\tilde{W}'_{1}=\tilde{W}_1,\text{ } 
		\tilde{b}'_{1}=\tilde{b}_1,\\
		\tilde{W}'_{2}=\left[\begin{matrix}
		\tilde{W}_2\\\mathbf{0}_{(d_2-d_Y)\times d_1}
		\end{matrix}\right],\text{ } \tilde{b}'_{2}=\lambda\mathbf{1}_{d_2}+\left[\begin{matrix}
		\tilde{b}_2\\{\mathbf{0}_{(d_2-d_Y)\times1}}\end{matrix}\right],\\
		\tilde{W}'_{i}=\frac{1}{s_{+}}\sum_{i=1}^{d_Y}E_{i,i},\text{ } \tilde{b}'_{i}=0\text{ }(i=3,4,...,L-1),
		\end{gather*}  
		and \begin{equation*}
		\tilde{W}'_{L}=\frac{1}{s_{+}}\sum_{i=1}^{d_Y} E_{i,i},\text{ }\tilde{b}'_{L}=-\lambda\mathbf{1}_{d_Y},
		\end{equation*}
		where $\tilde{W}_1$, $\tilde{W}_2$, $\tilde{b}_1$ and $\tilde{b}_2$ are defined by eqs. (\ref{eq:constuctw1}), (\ref{eq:constuctb1}), (\ref{eq:constuctw2}), and (\ref{eq:constuctb2}), respectively, and $\lambda$ is a sufficiently large positive real such that 
		\begin{equation}
		\label{def:lambda}
		\hat{Y}\left(\left[\tilde{W}_i\right]_{i=1}^2,\left[\tilde{b}_i\right]_{i=1}^2\right)+\lambda \mathbf{1}_{d_2}\mathbf{1}_n^T>\mathbf{0},
		\end{equation} 
		where $>$ is defined element-wise.
		
		We argue that $\left\{\left[\tilde{W}'_i\right]_{i=1}^L,\left[\tilde{b}'_i\right]_{i=1}^L\right\}$ corresponds to a smaller empirical risk than $f(\tilde{W})$ which is defined in Lemma \ref{Lemma:localmin}.
		
		First, Theorem \ref{thm:L=1} has proved that the point $\left\{\tilde{W}_1, \tilde{W}_2, \tilde{b}_1, \tilde{b}_2\right\}$ corresponds to a smaller empirical risk than $f(\tilde{W})$. 	
		
		We prove by induction that for any $i \in \{3,4,...,L-1\} $, 
		\begin{gather}
		\label{eq:i_layer_ge}
		\tilde{Y}^{(i)}\left(\left[\tilde{W}'_i\right]_{i=1}^L,\left[\tilde{b}'_i\right]_{i=1}^L\right) \ge \mathbf{0} \text{ , element-wise},\\
		\label{eq:i_layer_value}
		Y^{(i)}\left(\left[\tilde{W}'_i\right]_{i=1}^L,\left[\tilde{b}'_i\right]_{i=1}^L\right)= s_{+}\left[\begin{matrix}
		\hat{Y}\left(\left[\tilde{W}_i\right]_{i=1}^2,\left[\tilde{b}_i\right]_{i=1}^2\right)+\lambda\mathbf{1}_{d_Y}\mathbf{1}_n^T\\\\
		0_{(d_i-d_Y)\times n}
		\end{matrix}\right].
		\end{gather}
		
	Apparently the output of the first layer before the activation is
	\begin{align*}
	\tilde{Y}^{(1)}\left(\left[\tilde{W}'_i\right]_{i=1}^L,\left[\tilde{b}'_i\right]_{i=1}^L\right)=\tilde{W}'_1X+\tilde{b}'_1\mathbf{1}_n^T=\tilde{W}_1X+\tilde{b}_1\mathbf{1}_n^T=\tilde{Y}^{(1)}\left(\left[\tilde{W}_i\right]_{i=1}^2,\left[\tilde{b}_i\right]_{i=1}^2\right).
	\end{align*}
	Therefore, the output of the first layer after the activation is
	\begin{align*}
	Y^{(1)}\left(\left[\tilde{W}'_i\right]_{i=1}^L,\left[\tilde{b}'_i\right]_{i=1}^L\right)=&h_{s_{-},s_{+}}\left(\tilde{Y}^{(1)}\left(\left[\tilde{W}'_i\right]_{i=1}^L,\left[\tilde{b}'_i\right]_{i=1}^L\right)\right)
	\\=&h_{s_{-},s_{+}}\left(\tilde{Y}^{(1)}\left(\left[\tilde{W}_i\right]_{i=1}^L,\left[\tilde{b}_i\right]_{i=1}^L\right)\right)
	\\=&Y^{(1)}\left(\left[\tilde{W}_i\right]_{i=1}^2,\left[\tilde{b}_i\right]_{i=1}^2\right).
	\end{align*}
	Thus, the output of the second layer before the activation is 
	\begin{align*}
	\tilde{Y}^{(2)}\left(\left[\tilde{W}'_i\right]_{i=1}^L,\left[\tilde{b}'_i\right]_{i=1}^L\right)=&\tilde{W}'_2Y^{(1)}\left(\left[\tilde{W}'_i\right]_{i=1}^L,\left[\tilde{b}'_i\right]_{i=1}^L\right)+\tilde{b}'_2\mathbf{1}_n^T
	\\
	=&\left[\begin{matrix}
	\tilde{W}_2\\\mathbf{0}_{(d_2-d_Y)\times d_1}
	\end{matrix}\right]Y^{(1)}\left(\left[\tilde{W}'_i\right]_{i=1}^L,\left[\tilde{b}'_i\right]_{i=1}^L\right)+\left[\begin{matrix}
	\tilde{b}_2\\{\mathbf{0}_{(d_2-d_Y)\times 1}}\end{matrix}\right]\mathbf{1}_n^T
	\\&+\lambda \mathbf{1}_{d_2}\mathbf{1}_n^T
	\\
	=& \left[\begin{matrix}
	\hat{Y}\left(\left[\tilde{W}_i\right]_{i=1}^2,\left[\tilde{b}_i\right]_{i=1}^2\right)+\lambda\mathbf{1}_{d_Y}\mathbf{1}_n^T\\\\
	\lambda\mathbf{1}_{d_2-d_Y}\mathbf{1}_n^T
	\end{matrix}\right].
	\end{align*}
	
	Applying the definition of $\lambda$ (eq. (\ref{def:lambda})), 
	\begin{equation}
	\label{eq:2_layer_positive}
	\tilde{Y}^{(2)}\left(\left[\tilde{W}'_i\right]_{i=1}^L,\left[\tilde{b}'_i\right]_{i=1}^L\right) > \mathbf{0} \text{ , element-wise}.
	\end{equation}
		
		Therefore, the output of the second layer after the activation is
		\begin{align*}
		Y^{(2)}\left(\left[\tilde{W}'_i\right]_{i=1}^L,\left[\tilde{b}'_i\right]_{i=1}^L\right)&=h_{s_{-},s_{+}}\left(	\tilde{Y}^{(2)}\left(\left[\tilde{W}'_i\right]_{i=1}^L,\left[\tilde{b}'_i\right]_{i=1}^L\right)\right)\\
		&=s_{+}\left[\begin{matrix}
		\hat{Y}\left(\left[\tilde{W}_i\right]_{i=1}^2,\left[\tilde{b}_i\right]_{i=1}^2\right)+\lambda\mathbf{1}_{d_Y}\mathbf{1}_n^T\\\\
		\lambda\mathbf{1}_{d_2-d_Y}\mathbf{1}_n^T
		\end{matrix}\right].
		\end{align*}
		
		Meanwhile, the output of the third layer before the activation is $\tilde{Y}^{(3)}\left(\left[\tilde{W}'_i\right]_{i=1}^L,\left[\tilde{b}'_i\right]_{i=1}^L\right)$ can be calculated based on $Y^{(2)}\left(\left[\tilde{W}'_i\right]_{i=1}^L,\left[\tilde{b}'_i\right]_{i=1}^L\right)$:
		\begin{align*}
		\tilde{Y}^{(3)}\left(\left[\tilde{W}'_i\right]_{i=1}^L,\left[\tilde{b}'_i\right]_{i=1}^L\right)&=\tilde{W}'_3Y^{(2)}\left(\left[\tilde{W}'_i\right]_{i=1}^L,\left[\tilde{b}'_i\right]_{i=1}^L\right)+\tilde{b}'_3\mathbf{1}_n^T
		\\
		&=\frac{1}{s_{+}}\left(\sum_{i=1}^{d_Y}E_{i,i}\right)s_{+}\left[\begin{matrix}
		\hat{Y}\left(\left[\tilde{W}_i\right]_{i=1}^2,\left[\tilde{b}_i\right]_{i=1}^2\right)+\lambda\mathbf{1}_{d_Y}\mathbf{1}_n^T\\\\
		\lambda\mathbf{1}_{d_2-d_Y}\mathbf{1}_n^T
		\end{matrix}\right]
		\\
		&=\left[\begin{matrix}
		\hat{Y}\left(\left[\tilde{W}_i\right]_{i=1}^2,\left[\tilde{b}_i\right]_{i=1}^2\right)+\lambda\mathbf{1}_{d_Y}\mathbf{1}_n^T\\\\
		\mathbf{0}_{(d_3-d_Y)\times n}
		\end{matrix}\right].
		\end{align*}

	    Applying eq. (\ref{eq:2_layer_positive}),
	    \begin{equation}
	    \label{eq:3_layer}
	    \tilde{Y}^{(3)}\left(\left[\tilde{W}'_i\right]_{i=1}^L,\left[\tilde{b}'_i\right]_{i=1}^L\right) \ge \mathbf{0} \text{ , element-wise}.
	    \end{equation}	
	    Therefore, the output of the third layer after the activation is 
	     \begin{align*}
	    Y^{(3)}\left(\left[\tilde{W}'_i\right]_{i=1}^L,\left[\tilde{b}'_i\right]_{i=1}^L\right)&=h_{s_-,s_+}\left(\tilde{Y}^{(3)}\left(\left[\tilde{W}'_i\right]_{i=1}^L,\left[\tilde{b}'_i\right]_{i=1}^L\right)\right)
	   \\&=s_{+}\left(\tilde{Y}^{(3)}\left(\left[\tilde{W}'_i\right]_{i=1}^L,\left[\tilde{b}'_i\right]_{i=1}^L\right)\right)
	   \\&=s_{+}\left[\begin{matrix}
	   \hat{Y}\left(\left[\tilde{W}_i\right]_{i=1}^2,\left[\tilde{b}_i\right]_{i=1}^2\right)+\lambda\mathbf{1}_{d_Y}\mathbf{1}_n^T\\\\
	   \mathbf{0}_{(d_3-d_Y)\times n}
	   \end{matrix}\right].
	   \end{align*}
	   
		Suppose eqs. (\ref{eq:i_layer_ge}) and (\ref{eq:i_layer_value}) hold for $k$ ($3\le k\le L-2$), 
		when $k+1$,
		\begin{align*}
		\tilde{Y}^{(k+1)}\left(\left[\tilde{W}'_i\right]_{i=1}^L,\left[\tilde{b}'_i\right]_{i=1}^L\right)&=\tilde{W}'_{k+1}Y^{(k)}\left(\left[\tilde{W}'_i\right]_{i=1}^2,\left[\tilde{b}'_i\right]_{i=1}^2\right)+\tilde{b}'_{k+1}\mathbf{1}_n^T
		\\
		&=\frac{1}{s_{+}}\left(\sum_{i=1}^{d_Y}E_{i,i}\right)s_{+}\left[\begin{matrix}
		\hat{Y}\left(\left[\tilde{W}_i\right]_{i=1}^2,\left[\tilde{b}_i\right]_{i=1}^2\right)+\lambda\mathbf{1}_{d_Y}\mathbf{1}_n^T\\\\
		\mathbf{0}_{(d_{k}-d_Y)\times n}
		\end{matrix}\right]
		\\
		&=\left[\begin{matrix}
		\hat{Y}\left(\left[\tilde{W}_i\right]_{i=1}^2,\left[\tilde{b}_i\right]_{i=1}^2\right)+\lambda\mathbf{1}_{d_Y}\mathbf{1}_n^T\\\\
		\mathbf{0}_{(d_{k+1}-d_Y)\times n}
		\end{matrix}\right].
		\end{align*}
		
		Applying eq. (\ref{eq:3_layer}),
	    \begin{equation}
	    \label{eq:k+1_layer}
	    \tilde{Y}^{(k+1)}\left(\left[\tilde{W}'_i\right]_{i=1}^L,\left[\tilde{b}'_i\right]_{i=1}^L\right) \ge \mathbf{0} \text{ , element-wise}.
	    \end{equation}
	    
		
		Therefore, the output of the ($k+1$)-th layer after the activation is
		\begin{align*}
		Y^{(k+1)}\left(\left[\tilde{W}'_i\right]_{i=1}^L,\left[\tilde{b}'_i\right]_{i=1}^L\right)=&h_{s_-,s_+}\left(\tilde{Y}^{(k+1)}\left(\left[\tilde{W}'_i\right]_{i=1}^L,\left[\tilde{b}'_i\right]_{i=1}^L\right)\right)\\
		=&s_+\tilde{Y}^{(k+1)}\left(\left[\tilde{W}'_i\right]_{i=1}^L,\left[\tilde{b}'_i\right]_{i=1}^L\right)\\
		=&s_+\left[\begin{matrix}
		\hat{Y}\left(\left[\tilde{W}_i\right]_{i=1}^2,\left[\tilde{b}_i\right]_{i=1}^2\right)+\lambda\mathbf{1}_{d_Y}\mathbf{1}_n^T\\\\
		\mathbf{0}_{(d_{k+1}-d_Y)\times n}
		\end{matrix}\right].
		\end{align*}
		Therefore, eqs. (\ref{eq:i_layer_ge}) and (\ref{eq:i_layer_value}) hold for any $i \in \{3,4,...,L-1\} $.
		
		Finally, the output of the network is 
		
		\begin{align*}
		\hat Y\left(\left[\tilde{W}'_i\right]_{i=1}^L,\left[\tilde{b}'_i\right]_{i=1}^L\right) = & Y^{(L)}\left(\left[\tilde{W}'_i\right]_{i=1}^L,\left[\tilde{b}'_i\right]_{i=1}^L\right)\\
		=&\tilde{W}'_{L}Y^{(L-1)}\left(\left[\tilde{W}'_i\right]_{i=1}^L,\left[\tilde{b}'_i\right]_{i=1}^L\right)+\tilde{b}'_L\mathbf{1}_n^T\\
		=&\left(\frac{1}{s_+}\sum_{i=1}^{d_Y}E_{i,i}\right) s_{+}\left[\begin{matrix}
		\hat{Y}\left(\left[\tilde{W}_i\right]_{i=1}^2,\left[\tilde{b}_i\right]_{i=1}^2\right)+\lambda\mathbf{1}_{d_Y}\mathbf{1}_n^T\\\\
		0_{(d_{L-1}-d_Y)\times n}
		\end{matrix}\right]\\
		&-\lambda\mathbf{1}_{d_Y}\mathbf{1}_n^T
		\\
		=&\hat{Y}\left(\left[\tilde{W}_i\right]_{i=1}^2,\left[\tilde{b}_i\right]_{i=1}^2\right).
		\end{align*}

		Applying Theorem \ref{thm:L=1}, we have
		\begin{equation*}
		\hat{\mathcal{R}}\left(\left[\tilde{W}'_i\right]_{i=1}^{L},\left[\tilde{W}'_i\right]_{i=1}^{L}\right)=\hat{\mathcal{R}}\left(\left[\tilde{W}_i\right]_{i=1}^{2},\left[\tilde{b}_i\right]_{i=1}^{2}\right)<f(\tilde{W}).
		\end{equation*}
		
		
		
		
		The proof is completed.
	\end{proof}
	
	\subsection{Stage (3)}
	\label{sec:spurious_stage3}
	
	Finally, we prove Theorem \ref{thm:spurious}.
	
	
	This stage also follows the two-step strategy.
	
	\textbf{Step (a). Construct local minima of the loss surface.}	
	
	\begin{lemma}
		\label{Lemma:localmins3}
	Suppose $t$ is a non-differentiable point for the piece-wise linear activation function $h$ and $\sigma$ is a constant such that the activation $h$ is differentiable in the intervals $(t-\sigma,t)$ and $(t,t+\sigma)$. Assume that $M$ is a sufficiently large  positive real such that 
		\begin{equation}
		\label{eq:restrictM}
		\frac{1}{M}\left\Vert\hat{W}'_1X+\hat{b}'_1 \mathbf{1}_n^T\right\Vert_F<\sigma.
		\end{equation}
	Let $\alpha_i$ be any positive real such that
		\begin{gather}
		\alpha_1=1 \nonumber\\
		\label{eq:alpha_i}
		0<\alpha_i<1, \text{ }i=2,\cdots,L-1.
		\end{gather}
		Then, under Assumption \ref{assum:width}, any neural network with piecewise linear activations and $L-1$ hidden layers has local minima at
	 \begin{gather*}
		\hat{W}''_{1}=\frac{1}{M}\hat{W}'_1,\text{ } \hat{b}''_{1}=\frac{1}{M}\hat{b}'_1+t\mathbf{1}_{d_1},\\
		\hat{W}''_{i}=\alpha_i\hat{W}'_i,\text{ }
		\hat{b}''_{i}=-\alpha_i\hat{W}'_i h(t)\mathbf{1}_{d_{i-1}}+t\mathbf{1}_{d_i}+\frac{\Pi_{j=2}^i \alpha_j}{M}\hat{b}'_i,\text{ }(i=2,3,...,L-1),
		\end{gather*} 
		and 
		\begin{equation*}
		\hat{W}''_{L}=\frac{1}{\Pi_{j=2}^L \alpha_j}M\hat{W}'_{L},\text{ }\hat{b}''_{L}=- \frac{M}{\prod_{j = 2}^{L-1} \alpha_j} \hat W_L' h(t)\mathbf{1}_{d_{L-1}}+\hat{b}'_L
		\end{equation*}
		where $\left\{\left[\hat{W}'_i\right]_{i=1}^{L},\left[\hat{b}'_i\right]_{i=1}^{L}\right\}$ is the local minimizer constructed in Lemma \ref{Lemma:localmins2}.
		Also, the loss is continuously differentiable, whose derivative with respect to the prediction $\hat{Y}_i$ may equal to $0$ only when the prediction $\hat{Y}_i$ and label $Y_i$ are the same.
	\end{lemma}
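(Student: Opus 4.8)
The plan is to reduce this general piecewise-linear case to the two-piece construction of Lemma~\ref{Lemma:localmins2} by forcing every pre-activation to land strictly inside a single linear piece of $h$ adjacent to $t$. At the turning point $t$ the two adjacent slopes differ, so at least one is nonzero; arguing exactly as at the start of Stage~(1), I may assume without loss of generality that $h$ has nonzero slope $s_+$ on $(t,t+\sigma)$, i.e. $h(x)=h(t)+s_+(x-t)$ there, and I take $\{[\hat W'_i]_{i=1}^L,[\hat b'_i]_{i=1}^L\}$ to be the Lemma~\ref{Lemma:localmins2} minimizer built with this same $s_+$. The factor $1/M$ and shift $t\mathbf 1$ in $\hat W''_1,\hat b''_1$ compress the first pre-activation into $(t,t+\sigma)$, while the per-layer factors $\alpha_i$ and the biases $-\alpha_i\hat W'_i h(t)\mathbf 1+t\mathbf 1+\tfrac{\Pi_{j=2}^i\alpha_j}{M}\hat b'_i$ are tuned so that, layer by layer, the constant $h(t)$ created by the shift is cancelled and only a rescaled copy of the Stage-(2) signal survives.

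Concretely, I would establish by induction on $i\in[1:L-1]$ the invariant
\begin{equation*}
\tilde Y^{(i)}\left(\left[\hat W''_i\right]_{i=1}^L,\left[\hat b''_i\right]_{i=1}^L\right)=t\mathbf 1_{d_i}\mathbf 1_n^T+\frac{\Pi_{j=2}^i\alpha_j}{M}\,\tilde Y^{\prime(i)},
\end{equation*}
where $\tilde Y^{\prime(i)}$ denotes the Stage-(2) pre-activation of Lemma~\ref{Lemma:localmins2}. The case $i=1$ is immediate from $\hat W''_1,\hat b''_1$. For the step I substitute the invariant into $\tilde Y^{(i+1)}=\hat W''_{i+1}h(\tilde Y^{(i)})+\hat b''_{i+1}\mathbf 1_n^T$; linearity of $h$ on $(t,t+\sigma)$ turns $h(\tilde Y^{(i)})$ into $h(t)\mathbf 1_{d_i}\mathbf 1_n^T+\tfrac{\Pi_{j=2}^i\alpha_j}{M}Y^{\prime(i)}$, the term $\alpha_{i+1}\hat W'_{i+1}h(t)\mathbf 1$ coming from $\hat W''_{i+1}$ cancels the matching term hidden in $\hat b''_{i+1}$, and the scales telescope via $\alpha_{i+1}\Pi_{j=2}^i\alpha_j=\Pi_{j=2}^{i+1}\alpha_j$, leaving exactly $t\mathbf 1\mathbf 1^T+\tfrac{\Pi_{j=2}^{i+1}\alpha_j}{M}\tilde Y^{\prime(i+1)}$. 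To license the use of linearity I must, in the same induction, check the containment $\tilde Y^{(i)}\in(t,t+\sigma)$ element-wise: this follows from the strict positivity of $\tilde Y^{\prime(i)}$ (eq.~(\ref{eq:positive_i_layer_before})) together with the fact that the largest entry of $\tilde Y^{\prime(i)}$ is the same at every layer and bounded by $\|\tilde Y^{\prime(1)}\|_F$, so the single constraint (\ref{eq:restrictM}) on $M$ at layer~$1$, combined with $0<\Pi_{j=2}^i\alpha_j\le1$, already gives $0<\tfrac{\Pi_{j=2}^i\alpha_j}{M}\tilde Y^{\prime(i)}<\sigma$ entrywise for all $i$.

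Pushing the invariant at $i=L-1$ through the output layer, the analogous cancellation between $\hat W''_L=\tfrac{M}{\Pi_{j=2}^{L-1}\alpha_j}\hat W'_L$ and the $h(t)$-term of $\hat b''_L$ collapses the rescaling, giving $\hat Y(\{[\hat W''_i],[\hat b''_i]\})=\hat Y(\{[\hat W'_i],[\hat b'_i]\})=\tilde W\tilde X$, so the empirical risk at the constructed point is exactly $f(\tilde W)$. For local minimality I would then perturb all parameters by an arbitrarily small $\delta$: since every entry of $\tilde Y^{(i)}$ sits strictly inside the \emph{open} interval $(t,t+\sigma)$, a small enough perturbation keeps all pre-activations in this interval, $h$ remains linear, and — exactly as in Lemma~\ref{Lemma:localmins2} — the perturbed output is $\left[\begin{matrix}M_1&M_2\end{matrix}\right]\tilde X$ for matrices depending continuously on $\delta$ with $\left[\begin{matrix}M_1&M_2\end{matrix}\right]\to\tilde W$ as $\delta\to0$. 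Local optimality of $\tilde W$ for $f$ then yields $\hat{\mathcal R}\ge f(\tilde W)$ throughout a neighborhood, so the point is a local minimizer.

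I expect the crux to be precisely the containment claim underlying the induction: that a constraint on $M$ imposed \emph{only} at the first layer, together with $\alpha_i\le1$, keeps every pre-activation inside $(t,t+\sigma)$ at \emph{every} depth. This hinges on the special structure of the Stage-(2) minimizer, whose pre-activations $\tilde Y^{\prime(i)}$ share identical nontrivial entries across layers (they are $\tilde Y-\eta\mathbf 1\mathbf 1^T$ padded with additional rows equal to $-\eta$), so the layer-$1$ Frobenius bound dominates the maximal entry at all deeper layers; getting this uniformity, together with the exact telescoping of the $h(t)$ shifts and the $\alpha_j$ factors, is the delicate bookkeeping the proof must carry out.
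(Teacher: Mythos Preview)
Your proposal is correct and follows essentially the same approach as the paper: both proceed by induction on the layer index to show that $\tilde Y^{(i)}$ lies entrywise in $(t,t+\sigma)$ and equals $t\mathbf 1\mathbf 1^T$ plus a scaled copy of the Stage-(2) pre-activation, then conclude that the network output agrees with the Stage-(2) output $\tilde W\tilde X$ and finish with the same perturbation argument. In particular, your identification of the crux --- that the single layer-$1$ Frobenius bound (\ref{eq:restrictM}) together with $\Pi_{j=2}^i\alpha_j\le 1$ controls all deeper layers because the entries of the Stage-(2) pre-activations $\tilde Y^{\prime(i)}$ are literally drawn from those of $\tilde Y^{\prime(1)}$ --- is exactly the observation the paper uses.
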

	
	\begin{proof}
		
		Define $s_{-}=\lim\limits_{\theta\to0^{-}} h'(\theta)$ and $s_{+}=\lim\limits_{\theta\to0^{+}} h'(\theta)$.
		
		We then prove by induction that for all $i \in[1:L-1]$, all components of the $i$-th layer output before the activation  $\tilde{Y}^{(i)}\left(\left[\hat{W}''_i\right]_{i=1}^L,\left[\hat{b}''_i\right]_{i=1}^L\right)$ are in interval ($t,t+\sigma$), and 
		\begin{equation*}
		Y^{(i)}\left(\left[\hat{W}''_i\right]_{i=1}^L,\left[\hat{b}''_i\right]_{i=1}^L\right)=h(t)\mathbf{1}_{d_i}\mathbf{1}_{n}^T+\frac{\Pi_{j=1}^i\alpha_j}{M}Y^{(i)}\left(\left[\hat{W}'_i\right]_{i=1}^{L},\left[\hat{b}'_i\right]_{i=1}^{L}\right).
		\end{equation*}
		
		The first layer output before the activation is,
		\begin{align}
		\tilde{Y}^{(1)}\left(\left[\hat{W}''_i\right]_{i=1}^L,\left[\hat{b}''_i\right]_{i=1}^L\right)=\hat{W}''_1 X +\hat{b}''_1\mathbf{1}_n^T =\frac{1}{M}\hat{W}'_1 X +\frac{1}{M}\hat{b}'_1\mathbf{1}_n^T+t\mathbf{1}_{d_1}\mathbf{1}_n^T \label{eq:Y^{(1)}}.
		\end{align}
		We proved in Lemma \ref{Lemma:localmins2} that $\hat{W}'_1 X+\hat{b}'_1 \mathbf{1}_n^T$ is positive (element-wise).
		Since the Frobenius norm of a matrix is no smaller than any component's absolute value, applying eq. (\ref{eq:restrictM}), we have that for all $i \in [1,d_1]$ and $j \in [1:n]$,
		\begin{equation}
		0<\frac{1}{M}\left(\hat{W}'_1 X+\hat{b}'_1\mathbf{1}_n^T\right)_{ij}<\sigma.\label{eq:differentiable}
		\end{equation}
		Therefore, $\left(\frac{1}{M}\left(\hat{W}'_1 X+\hat{b}'_1\mathbf{1}_n^T\right)_{ij}+t\right) \in (t,t+\sigma)$. 
		So,
		\begin{align*}
		Y^{(1)}\left(\left[\hat{W}''_i\right]_{i=1}^L,\left[\hat{b}''_i\right]_{i=1}^L\right)=&h\left(\tilde{Y}^{(1)}\left(\left[\hat{W}''_i\right]_{i=1}^L,\left[\hat{b}''_i\right]_{i=1}^L\right)\right)
		\\\overset{(*)}{=}&h_{s_-,s_+}\left(\frac{1}{M}\hat{W}'_1X+\frac{1}{M}\hat{b}'_1\mathbf{1}_n^T\right)+h(t)\mathbf{1}_{d_1}\mathbf{1}_n^T\\
		=&\frac{1}{M}Y^{(1)}\left(\left[\hat{W}'_i\right]_{i=1}^{L},\left[\hat{b}'_i\right]_{i=1}^{L}\right)+h(t)\mathbf{1}_{d_1}\mathbf{1}_n^T,
		\end{align*}
		where eq.($*$) is because for any $x\in (t-\sigma,t+\sigma)$,
		\begin{equation}
		\label{eq:linear property}
		h(x)=h(t)+h_{s_-,s_+}(x-t).
		\end{equation}
		
		Suppose the above argument holds for $k$ ($1\le k\le L-2$). Then 
		\begin{align*}
		&\tilde{Y}^{(k+1)}\left(\left[\hat{W}''_i\right]_{i=1}^L,\left[\hat{b}''_i\right]_{i=1}^L\right)
		\\=&\hat{W}''_{k+1} Y^{(k)}\left(\left[\hat{W}''_i\right]_{i=1}^L,\left[\hat{b}''_i\right]_{i=1}^L\right)+\hat{b}''_{k+1}\mathbf{1}_n^T\\
		=&(-\alpha_{k+1}\hat{W}'_{k+1}h(t)\mathbf{1}_{d_{k+1}}\\
		&+t\mathbf{1}_{d_{k+1}}+\frac{\Pi_{i=1}^{k+1} \alpha_i}{M}\hat{b}'_{k+1})\mathbf{1}_n^T
		+\alpha_{k+1}\hat{W}'_{k+1} Y^{(k)}\left(\left[\hat{W}''_i\right]_{i=1}^L,\left[\hat{b}''_i\right]_{i=1}^L\right)
		\\
		=&\alpha_{k+1}\hat{W}'_{k+1}\left(h(t)\mathbf{1}_{d_k}\mathbf{1}_{n}^T+\frac{\Pi_{i=1}^k\alpha_i}{M}Y^{(k)}\left(\left[\hat{W}'_i\right]_{i=1}^{L},\left[\hat{b}'_i\right]_{i=1}^{L}\right)\right)
		\\
		&+\left(-\alpha_{k+1}\hat{W}'_{k+1}h(t)\mathbf{1}_{d_{k}}+t\mathbf{1}_{d_{k+1}}+\frac{\Pi_{i=1}^{k+1} \alpha_i}{M}\hat{b}'_{k+1}\right)\mathbf{1}_n^T 
		\\
		=&\frac{\Pi_{i=1}^{k+1}\alpha_i}{M}\hat{W}'_{k+1}Y^{(k)}\left(\left[\hat{W}'_i\right]_{i=1}^{L},\left[\hat{b}'_i\right]_{i=1}^{L}\right)+\frac{\Pi_{i=1}^{k+1} \alpha_i}{M}\hat{b}'_{k+1}\mathbf{1}_n^T+t\mathbf{1}_{d_{k+1}}\mathbf{1}_n^T 
		\\
		=&t\mathbf{1}_{d_{k+1}}\mathbf{1}_n^T+\frac{\Pi_{i=1}^{k+1} \alpha_i}{M}\tilde{Y}^{(k+1)}\left(\left[\hat{W}'_i\right]_{i=1}^{L},\left[\hat{b}'_i\right]_{i=1}^{L}\right).
		\end{align*}
		
		Lemma \ref{Lemma:localmins2} has proved that all components of 
		$\tilde{Y}^{(k+1)}\left(\left[\hat{W}'_i\right]_{i=1}^{L},\left[\hat{b}'_i\right]_{i=1}^{L}\right)$ are contained in $\tilde{Y}^{(1)}\left(\left[\hat{W}'_i\right]_{i=1}^{L},\left[\hat{b}'_i\right]_{i=1}^{L}\right)$.
		Combining
		\begin{equation*}
		t\mathbf{1}_{d_{1}}\mathbf{1}_n^T<t\mathbf{1}_{d_{1}}\mathbf{1}_n^T+\frac{1}{M}\tilde{Y}^{(1)}\left(\left[\hat{W}'_i\right]_{i=1}^{L},\left[\hat{b}'_i\right]_{i=1}^{L}\right)<(t+\sigma)\mathbf{1}_{d_{1}}\mathbf{1}_n^T,
		\end{equation*}
		we have
		\begin{equation*}
		t\mathbf{1}_{d_{k+1}}\mathbf{1}_n^T<t\mathbf{1}_{d_{k+1}}\mathbf{1}_n^T+\frac{\Pi_{i=1}^{k+1} \alpha_i}{M}\tilde{Y}^{(k+1)}\left(\left[\hat{W}'_i\right]_{i=1}^{L},\left[\hat{b}'_i\right]_{i=1}^{L}\right)\overset{(*)}{<}(t+\sigma)\mathbf{1}_{d_{k+1}}\mathbf{1}_n^T.
		\end{equation*}
		
		Here $<$ are all element-wise, and inequality ($*$) comes from the property of $\alpha_i$ (eq. (\ref{eq:alpha_i})).
		
		Furthermore, the $(k+1)$-th layer output after the activation is
		\begin{align*}
		Y^{(k+1)}\left(\left[\hat{W}''_i\right]_{i=1}^L,\left[\hat{b}''_i\right]_{i=1}^L\right)=&h\left(\tilde{Y}^{(k+1)}\left(\left[\hat{W}''_i\right]_{i=1}^L,\left[\hat{b}''_i\right]_{i=1}^L\right)\right)
		\\=&h\left(t\mathbf{1}_{d_{k+1}}\mathbf{1}_n^T+\frac{1}{M}\tilde{Y}^{(k+1)}\left(\left[\hat{W}'_i\right]_{i=1}^{L},\left[\hat{b}'_i\right]_{i=1}^{L}\right)\right)
		\\\overset{(*)}{=}&
		h(t)\mathbf{1}_{d_{k+1}}\mathbf{1}_n^T+h_{s_-,s_+}\left(\frac{\Pi_{i=1}^{k+1} \alpha_i}{M}\tilde{Y}^{(k+1)}\left(\left[\hat{W}'_i\right]_{i=1}^{L},\left[\hat{b}'_i\right]_{i=1}^{L}\right)\right)
		\\=&h(t)\mathbf{1}_{d_{k+1}}\mathbf{1}_n^T+\frac{\Pi_{i=1}^{k+1} \alpha_i}{M}h_{s_-,s_+}\left(\tilde{Y}^{(k+1)}\left(\left[\hat{W}'_i\right]_{i=1}^{L},\left[\hat{b}'_i\right]_{i=1}^{L}\right)\right)
		\\=&h(t)\mathbf{1}_{d_{k+1}}\mathbf{1}_n^T+\frac{\Pi_{i=1}^{k+1} \alpha_i}{M} Y^{(k+1)}\left(\left[\hat{W}'_i\right]_{i=1}^{L},\left[\hat{b}'_i\right]_{i=1}^{L}\right),
		\end{align*}
		 where eq. ($*$) is because of eq. (\ref{eq:linear property}). The above argument is proved for any index $k \in \{1, \ldots, L-1 \}$.
		
		Therefore, the output of the network is
		\begin{align*}
		&Y^{(L)}\left(\left[\hat{W}''_i\right]_{i=1}^L,\left[\hat{b}''_i\right]_{i=1}^L\right)\\
		=&\hat{W}''_L Y^{(L-1)}\left(\left[\hat{W}''_i\right]_{i=1}^L,\left[\hat{b}''_i\right]_{i=1}^L\right)+\hat{b}''_L\mathbf{1}_n^T
		\\=&\frac{M}{\Pi_{i=1}^{L-1}\alpha_i}\hat{W}'_{L}\left(h(t)\mathbf{1}_{d_{L-1}}\mathbf{1}_n^T+\frac{\Pi_{i=1}^{L-1} \alpha_i}{M} Y^{(L-1)}\left(\left[\hat{W}'_i\right]_{i=1}^{L},\left[\hat{b}'_i\right]_{i=1}^{L}\right)\right)
		\\&+\left(-\frac{M}{\Pi_{i=1}^{L-1}\alpha_i}\hat{W}'_L h(t)\mathbf{1}_{d_{L-1}}+\hat{b}'_L\right)\mathbf{1}_n^T
		\\
		=&\hat{W}_L'Y^{(L-1)}\left(\left[\hat{W}'_i\right]_{i=1}^{L},\left[\hat{b}'_i\right]_{i=1}^{L}\right)+\hat{b}_L'\mathbf{1}_n^T
		\\
		=&Y^{(L)}\left(\left[\hat{W}'_i\right]_{i=1}^{L},\left[\hat{b}'_i\right]_{i=1}^{L}\right).
		\end{align*}
		
		Therefore,
		\begin{equation*}
		\hat{\mathcal{R}}\left(\left[\hat{W}''_i\right]_{i=1}^L,\left[\hat{b}''_i\right]_{i=1}^L\right)=\hat{\mathcal{R}}\left(\left[\hat{W}'_i\right]_{i=1}^L,\left[\hat{b}'_i\right]_{i=1}^L\right)=f(\tilde W).
		\end{equation*}
		
		We then introduce some small disturbances  $\left\{\left[\delta''_{Wi}\right]_{i=1}^{L},\left[\delta''_{bi}\right]_{i=1}^{L}\right\}$ into $\left\{\left[\hat{W}''_i\right]_{i=1}^L,\left[\hat{b}''_i\right]_{i=1}^L\right\}$ in order to check the local optimality.
		
		Since all comonents of $Y^{(i)}$ are in interval $(t,t+\sigma)$, the activations in every hidden layers is realized at linear parts.  Therefore, the output of network 
		is 
		\begin{align*}
		&\hat{Y}\left(\left[\hat{W}''_i+\delta''_{Wi}\right]_{i=1}^{L},\left[\hat{b}''_i+\delta''_{bi}\right]_{i=1}^{L}\right)
		\\=&\left(\hat{W}''_L+\delta_{WL}''\right)h\left(\cdots h\left(\left(\hat{W}''_1+\delta_{W1}''\right)X+\left(\hat{b}''_1+\delta_{b1}''\right)\mathbf{1}_n^T\right)\cdots \right)+\left(\hat{b}_L''+\delta''_{bL}\right)\mathbf{1}_n^T
		\\=&\left(\hat{W}''_L+\delta_{WL}''\right)s_{+}\left(\cdots s_{+}\left(\left(\hat{W}''_1+\delta_{W1}''\right)X+\left(\hat{b}''_1+\delta_{b1}''\right)\mathbf{1}_n^T\right)+f(t)\mathbf{1}_{d_1}\mathbf{1}_n^T\cdots \right)
		\\&+f(t)\mathbf{1}_{d_L}\mathbf{1}_n^T+\left(\hat{b}_L''+\delta''_{bL}\right)\mathbf{1}_n^T
		\\=&M_1 X+M_2\mathbf{1}_n^T
		\\=&\left[\begin{matrix}
		M_1& M_2
		\end{matrix}\right]\left[\begin{matrix}
		X\\\mathbf{1}_n^T
		\end{matrix}\right].
		\end{align*}
		
		Similar to Lemma \ref{Lemma:localmins2}, 
		$\left[\begin{matrix}
		M_1& M_2
		\end{matrix}\right]$ approaches $\tilde{W}$ as disturbances $\left\{\left[\delta_{Wi}\right]_{i=1}^L,\left[\delta_{bi}\right]_{i=1}^L\right\}$ approach $\mathbf{0}$ (element-wise). Combining that $\tilde{W}$ is a local minimizer of $f(W)$, we have
		\begin{equation*}
		\hat{\mathcal{R}}\left(\left[\hat{W}''_i+\delta_{Wi}''\right]_{i=1}^L,\left[\hat{b}''_i+\delta_{bi}''\right]_{i=1}^L\right)=f\left(\left[\begin{matrix}
		M_1& M_2
		\end{matrix}\right]\right)\ge f(\tilde W)=\hat{\mathcal{R}}\left(\left[\hat{W}''_i\right]_{i=1}^L,\left[\hat{b}''_i\right]_{i=1}^L\right).
		\end{equation*}
		
		
		The proof is completed.
		
	\end{proof}
	
	
	\textbf{Step (b). Prove the constructed local minima are spurious.}
	
	\begin{proof}[Proof of Theorem \ref{thm:spurious}]
	Without loss of generality, we assume that all activations are the same.
	
		Let $t$ be a non-differentiable point of the piece-wise linear activation function $h$ with 
		\begin{gather*}
		s_{-}=\lim\limits_{\theta\to0^{-}} h'(\theta),
		\\ s_{+}=\lim\limits_{\theta\to0^{+}}h'(\theta).
		\end{gather*}
		
		Let $\sigma$ be a constant such that $h$ is linear in interval $(t-\sigma,t)$ and interval $(t,t+\sigma)$. 
		
		Then construct that
		\begin{gather*}
		\tilde{W}''_{1}=\frac{1}{M}\tilde{W}'_{1},\text{ }  \tilde{b}''_{1}=\frac{1}{M}\tilde{b}'_{1}+t\mathbf{1}_{d_1},\\ \tilde{W}''_{2}=\frac{1}{\tilde{M}}\tilde{W}'_2,\text{ } \tilde{b}''_{2}=t\mathbf{1}_{d_2}-\frac{1}{\tilde{M}}h(t)\tilde{W}'_2\mathbf{1}_{d_2}+\frac{1}{M\tilde{M}}\tilde{b}_2',\\
		\tilde{W}''_{i}=\tilde{W}'_{i},\text{ } \tilde{b}''_{i}=-\tilde{W}'_{i}h(t)\mathbf{1}_{d_{i-1}}+t\mathbf{1}_{d_i}+\frac{1}{M\tilde{M}}\tilde{b}'_{i},\text{ }(i=3,4,...,L-1)\\
		\end{gather*}  and 
		\begin{equation*}
		\tilde{W}''_{L}=M\tilde{M}\tilde{W}_L', \text{ }\tilde{b}''_{L}=\tilde{b}_L'-M\tilde{M}\tilde{W}_L'h(t)\mathbf{1}_{L-1},
		\end{equation*} 
		
		where 
		$\left\{\left[\tilde{W}'_i\right]_{i=1}^L,\left[\tilde{b}'_i\right]_{i=1}^L\right\}$ are constructed in Theorem \ref{them:infinite_one}, $M$ is a large enough positive real such that		\begin{equation}
		\label{eq:restrictMsmaller}
		\frac{1}{M}\left\Vert\tilde{W}'_1X+\tilde{b}'_1 \mathbf{1}_n^T\right\Vert_F<\sigma,
		\end{equation}
		and $\tilde{M}$ a large enough positive real such that
		
		\begin{equation}\label{eq:constraintilM}
		\frac{1}{\tilde{M}}\left\Vert\frac{1}{M}\tilde{Y}^{(2)}\left(\left[\tilde{W}'_i\right]_{i=1}^L,\left[\tilde{b}'_i\right]_{i=1}^L\right)\right\Vert_F<\sigma.
		\end{equation}
		
		Then, we prove by induction that for any $i\in[2:L-1]$, all components of $\tilde{Y}^{(i)}\left(\left[\tilde{W}''_i\right]_{i=1}^L,\left[\tilde{b}''_i\right]_{i=1}^L\right)$ are in interval $(t,t+\delta)$, and
		\begin{equation*}
		Y^{(i)}\left(\left[\tilde{W}''_i\right]_{i=1}^L,\left[\tilde{b}''_i\right]_{i=1}^L\right)=h(t)\mathbf{1}_{d_i}\mathbf{1}_{n}^T+\frac{1}{\tilde{M}M}Y^{(i)}\left(\left[\tilde{W}'_i\right]_{i=1}^L,\left[\tilde{b}'_i\right]_{i=1}^L\right).
		\end{equation*}
		
		First, 
		\begin{align}
		 \tilde{Y}^{(1)}\left(\left[\tilde{W}''_i\right]_{i=1}^L,\left[\tilde{b}''_i\right]_{i=1}^L\right)&=\tilde{W}''_1X+\tilde{b}''_1\mathbf{1}_n^T=\frac{1}{M}(\tilde{W}'_1X+\tilde{b}'_1\mathbf{1}_n^T)+t\mathbf{1}^T_{d_{1}}\mathbf{1}_n^T.	\label{eq:smallerY^{(1)}}
		\end{align}
		
		For any $i \in [1:d_1]$ and $j \in [1:n]$, eq. (\ref{eq:restrictMsmaller}) implies
		\begin{equation*}
		\left\vert\left(\frac{1}{M}(\tilde{W}'_1X+\tilde{b}'_1\mathbf{1}_n^T)\right)_{ij}\right\vert\le \frac{1}{M}\left\Vert\tilde{W}'_1X+\tilde{b}'_1 \mathbf{1}_n^T\right\Vert_F< \sigma.
		\end{equation*}
		Thus,
		\begin{equation}
		\label{eq:range_of_output}
		\left(\frac{1}{M}(\tilde{W}'_1X+\tilde{b}'_1\mathbf{1}_n^T)+t\mathbf{1}^T_{d_{1}}\mathbf{1}_n^T\right)_{ij}\in(t-\sigma,t+\sigma).
		\end{equation} 
		
		Therefore,  the output of the first layer after the activation is 
		\begin{align*}
		Y^{(1)}\left(\left[\tilde{W}''_i\right]_{i=1}^L,\left[\tilde{b}''_i\right]_{i=1}^L\right)&=h\left(\tilde{Y}^{(1)}\left(\left[\tilde{W}''_i\right]_{i=1}^L,\left[\tilde{b}''_i\right]_{i=1}^L\right)\right)
		\\&=h\left(\frac{1}{M}(\tilde{W}'_1X+\tilde{b}'_1\mathbf{1}_n^T)+t\mathbf{1}_{d_{1}}\mathbf{1}_n^T\right)\\
		&\overset{(*)}{=}h(t)\mathbf{1}_{d_{1}}\mathbf{1}_n^T+ h_{s_{-},s_{+}}\left(\frac{1}{M}\left(\tilde{W}'_1X+\tilde{b}'_1\right)\right)
		\\&=h(t)\mathbf{1}_{d_{1}}\mathbf{1}_n^T+ \frac{1}{M}h_{s_{-},s_{+}}\left(\left(\tilde{W}'_1X+\tilde{b}'_1\right)\right)
		\\&=h(t)\mathbf{1}_{d_{1}}\mathbf{1}_n^T+ \frac{1}{M}Y^{(1)}\left(\left[\tilde{W}'_i\right]_{i=1}^L,\left[\tilde{b}'_i\right]_{i=1}^L\right),
		\end{align*}
		where eq. $(*)$ is from 
		eq. (\ref{eq:linear property}) for any $x\in (t-\delta,t+\delta)$.

		Also,
		\begin{align*}
		\tilde{Y}^{(2)}\left(\left[\tilde{W}''_i\right]_{i=1}^L,\left[\tilde{b}''_i\right]_{i=1}^L\right)=&\tilde{W}''_2 Y^{(1)}\left(\left[\tilde{W}''_i\right]_{i=1}^L,\left[\tilde{b}''_i\right]_{i=1}^L\right)+\tilde{b}''_2\mathbf{1}_n^T
		\\
		=&\frac{1}{\tilde{M}}\left(\tilde{W}_2'\right)\left(h(t)\mathbf{1}_{d_{1}}\mathbf{1}_n^T+\frac{1}{M}Y^{(1)}\left(\left[\tilde{W}'_i\right]_{i=1}^L,\left[\tilde{b}'_i\right]_{i=1}^L\right)\right)\\
		&+ t\mathbf{1}_{d_2}\mathbf{1}_n^T - \frac{1}{\tilde{M}}h(t)\tilde{W}'_2\mathbf{1}_{d_1}\mathbf{1}_n^T+\frac{1}{M\tilde{M}}\tilde{b}_2'\mathbf{1}_n^T\\
		=&\frac{1}{\tilde{M}M}\tilde{W}_2'Y^{(1)}\left(\left[\tilde{W}'_i\right]_{i=1}^L,\left[\tilde{b}'_i\right]_{i=1}^L\right)+\frac{1}{M\tilde{M}}\tilde{b}_2'\mathbf{1}_n^T+t\mathbf{1}_{d_2}\mathbf{1}_n^T
		\\
		=&\frac{1}{M\tilde{M}}   \tilde{Y}^{(2)}\left(\left[\tilde{W}'_i\right]_{i=1}^L,\left[\tilde{b}'_i\right]_{i=1}^L\right)+t\mathbf{1}_{d_2}\mathbf{1}_n^T.
		\end{align*}
		
		Recall in Theorem \ref{them:infinite_one} we prove all components of  $\tilde{Y}^{(2)}\left(\left[\tilde{W}'_i\right]_{i=1}^L,\left[\tilde{b}'_i\right]_{i=1}^L\right)$ are positive. Combining the definition of $\tilde{M}$ (eq. (\ref{eq:constraintilM})), we have
		\begin{align*}
		t\mathbf{1}_{d_2}\mathbf{1}_n^T< & \tilde{Y}^{(2)}\left(\left[\tilde{W}''_i\right]_{i=1}^L,\left[\tilde{b}''_i\right]_{i=1}^L\right) \nonumber\\
		= &\frac{1}{\tilde{M}M}\tilde{Y}^{(2)}\left(\left[\tilde{W}'_i\right]_{i=1}^L,\left[\tilde{b}'_i\right]_{i=1}^L\right)+t\mathbf{1}_{d_2}\mathbf{1}_n^T \\
		< & (t+\sigma)\mathbf{1}_{d_2}\mathbf{1}_n^T.
		\end{align*}
		
		Therefore,
		\begin{align*}
		Y^{(2)}\left(\left[\tilde{W}''_i\right]_{i=1}^L,\left[\tilde{b}''_i\right]_{i=1}^L\right)=& h\left(\tilde{Y}^{(2)}\left(\left[\tilde{W}''_i\right]_{i=1}^L,\left[\tilde{b}''_i\right]_{i=1}^L\right)\right)
		\\=&h\left(\frac{1}{\tilde{M}M}Y^{(2)}\left(\left[\tilde{W}'_i\right]_{i=1}^L,\left[\tilde{b}'_i\right]_{i=1}^L\right)+t\mathbf{1}_{d_2}\mathbf{1}_n^T\right)\\
		=&h(t)\mathbf{1}_{d_2}\mathbf{1}_{n}^T+h_{s_-,s_+}\left(\frac{1}{\tilde{M}M}\tilde{Y}^{(2)}\left(\left[\tilde{W}'_i\right]_{i=1}^L,\left[\tilde{b}'_i\right]_{i=1}^L\right)\right)
		\\=&h(t)\mathbf{1}_{d_2}\mathbf{1}_{n}^T+\frac{1}{\tilde{M}M}h_{s_-,s_+}\left(\tilde{Y}^{(2)}\left(\left[\tilde{W}'_i\right]_{i=1}^L,\left[\tilde{b}'_i\right]_{i=1}^L\right)\right)
		\\=&h(t)\mathbf{1}_{d_2}\mathbf{1}_{n}^T+\frac{1}{\tilde{M}M}Y^{(2)}\left(\left[\tilde{W}'_i\right]_{i=1}^L,\left[\tilde{b}'_i\right]_{i=1}^L\right).
		\end{align*}

		Suppose the above argument holds for $k$-th layer.
		
		The output of $(k+1)$-th layer before the activation is 
		\begin{align*}
		&\tilde{Y}^{(k+1)}\left(\left[\tilde{W}''_i\right]_{i=1}^L,\left[\tilde{b}''_i\right]_{i=1}^L\right)\\
		=&\tilde{W}''_{k+1} Y^{(k)}\left(\left[\tilde{W}''_i\right]_{i=1}^L,\left[\tilde{b}''_i\right]_{i=1}^L\right)+\tilde{b}''_{k+1}\mathbf{1}_n^T
		\\
		=&\tilde{W}'_{k+1}\left(h(t)\mathbf{1}_{d_k}\mathbf{1}_{n}^T+\frac{1}{\tilde{M}M}Y^{(k)}\left(\left[\tilde{W}'_i\right]_{i=1}^L,\left[\tilde{b}'_i\right]_{i=1}^L\right)\right)
		\\&+\left(-\tilde{W}'_{k+1}h(t)\mathbf{1}_{d_{k}}+t\mathbf{1}_{d_{k+1}}+\frac{1}{M\tilde{M}}\tilde{b}'_{k+1}\right)\mathbf{1}_n^T
		\\=&\frac{1}{M\tilde{M}}\left(\tilde{W}'_{k+1}Y^{(k)}\left(\left[\tilde{W}'_i\right]_{i=1}^L,\left[\tilde{b}'_i\right]_{i=1}^L\right)+\tilde{b}'_{k+1}\mathbf{1}_n^T\right)+t\mathbf{1}_{d_{k+1}}\mathbf{1}_n^T
		\\=&\frac{1}{M\tilde{M}} \tilde{Y}^{(k+1)}\left(\left[\tilde{W}'_i\right]_{i=1}^L,\left[\tilde{b}'_i\right]_{i=1}^L\right)+t\mathbf{1}_{d_{k+1}}\mathbf{1}_n^T.
		\end{align*}
		Recall proved in Theorem \ref{them:infinite_one} that all components of $\tilde{Y}^{(k+1)}\left(\left[\tilde{W}'_i\right]_{i=1}^L,\left[\tilde{b}'_i\right]_{i=1}^L\right)$ except those that are $0$ are contained in $\tilde{Y}^{(k)}\left(\left[\tilde{W}'_i\right]_{i=1}^L,\left[\tilde{b}'_i\right]_{i=1}^L\right)$.
		We have
		\begin{equation*}
		t\mathbf{1}_{d_{k+1}\mathbf{1}_{n}^T}<\frac{1}{M\tilde{M}} \tilde{Y}^{(k+1)}\left(\left[\tilde{W}'_i\right]_{i=1}^L,\left[\tilde{b}'_i\right]_{i=1}^L\right)+t\mathbf{1}_{d_{k+1}}\mathbf{1}_n^T<(t+\sigma)\mathbf{1}_{d_{k+1}\mathbf{1}_{n}^T}.
		\end{equation*}
		Therefore,
		\begin{align*}
		Y^{(k+1)}\left(\left[\tilde{W}''_i\right]_{i=1}^L,\left[\tilde{b}''_i\right]_{i=1}^L\right)&=h\left(\tilde{Y}^{(k)}\left(\left[\tilde{W}''_i\right]_{i=1}^L,\left[\tilde{b}''_i\right]_{i=1}^L\right)\right)\\
		&=h\left(\frac{1}{M\tilde{M}} \tilde{Y}^{(k+1)}\left(\left[\tilde{W}'_i\right]_{i=1}^L,\left[\tilde{b}'_i\right]_{i=1}^L\right)+t\mathbf{1}_{d_{k+1}}\mathbf{1}_n^T\right)
		\\&=h(t)\mathbf{1}_{d_{k+1}}\mathbf{1}_n^T+\frac{1}{M\tilde{M}} Y^{(k+1)}\left(\left[\tilde{W}'_i\right]_{i=1}^L,\left[\tilde{b}'_i\right]_{i=1}^L\right).
		\end{align*}
		Thus, the argument holds for any $k \in \{2, \ldots, L-1\}$. 

		So,
		\begin{align*}
		Y^{(L)}\left(\left[\tilde{W}''_i\right]_{i=1}^L,\left[\tilde{b}''_i\right]_{i=1}^L\right)=&\tilde{W}''_{L}Y^{(L-1)}\left(\left[\tilde{W}''_i\right]_{i=1}^L,\left[\tilde{b}''_i\right]_{i=1}^L\right)+\tilde{b}''_{L}\\
		=&M\tilde{M}\tilde{W}'_{L}\left(h(t)\mathbf{1}_{d_{L-1}}\mathbf{1}_n^T+\frac{1}{M\tilde{M}} Y^{(L-1)}\left(\left[\tilde{W}'_i\right]_{i=1}^L,\left[\tilde{b}'_i\right]_{i=1}^L\right)\right)
		\\&+\tilde{b}'_L\mathbf{1}_n^T-M\tilde{M}\tilde{W}'_Lh(t)\mathbf{1}_{d_{L-1}}\mathbf{1}_n^T
		\\
		=&Y^{(L)}\left(\left[\tilde{W}'_i\right]_{i=1}^L,\left[\tilde{b}'_i\right]_{i=1}^L\right).
		\end{align*}
		Therefore,
		\begin{equation}
		\hat{\mathcal{R}}\left(\left[\tilde{W}''_i\right]_{i=1}^L,\left[\tilde{b}''_i\right]_{i=1}^L\right)=\hat{\mathcal{R}}\left(\left[\tilde{W}'_i\right]_{i=1}^L,\left[\tilde{b}'_i\right]_{i=1}^L\right)\label{eq:losseq}.
		\end{equation}
		
		From eq. (\ref{eq:losseq}) and Theorem \ref{them:infinite_one}, we have
		\begin{equation*}
		\hat{\mathcal{R}}\left(\left[\tilde{W}''_i\right]_{i=1}^L,\left[\tilde{b}''_i\right]_{i=1}^L\right)<f\left(\tilde{W}\right),
		\end{equation*}
		which completes the proof of local minimizer.
		
		Furthermore, the parameter $M$ used in Lemma \ref{Lemma:localmins3} (not those in this proof) is arbitrary in a continuous interval (cf. eq. (\ref{eq:restrictM})), we have actually constructed infinite spurious local minima.
		
	\end{proof}
	
	Theorem \ref{thm:spurious} relies on Assumption \ref{assum:activation}. We can further remove it by replacing Assumption \ref{assum:width} by a mildly more restrictive variant Assumption \ref{assum:width_new}.
	
	
	
	
	
	\begin{corollary}
		\label{coro: extend to d_1>L+2}
		Suppose that Assumptions \ref{assum:nonlinear}, \ref{assum:disctinct}, and \ref{assum:width_new} hold. Neural networks with arbitrary depth and arbitrary piecewise linear activations (excluding linear functions) have infinitely many spurious local minima under arbitrary continuously differentiable loss whose derivative can equal $0$ only when the prediction and label are the same.
	\end{corollary}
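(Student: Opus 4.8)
The plan is to re-run the entire three-stage argument of Theorem \ref{thm:spurious} almost verbatim, changing only the perturbation gadget of Stage (1), Step (b). Inspecting the proof of Theorem \ref{thm:L=1}, Assumption \ref{assum:activation} enters in exactly one place: the factor $\frac{1}{s_+ + s_-}$ in the second-layer weight $\tilde W_2$ of eq. (\ref{eq:constuctw2}), which forces $s_+ + s_- \ne 0$. After the standard reductions (WLOG $s_+ \ne 0$, and near a turning point $t$ of $h$ the activation splits as $h(x) = h(t) + h_{s_-, s_+}(x - t)$ by eq. (\ref{eq:linear property})), any turning point already satisfies $s_- \ne s_+$, so the only obstructed sub-case is $s_+ + s_- = 0$, i.e. a locally $V$-shaped activation with $s_- = -s_+$ and $s_+ \ne 0$. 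I would therefore first dispose of any turning point with $s_+ + s_- \ne 0$ by invoking Theorem \ref{thm:spurious} directly, and concentrate the new work entirely on the case $s_- = -s_+$.

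The core new ingredient is a \emph{three-node} gadget for the first output coordinate that replaces the two-node gadget of eqs. (\ref{eq:constuctw1})--(\ref{eq:constuctw2}). I keep the two ``kink'' nodes exactly as before, with first-layer pre-activations $u_j + \gamma$ and $-u_j + \gamma$, where $u_j = \tilde Y_{1,j} - \alpha\beta^T x_j - \eta_1$; by the separation lemma (Lemma \ref{Lemma:Separable Lemma}) the indices split into $I$ and $J$ so that $u_j + \gamma$ is negative on $I$ and positive on $J$. With the $V$-shaped activation, feeding these two nodes into the output with opposite coefficients $a$ and $-a$ makes the $u_j$-terms cancel and leaves exactly the differential $\mp 2 s_+ a \gamma$ on $I$ versus $J$ --- a pure constant shift of opposite sign on the two groups, obtained without ever dividing by $s_+ + s_-$. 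I then add a third node whose first-layer pre-activation $\tilde Y_{1,j} - \alpha\beta^T x_j - \eta_3$ is forced strictly positive by taking $\eta_3$ very negative; fed into the output with coefficient $\frac{1}{s_+}$ and a compensating bias, it reproduces the linear value $\tilde Y_{1,j} - \alpha\beta^T x_j$ exactly. The coordinates $2,\dots,d_Y$ are carried linearly by $d_Y - 1$ further always-positive nodes, as in Lemma \ref{Lemma:localmin}. This gadget consumes $3 + (d_Y - 1) = d_Y + 2$ first-layer nodes, which is precisely why Assumption \ref{assum:width_new} strengthens only the first layer to $d_1 \ge d_Y + 2$; the leftover $d_1 - d_Y - 2$ nodes are zero-padded and inactive.

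With this construction the first output coordinate of the perturbed network equals $\tilde Y_{1,j} - \alpha\beta^T x_j$ plus the differential $\mp 2 s_+ a \gamma$, while every other coordinate equals $\tilde Y_{i,j}$. Expanding the empirical risk to first order exactly as in Theorem \ref{thm:L=1} and using $\mV[X^T\ \mathbf{1}_n] = \mathbf{0}$ (eq. (\ref{eq:orthogonal})) to annihilate the $-\alpha\beta^T x_j$ contribution, the risk change reduces to a nonzero multiple of $\gamma \sum_{i \in I} \mV_{1,i}$, which is nonzero by part (1.2) of the separation lemma together with $s_+ \ne 0$. Choosing $\text{sgn}(\gamma)$ opposite to that sum while keeping $\alpha = O(\gamma)$ makes the first-order term strictly negative and dominate the $o(\gamma)$ Taylor remainder, so the constructed point has strictly smaller risk than the local minimum of Lemma \ref{Lemma:localmin} (which is itself unchanged, since it only uses $s_+ \ne 0$). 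Stages (2) and (3) then carry over unmodified: they merely propagate the Stage-(1) network through additional hidden layers by keeping every pre-activation inside a single linear piece, which needs only one spare node per layer, matching $d_i \ge d_Y + 1$ for $i \ge 2$ in Assumption \ref{assum:width_new}; and the infinitude of spurious minima follows again because the scaling parameter $M$ ranges over a continuous interval (eq. (\ref{eq:restrictM})). The main obstacle is the gadget design of the second paragraph --- arranging the two kink nodes so that their linear parts cancel while their kinks survive as an opposite-signed constant shift, thereby trading the forbidden division by $s_+ + s_-$ for exactly one extra neuron in the first hidden layer.
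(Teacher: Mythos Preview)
Your proposal is correct and follows essentially the same approach as the paper: both reduce to the case $s_- = -s_+$ (since otherwise Theorem~\ref{thm:spurious} already applies via Assumption~\ref{assum:activation}, and Assumption~\ref{assum:width_new} implies Assumption~\ref{assum:width}), and both replace the two-node gadget of eqs.~(\ref{eq:constuctw1})--(\ref{eq:constuctw2}) by a three-node gadget --- two kink nodes plus one always-positive carrier --- so the first hidden layer consumes $d_Y + 2$ neurons rather than $d_Y + 1$, while Stages~(2) and~(3) are carried over verbatim. The only cosmetic difference is that the paper's carrier node omits the $-\alpha\beta^T x_j$ term (so the first output coordinate comes out as $\tilde Y_{1,j} \mp \gamma$ directly), whereas your carrier keeps it and you cancel the resulting $-\alpha\beta^T x_j$ contribution in the first-order Taylor expansion via $\mV X^T = \mathbf{0}$; both routes give the same leading term proportional to $\gamma\sum_{i\in I}\mV_{1,i}$.
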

	
	\begin{proof}
		The proof is delivered by modifications of Theorem \ref{thm:L=1} in Stage 1 of Theorem \ref{thm:spurious}'s proof. We only need to prove the corollary under the assumption that $s_{-}+s_{+}=0$.
		
		
		Let the local minimizer constructed in Lemma \ref{Lemma:localmin} be $\left\{\left[\hat{W}_i\right]_{i=1}^2,\left[\hat{b}_i\right]_{i=1}^2\right\}$. Then, we construct a point in the parameter space whose empirical risk is smaller as follows:
		
		\begin{gather*}
		\tilde{W}_{1}=\left[\begin{array}{c}{\tilde{W}_{1,[1:d_X]}-\alpha\beta^T} \\
		{\tilde{W}_{1,[1:d_X]}}\\
		{-\tilde{W}_{1,[1:d_X]}+\alpha\beta^T} \\ {\tilde{W}_{2,[1:d_X]}}\\{\vdots} \\{\tilde{W}_{d_Y,[1:d_X]}}\\{0_{(d_1-d_Y-2)\times d_X}}\end{array}\right],	
		\\
		\tilde{b}_{1}=\left[\begin{array}{c}{\tilde{W}_{1,[d_X+1]}-\eta_1+\gamma}\\
		{\tilde{W}_{1,[d_X+1]}-\eta} 
		\\{-\tilde{W}_{1,[d_X+1]}+\eta_1+\gamma}
		\\{\tilde{W}_{2,[d_X+1]}-\eta_2}\\{\vdots} \\ {\tilde{W}_{d_Y,[d_X+1]}-\eta_{d_Y}}\\{0_{(d_1-d_Y-2)\times 1}}\end{array}\right], 
		\\
		\tilde{W}_{2}=\left[\begin{matrix}
		\frac{1}{2s_{+}}&\frac{1}{s_{+}}  &   -\frac{1}{2s_{+}}   &0&0   & \cdots&0&0&\cdots&0 \\
		0                    &0    &0        &\frac{1}{s_{+}}     &0 & \cdots&0&0&\cdots&0 \\
		0               &0 &0      &  0    & \frac{1}{s_{+}}     & \cdots      &0&0&
		\cdots&0    \\
		\vdots        	&   \vdots     &\vdots      &   \vdots          &   \vdots   &   \ddots    & \vdots&\vdots&\ddots&\vdots \\
		0                    & 0 &0          &0          &  0   &\cdots       &   \frac{1}{s_{+}} &0 &\cdots&0 
		\end{matrix} \right],
		\end{gather*}
		and
		\begin{equation*}
		\tilde{b}_{2}=\left[\begin{array}{c}\eta \\ \eta_2\\{\vdots} \\\eta_{d_Y}\end{array}\right],
		\end{equation*}
		where $\alpha$, $\beta$, and $\eta_i$ are defined the same as those in Theorem \ref{thm:L=1}, and $\eta$ is defined by eq. (\ref{eq:def_eta}).
		
		Then, the output of the first layer is 
		\begin{align*}
		Y^{(1)}\left(\left[\tilde{W}_i\right]_{i=1}^2,\left[\tilde{b}_i\right]_{i=1}^2\right)
		=&h_{s_-,s_+}\left(\tilde{W}_1X+\tilde{b}_1\mathbf{1}_n^T\right)
		\\
		=&h_{s_-,s_+}\left(\left[\begin{matrix}
		\tilde{W}_{1,\cdot}X-\alpha\beta^TX-\eta_1\mathbf{1}_n^T+\gamma\mathbf{1}_n^T\\
		\tilde{W}_{1,\cdot}X-\eta\mathbf{1}_n^T\\
		-\tilde{W}_{1,\cdot}X+\alpha\beta^TX+\eta_1\mathbf{1}_n^T+\gamma\mathbf{1}_n^T\\
		\tilde{W}_{2,\cdot}X-\eta_2\mathbf{1}_n^T\\
		\vdots\\
		\tilde{W}_{d_Y,\cdot}X-\eta_{d_Y}\mathbf{1}_n^T\\
		\mathbf{0}_{d_1-d_Y-2}\mathbf{1}_n^T
		\end{matrix}\right]\right).
		\end{align*}
		
		Further, the output of the whole network is 
		\begin{align*}
		&\hat{Y}\left(\left[\tilde{W}_i\right]_{i=1}^2,\left[\tilde{b}_i\right]_{i=1}^2\right)
		\\
		=&\tilde{W}_2h_{s_-,s_+}\left(\left[\begin{matrix}
		\tilde{W}_{1,\cdot}X-\alpha\beta^TX-\eta_1\mathbf{1}_n^T+\gamma\mathbf{1}_n^T\\
		\tilde{W}_{1,\cdot}X-\eta\mathbf{1}_n^T\\
		-\tilde{W}_{1,\cdot}X+\alpha\beta^TX+\eta_1\mathbf{1}_n^T+\gamma\mathbf{1}_n^T\\
		\tilde{W}_{2,\cdot}X-\eta_2\mathbf{1}_n^T\\
		\vdots\\
		\tilde{W}_{d_Y,\cdot}X-\eta_{d_Y}\mathbf{1}_n^T\\
		\mathbf{0}_{d_1-d_Y-2}\mathbf{1}_n^T
		\end{matrix}\right]\right)+\tilde{b}_2\mathbf{1}_n^T
		\\
		=&\tilde{W}_2h_{s_-,s_+}\left(\left[\begin{matrix}
		\tilde{W}_{1,\cdot}X-\alpha\beta^TX-\eta_1\mathbf{1}_n^T+\gamma\mathbf{1}_n^T\\
		\tilde{W}_{1,\cdot}X-\eta\mathbf{1}_n^T\\
		-\tilde{W}_{1,\cdot}X+\alpha\beta^TX+\eta_1\mathbf{1}_n^T+\gamma\mathbf{1}_n^T\\
		\tilde{W}_{2,\cdot}X-\eta_2\mathbf{1}_n^T\\
		\vdots\\
		\tilde{W}_{d_Y,\cdot}X-\eta_{d_Y}\mathbf{1}_n^T\\
		\mathbf{0}_{d_1-d_Y-2}\mathbf{1}_n^T
		\end{matrix}\right]\right)+\left[\begin{matrix}
		\eta \\ \eta_2\\{\vdots} \\\eta_{d_Y}
		\end{matrix}\right]\mathbf{1}_n^T.
		\end{align*}
		
		Therefore, if $j\le l'$, the $\left(1,j\right)$-th component of $\hat{Y}\left(\left[\tilde{W}_i\right]_{i=1}^2,\left[\tilde{b}_i\right]_{i=1}^2\right)$
		is 
		\begin{align*}
		&\left(\tilde{W}_2\right)_1\tilde{Y}^{(1)}\left(\left[\tilde{W}_i\right]_{i=1}^2,\left[\tilde{b}_i\right]_{i=1}^2\right)_j
		\\=&\frac{1}{2s_{+}} \left(s_-\left( \tilde{Y}_{1,j} -\alpha\beta^T x_j- \eta_1+\gamma\right)+2s_{+}\left(\tilde{Y}_{1,j}-\eta\right)-s_{+}\left(- \tilde{Y}_{1j} +\alpha\beta^T x_j+ \eta_1+\gamma\right)\right)
		\\&+\eta
		\\=&\frac{1}{2s_{+}} \left(-s_+\left( \tilde{Y}_{1,j} -\alpha\beta^T x_j- \eta_1+\gamma\right)+2s_{+}\left(\tilde{Y}_{1,j}-\eta\right)-s_{+}\left(- \tilde{Y}_{1j} +\alpha\beta^T x_j+ \eta_1+\gamma\right)\right)
		\\&+\eta
		\\=&\frac{1}{2s_{+}}\left(2s_{+}\tilde{Y}_{1,j}-2s_{+}\eta-2s_{+}\gamma\right)+\eta
		\\=&\tilde{Y}_{1,j}-\gamma.
		\end{align*}
		Otherwise ($j> l'$), the $\left(1,j\right)$-th component of $\hat{Y}\left(\left[\tilde{W}_i\right]_{i=1}^2,\left[\tilde{b}_i\right]_{i=1}^2\right)$
		is 
		\begin{align*}
		&\left(\tilde{W}_2\right)_1\tilde{Y}^{(1)}\left(\left[\tilde{W}_i\right]_{i=1}^2,\left[\tilde{b}_i\right]_{i=1}^2\right)_j
		\\=&\frac{1}{2s_{+}} \left(s_+\left( \tilde{Y}_{1,j} -\alpha\beta^T x_j- \eta_1+\gamma\right)+2s_{+}\left(\tilde{Y}_{1,j}-\eta\right)-s_{-}\left(- \tilde{Y}_{1j} +\alpha\beta^T x_j+ \eta_1+\gamma\right)\right)
		\\&+\eta
		\\=&\frac{1}{2s_{+}} \left(s_+\left( \tilde{Y}_{1,j} -\alpha\beta^T x_j- \eta_1+\gamma\right)+2s_{+}\left(\tilde{Y}_{1,j}-\eta\right)+s_{+}\left(- \tilde{Y}_{1j} +\alpha\beta^T x_j+ \eta_1+\gamma\right)\right)+\eta
		\\=&\frac{1}{2s_{+}}\left(2s_{+}\tilde{Y}_{1,j}-2s_{+}\eta+2s_{+}\gamma\right)
		+\eta
		\\=&\tilde{Y}_{1,j}+\gamma,
		\end{align*}
		and the $(i,j)$-th ($i>1$) component of $\hat{Y}\left(\left[\tilde{W}_i\right]_{i=1}^2,\left[\tilde{b}_i\right]_{i=1}^2\right)$ is $\tilde{Y}_{i,j}$.
		
		Therefore, we have
		\begin{equation*}
		\left(\tilde{W}_2\left(\tilde{W}_1x_i+\tilde{b}_1\right)+\tilde{b}_2 -\tilde{W}\left[\begin{matrix}
		x_i\\ 1
		\end{matrix}\right]\right)_{j}=\left\{
		\begin{aligned}
		&-\gamma  , & j=1,i\le l; \\
		&\gamma  , & j=1,i> l;\\
		&0 , & j\ge 2.
		\end{aligned}
		\right.
		\end{equation*} 
		
		Then, similar to Theorem \ref{thm:L=1}, we have
		\begin{align}
		\nonumber
		&\hat{\mathcal{R}}\left(\left[\tilde{W}_i\right]_{i=1}^L,\left[\tilde{b}_i\right]_{i=1}^L\right)-\hat{\mathcal{R}}\left(\left[\hat{W}_i\right]_{i=1}^L,\left[\hat{b}_i\right]_{i=1}^L\right)
		\\\nonumber
		=&\frac{1}{n}\sum_{i=1}^nl\left(Y_i,\tilde{W}_2\left(\tilde{W}_1x_i+\tilde{b}_1\right)+\tilde{b}_2  \right)- \frac{1}{n}\sum_{i=1}^{n} l\left(Y_i,\hat{W}\left[\begin{matrix}
		x_i\\1
		\end{matrix}\right]\right)
		\\\nonumber
		=&
		\frac{1}{n}\sum_{i=1}^n \nabla_{\hat{Y}_i} l\left(Y_i,\tilde{W}\left[\begin{matrix}
		x_i\\ 1
		\end{matrix}\right]\right) \left(\tilde{W}_2\left(\tilde{W}_1x_i+\tilde{b}_1\mathbf{1}_n^T\right)+\tilde{b}_2\mathbf{1}_n^T -\tilde{W}\left[\begin{matrix}
		x_i\\ 1
		\end{matrix}\right]\right)
		\\\nonumber
		&+\sum_{i=1}^no \left( \left\Vert \tilde{W}_2\left( \tilde{W}_1x_i+\tilde{b}_1\mathbf{1}_n^T\right)+\tilde{b}_2\mathbf{1}_n^T -\tilde{W}\left[\begin{matrix}
		x_i\\ 1
		\end{matrix}\right] \right\Vert\right)
		\\\nonumber
		=&-\frac{2}{n}\sum_{i=1}^{l'} \mV_{1,i}\gamma+o(\gamma),
		\end{align}
		where $\mV$ and $l'$ are also defined the same as those in Theorem \ref{thm:L=1}.
		
		When $\gamma$ is sufficiently small and $\text{sgn}(\gamma)=\text{sgn} \left(\sum_{i=1}^{l'} \mV_{1,i}\right)$, we have that
		\begin{equation*}
		\hat{\mathcal{R}}\left(\left(\left[\tilde{W}_i\right]_{i=1}^2,\left[\tilde{b}_i\right]_{i=1}^2\right)\right)<f(\tilde{W}).
		\end{equation*}
		
		This complete the proof of Corollary \ref{coro: extend to d_1>L+2}.
	\end{proof}
	
	\subsection{A preparation lemma}
	\label{sec:separabel}
	
	We now prove the preparation lemma used above.
	
	\begin{lemma}
		\label{Lemma:Separable Lemma}
		Suppose $\vu=\left(u_1 \cdots u_n\right) \in \mathbb{R}^{1\times n}$ which satisfies $\vu\ne \mathbf{0}$ and 
		\begin{gather}
		\label{condition: u and 1}
		\sum_{i=1}^n u_i=0,
		\end{gather}
		while \{$x_1$,...,$x_n$\} is a set of vector  $\subset \mathbb{R}^{m\times 1}$.
		Suppose index set $S=\{1,2,\cdots,n\}$. Then for any series of real number \{$v_1$, $\cdots$, $v_n$\}, there exists a non-empty separation $I$, $J$ of $S$, which satisfies $I\cup J=S$, $I \cap J=\emptyset$ and both $I$ and $J$ are not empty, a vector $\beta \in \mathbb{R}^{m\times 1}$,such that, 
		
		(1.1) for any sufficiently small positive real $\alpha$, $i\in I$, and $j \in J$, we have $v_i-\alpha \beta^Tx_i<v_j-\alpha \beta^Tx_j$;
		
		(1.2) $\sum_{i\in I} u_i \ne 0$.
	\end{lemma}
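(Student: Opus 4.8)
The plan is to use the perturbation $\alpha\beta^{T}x_i$ as a tie-breaking device: it refines the coarse ordering carried by the scalars $v_i$ into a strict \emph{total} order that is stable for all small $\alpha>0$, after which a clean cut with nonvanishing partial $u$-sum is located by a pigeonhole argument on the constraint $\sum_i u_i=0$. First I would fix $\beta$. Since the points $\{x_1,\dots,x_n\}$ are distinct, for each pair $i\ne j$ the locus $\{\beta:\beta^{T}(x_i-x_j)=0\}$ is a proper hyperplane of $\mathbb{R}^{m\times 1}$; as there are only finitely many pairs, I can pick $\beta$ lying in none of them, so that $\beta^{T}x_1,\dots,\beta^{T}x_n$ are pairwise distinct.

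Next I would record the sorted structure. Let $w_1<\cdots<w_r$ be the distinct values among the $v_i$, and relabel $S$ together with boundaries $0=s_0<s_1<\cdots<s_r=n$ so that the indices $s_{t-1}+1,\dots,s_t$ are exactly those with $v$-value $w_t$, ordered inside this block by strictly decreasing $\beta^{T}x_i$. I then claim that for every sufficiently small $\alpha>0$ the perturbed keys $\kappa_i(\alpha):=v_i-\alpha\beta^{T}x_i$ satisfy $\kappa_1(\alpha)<\cdots<\kappa_n(\alpha)$: across a block boundary the gap $w_{t+1}-w_t>0$ dominates the $O(\alpha)$ correction, while inside a block $\kappa_i(\alpha)-\kappa_j(\alpha)=-\alpha\,\beta^{T}(x_i-x_j)$, whose sign is independent of $\alpha>0$ and is negative exactly when $\beta^{T}x_i>\beta^{T}x_j$. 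The integer $s_{t+1}$ invoked in the main proof is precisely the right boundary of the block containing the eventual cut.

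With a strict total order available, I would extract the cut from $\sum_i u_i=0$. Writing $P_t=\sum_{i=1}^{t}u_i$ in the sorted labeling, one has $P_0=P_n=0$; were every intermediate $P_t$ $(1\le t\le n-1)$ to vanish, then $u_t=P_t-P_{t-1}=0$ for all $t$, contradicting $\vu\ne\mathbf{0}$. Hence some $l'\in\{1,\dots,n-1\}$ has $P_{l'}\ne 0$. Taking $I=[1:l']$ and $J=[l'+1:n]$ yields a nonempty separation with $\sum_{i\in I}u_i=P_{l'}\ne 0$, which is (1.2); and for $i\in I$, $j\in J$, and all small $\alpha>0$, strict monotonicity of $\kappa$ gives $v_i-\alpha\beta^{T}x_i<v_j-\alpha\beta^{T}x_j$, which is (1.1).

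The genericity of $\beta$ and the partial-sum pigeonhole are routine. The one point requiring care, and the main obstacle, is \emph{uniformity in $\alpha$}: a single ordering, and hence a single cut, must serve simultaneously for all sufficiently small positive $\alpha$ rather than for one value at a time. This is exactly why the order is built lexicographically, with primary key $v_i$ and secondary key $\beta^{T}x_i$, so that each of the finitely many pairwise comparisons has a sign that is eventually constant as $\alpha\to0^{+}$; taking the minimum of the finitely many thresholds at which these signs stabilize delivers the uniform range of admissible $\alpha$.
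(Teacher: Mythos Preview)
Your proof is correct and is genuinely different from the paper's argument. The paper proceeds in two cases: first it checks whether some initial-segment cut $I=\{1,\dots,s_j\}$ taken at a block boundary of the $v$-values already satisfies (1.2) with $\beta=0$; if not, it locates a block containing a nonzero $u$-entry, picks the index $l'$ in that block of largest $\|x_{l'}\|$ with $u_{l'}\ne 0$, sets $\beta=x_{l'}$, and then separates the block according to whether $\langle x_{l'},x_i\rangle\ge\|x_{l'}\|^2$. Verifying (1.1) and (1.2) then requires a somewhat delicate Cauchy--Schwarz argument. By contrast, you bypass the case split entirely: a \emph{generic} $\beta$ (avoiding finitely many hyperplanes, which uses the distinctness of the $x_i$ just as the paper does) yields a strict lexicographic total order stable for all small $\alpha$, after which a single partial-sum pigeonhole locates the cut. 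Your route is shorter and more transparent; the paper's route has the minor advantage that $\beta$ is explicit (either $0$ or one of the data points), which feeds directly into the quantitative gap estimates in the Remark following the lemma---though those estimates would go through equally well with your generic $\beta$, since the same formulas $(v_i-\alpha\beta^Tx_i)-(v_{l'}-\alpha\beta^Tx_{l'})=\alpha\beta^T(x_{l'}-x_i)$ inside the block and $v_j-v_{l'}+\alpha\beta^T(x_{l'}-x_j)$ across blocks hold verbatim.
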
 
	
	\begin{proof}
		
		If there exists a non-empty separation $I$ and $J$ of the index set $S$, such that when $\beta=0$, (1.1) and (1.2) hold, the lemma is apparently correct.
		
		Otherwise, suppose that there is no non-empty separation $I$ and $J$ of the index set $S$ such that (1.1) and (1.2) hold simultaneously when $\beta = 0$.
		
		Some number $v_i$ in the sequence $(v_1, v_2, \cdots, v_n)$ are probably equal to each other. We rerarrange the sequence by the increasing order as follows,
		\begin{equation}
		\label{eq:sequence}
		v_1=v_2=\cdots=v_{s_1}<v_{s_1+1}=\cdots=v_{s_2}<\cdots<v_{s_{k-1}+1}=\cdots=v_{s_k} = v_n,
		\end{equation}
		where $s_k=n$.
		
		Then, for any $j \in \{1,2, \cdots,k-1\}$, we argue that
		\begin{equation*}
		\sum_{i=1}^{s_j} u_i = 0.
		\end{equation*}
		Otherwise, suppose there exists a $s_j$, such that
		\begin{equation*}
		\sum_{i=1}^{s_j} u_i \ne 0.
		\end{equation*}
		Let  $I=\{1,2,...,s_j\}$ and $J=\{s_j+1,...,n\}$. Then, when $\beta=0$, we have
		\begin{equation*}
		v_i-\alpha \beta^Tx_i = v_i < v_j = v_j-\alpha \beta^Tx_j,
		\end{equation*}
		and
		\begin{equation*}
		\sum_{i\in I} u_i = \sum_{i=1}^{s_j} u_i \ne 0,
		\end{equation*}
		which are exactly the arguments (1.1) and (1.2). Thereby we construct a contrary example. Therefore, for any $j \in \{1,2, \cdots,k-1\}$, we have
		\begin{equation*}
		\sum_{i=1}^{s_j} u_i = 0.
		\end{equation*}
		
		Since we assume that $\mathbf{u}\ne 0$, there exists an index $t \in \{ 1, \ldots, k-1 \}$, such that there exists an index $i \in \{s_t+1,...,s_{t+1}\}$ that $u_i\ne 0$.
		
		Let $l \in \{s_t+1,...,s_{t+1}\}$ is the index such that $x_l$ has the largest norm while $u_l\ne 0$:
		\begin{equation}
		\label{eq:defl}
		l=\mathop{\argmax}\limits_{j\in\{s_{t}+1,...,s_{t+1}\}, \text{ } u_j \ne 0}\left\Vert x_j \right\Vert.
		\end{equation}
		
		We further rearrange the sequence $(v_{s_t+1},...,v_{s_{t+1}})$ such that there is an index $l' \in \{s_t+1, \ldots, s_{t+1}\}$, 
		\begin{equation*}
		\Vert x_{l'}\Vert = \mathop{\max}\limits_{j\in\{s_{t}+1,...,s_{t+1}\}, \text{ } u_j \ne 0}\left\Vert x_j \right\Vert,
		\end{equation*}
		and  
		\begin{align}
		\forall i \in \{s_{t}+1,...,l'\}&,\text{ } \langle x_{l'},x_i \rangle\ge\left\Vert x_{l'} \right\Vert^2;\label{eq:<l}
		\\  \forall i \in \{l'+1,\cdots,s_{t+1}\}&,\text{ } \langle x_{l'},x_i \rangle< \left\Vert x_{l'} \right\Vert^2.\label{eq:>l}
		\end{align}
		It is worth noting that it is probably $l' = s_{t+1}$, but it is a trivial case that would not influence the result of this lemma.
		
		Let  $I=\{1,...,l'\}$,  $J=\{l'+1,...,n\}$, and $\beta = x_{l'}$. We prove (1.1) and (1.2) as follows.
		
		\textbf{Proof of argument (1.1)}. 
		
		We argue that for any $i\in I$, $v_i-\alpha \beta^T x_i\le v_{l'}-\alpha \beta^T x_{l'}$ and for any $j\in J$, $v_j-\alpha \beta^T x_j> v_{l'}-\alpha \beta^T x_{l'}$.
		
		There are three situations:
		
		(A) $i \in \{1, \ldots, s_t\}$ and $j\in$ $\{s_{t+1}+1,\cdots,n\}$. Applying eq. (\ref{eq:sequence}), for any $i \in \{1, \ldots, s_t\}$ and $j \in \{s_{t+1}+1,\cdots,n\}$, we have that $v_i<v_{l'}$ and $v_j>v_{l'}$.
		Therefore, when $\alpha$ is sufficiently small, we have the following inequalities,
		\begin{gather*}
		v_i-\alpha \beta^T x_i<v_{l'}-\alpha \beta^T x_{l'},\\
		v_j-\alpha \beta^T x_j> v_{l'}-\alpha \beta^T x_{l'}.
		\end{gather*}
		
		(B)  $i$ $\in$ $\{s_{t}+1,\cdots,l'\}$. Applying eq. (\ref{eq:<l}) and because of $\alpha>0$, we have
		\begin{equation*}
		-\alpha \langle \beta,x_i \rangle\le -\alpha\left\Vert \beta \right\Vert^2=-\alpha \langle \beta,x_{l'} \rangle.
		\end{equation*} 
		Since $v_i=v_{l'}$, it further leads to
		\begin{equation*}
		v_i-\alpha \beta^Tx_i\le v_{l'}-\alpha \beta^Tx_{l'}.
		\end{equation*}
		
		(C) $j \in \{l'+1,\cdots,s_{t+1}\}$. Similarly, applying eq. (\ref{eq:>l}) and because of  $\alpha>0$, we have
		\begin{equation*}
		-\alpha \langle \beta,x_j \rangle> -\alpha\left\Vert \beta \right\Vert^2=-\alpha \langle \beta,x_{l'} \rangle.
		\end{equation*} 
		Since $v_j=v_{l'}$, it further leads to
		\begin{equation*}
		v_j-\alpha \beta^Tx_j > v_{l'}-\alpha \beta^Tx_{l'},
		\end{equation*}
		which is exactly the argument (1.1).
		
		\textbf{Proof of argument (1.2).}
		
		We argue that for any $i$ $\in$ $\{s_t+1,\cdots,l'-1\}$, $u_i = 0$. Otherwise, suppose there exists an $i$ $\in$ $\{s_t+1,\cdots,l'-1\}$ such that $u_i\ne 0$. From eq. (\ref{eq:defl}), we have $\Vert x_i\Vert \le\Vert x_{l'}\Vert$. Therefore,
		\begin{equation*}
		\langle x_{l'},x_i \rangle\le \Vert x_{l'}\Vert\Vert x_i\Vert\le \Vert x_{l'}\Vert^2 ,
		\end{equation*} 
		where the first inequality strictly holds if the vector $x_{l'}$ and $x_i$ have the same direction, while the second inequlity strictly holds when $x_i$ and $x_{l'}$ have the same norm. Because $x_{l'} \ne x_i$, we have the following inequality,
		\begin{equation*}
		\langle x_{l'},x_i \rangle< \Vert x_{l'}\Vert^2 ,
		\end{equation*}
		which contradicts to eq. (\ref{eq:<l}), i.e., 
		\begin{equation*}
		\langle x_{l'},x_i \rangle\ge \Vert x_{l'}\Vert^2,\text{ }\forall i \in\{s_t+1,\cdots,l'\}.
		\end{equation*} 
		Therefore, 
		\begin{equation*}
		\label{eq:sum}
		\sum_{i\in I} u_i= \sum_{i=1}^{s_t}u_{i}+\sum_{i=s_t+1}^{l'-1}u_{i}+u_{l'}=u_{l'} \ne 0,
		\end{equation*}
		which is exactly the argument (1.2).
		
		The proof is completed.
	\end{proof}
	
	\begin{remark}
		For any $i\in \{l'+1,...,s_{t+1}\}$, we have
		\begin{align}
		\label{eq:gap1}
		\left(v_i-\alpha \beta^T x_i\right)-\left(v_{l'}-\alpha \beta^T x_{l'}\right) = \alpha \beta^T(x_{l'}-x_i),
		\end{align} 
		while for any $j\in \{s_{t+1}+1,...,n\}$, we have
		\begin{align}
		\label{eq:gap2}
		\left(v_j-\alpha \beta^T x_j\right)-\left(v_{l'}-\alpha \beta^T x_{l'}\right) = v_j-v_{l'}+\alpha \beta^T(x_{l'}-x_j).
		\end{align} 
		Because $v_j > v_{l'}$, when the real number $\alpha$ is sufficiently small, we have
		\begin{equation*}
		\alpha \beta^T(x_{l'}-x_i) < v_j-v_{l'}+\alpha \beta^T(x_{l'}-x_j).
		\end{equation*}
		Applying eqs. (\ref{eq:gap1}) and (\ref{eq:gap2}), we have
		\begin{equation*}
		\left(v_i-\alpha \beta^T x_i\right)-\left(v_{l'}-\alpha \beta^T x_{l'}\right) < \left(v_j-\alpha \beta^T x_j\right)-\left(v_{l'}-\alpha \beta^T x_{l'}\right).
		\end{equation*}
		Therefore, if $l'<s_{t+1}$, we have
		\begin{equation}
		\label{eq:gap_<}
		\min_{i\in \{l'+1,...,n\}}  \left(v_i-\alpha \beta^T x_i\right)-\left(v_{l'}-\alpha \beta^T x_{l'}\right)=\min_{i\in \{l'+1,...,s_{t+1}\}} \alpha \beta^T(x_{l'}-x_i);
		\end{equation}
		while if $l'=s_{t+1}$,
		\begin{equation}
		\label{eq:gap_=}
		\min_{i\in \{l'+1,...,n\}}  \left(v_i-\alpha \beta^T x_i\right)-\left(v_{l'}-\alpha \beta^T x_{l'}\right)=\min_{i\in \{l'+1,...,n\}} v_i-v_l+\alpha \beta^T(x_{l'}-x_i).
		\end{equation}
		
		Eqs. (\ref{eq:gap_<}) and (\ref{eq:gap_=}) make senses because $l'<n$. Otherwise, from Lemma \ref{Lemma:Separable Lemma} we have $\sum_{i=1}^n u_i\ne 0$, which contradicts to the assumption.
	\end{remark}
	
	\section{Proofs of Theorem \ref{thm:analogous_convexity}, Theorem \ref{thm:quotient}, Corollary \ref{thm:quotient_space}, and Corollary \ref{thm:linear_collapse}}
	\label{app:structure_of_ReLU_network}
	
	This appendix gives the proofs of Theorem \ref{thm:analogous_convexity}, Theorem \ref{thm:quotient}, Corollary \ref{thm:quotient_space}, and Corollary \ref{thm:linear_collapse} omitted from Section \ref{sec:structure_of_ReLU_network}.
	
	\subsection{Squared loss}
	
	We first check that the squared loss is strictly convex, which is even restrictive than ``convex".
	
	\begin{lemma}
		\label{lemma:convex_squared_loss}
		The empirical risk $\hat{\mathcal R}$ under squared loss (defined by eq. (\ref{eq:squared_loss})) is strictly convex with respect to the prediction $\hat Y$.
	\end{lemma}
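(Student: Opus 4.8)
The plan is to establish strict convexity by computing the Hessian of $\hat{\mathcal R}$ in the prediction variable $\hat Y$ and verifying that it is positive definite; for a twice-differentiable function, positive definiteness of the Hessian everywhere is equivalent to strict convexity. Since the loss is a simple quadratic, this is the most economical route, and it also makes transparent why the claim holds in the prediction $\hat Y$ even though the analogous statement fails in the weights $W$.

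First I would rewrite the empirical risk as a function of the aggregated prediction. With $\hat Y_i$ denoting the $i$-th prediction and using $\sum_{i=1}^n \Vert Y_i - \hat Y_i\Vert^2 = \Vert Y - \hat Y\Vert_F^2$, the risk becomes
\begin{equation*}
\hat{\mathcal R}(\hat Y) = \frac{1}{n}\sum_{i=1}^n \frac{1}{2}\left\Vert Y_i - \hat Y_i\right\Vert_F^2 = \frac{1}{2n}\left\Vert Y - \hat Y\right\Vert_F^2,
\end{equation*}
which is a quadratic form in the entries of $\hat Y$.

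Next I would differentiate twice. The gradient is $\nabla_{\hat Y}\hat{\mathcal R} = \frac{1}{n}(\hat Y - Y)$, which also recovers the formula already recorded in Lemma \ref{lemma:squared_loss} up to the averaging constant, and, viewing $\hat Y$ as the vector of its $d_Y \times n$ entries, the Hessian is $\frac{1}{n} I$, a positive multiple of the identity. Since this matrix is positive definite at every point, $\hat{\mathcal R}$ is strictly convex in $\hat Y$. Equivalently, one may verify the definition directly: for distinct predictions $\hat Y^{(1)} \ne \hat Y^{(2)}$ and $\lambda \in (0,1)$, expanding the quadratic gives
\begin{equation*}
\lambda \hat{\mathcal R}(\hat Y^{(1)}) + (1-\lambda)\hat{\mathcal R}(\hat Y^{(2)}) - \hat{\mathcal R}\left(\lambda \hat Y^{(1)} + (1-\lambda)\hat Y^{(2)}\right) = \frac{\lambda(1-\lambda)}{2n}\left\Vert \hat Y^{(1)} - \hat Y^{(2)}\right\Vert_F^2 > 0,
\end{equation*}
where strict positivity uses $\hat Y^{(1)} \ne \hat Y^{(2)}$ together with $\lambda(1-\lambda) > 0$.

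There is no genuine obstacle here: the argument reduces to a one-line Hessian computation. The only point requiring minor care is bookkeeping — confirming that the sum of the per-sample squared Frobenius losses collapses to a single Frobenius norm of the aggregated residual, and checking that the Hessian is positive \emph{definite} rather than merely positive semidefinite, since it is precisely the strictness of this inequality that upgrades convexity to strict convexity. I would also emphasize that the strictness established here is exploited later (for instance in Theorem \ref{thm:quotient}) and that it holds specifically in the prediction variable $\hat Y$, in contrast to the weights $W$, where composition with the nonlinear network destroys convexity.
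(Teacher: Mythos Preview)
Your proposal is correct and follows essentially the same approach as the paper: both compute the second derivative of the squared loss in $\hat Y$ and observe it is strictly positive (the paper writes it as a scalar second derivative equal to $2>0$, while you more carefully present the full Hessian $\tfrac{1}{n}I$ and add an explicit verification of the strict-convexity inequality). Your version is more thorough, but there is no substantive difference in method.
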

	
	\begin{proof}
		The second derivative of the empirical risk $\hat{\mathcal R}$ under squared loss with respect to the prediction $\hat Y$ is 
		\begin{equation*}
		\frac{\partial^2 l_{ce}(Y, \hat{Y})}{\partial \hat{Y}^2}= \frac{\partial^2 (y - \hat{Y})^2}{\partial \hat{Y}^2} = 2 > 0. 
		\end{equation*} 
		Therefore, the empirical risk $\hat{\mathcal R}$ under squared loss is strictly convex with respect to prediction $\hat{Y}$.
	\end{proof}
	
		
		
		\subsection{Smooth and multilinear partition.}
		
		If the activations are all linear functions, the neural networks is reduced to a multilinear model. The loss surface is apparently smooth and multilinear. The nonlinearity in the activations largely reshape the landscape of the loss surface. Specifically, if the input data flows through the linear parts of every activation functions, the output falls in a smooth and multilinear region in the loss surface. When some parameter changes by a sufficiently small swift, the data flow may not move out of the linear parts of the activations. This fact guarantees that each smooth and multilinear regions expands to an open cell. Meanwhile, every nonlinear point in the activations is non-differentiable. If the input data flows through these nonlinear points, the corresponding empirical risk is not smooth with respect to the parameters. Therefore, the nonlinear points in activations correspond to the non-differentiable boundaries between cells on the loss surface.
		
		\subsection{Every local minimum is globally minimal within a cell.}
		
		\begin{proof}[Proof of Theorem \ref{thm:analogous_convexity}]
		
		In every cell, the input sample points flows through the same linear parts of the activations no matter what values the parameters are. 
		
		(1) We first proves that the empirical risk $\hat{\mathcal R}$ equals to a convex function with respect to a variable $\hat W$ that is calculated from the parameters $W$.
		
		Suppose $(W_1,W_2)$ is a local minimum within a cell. We argue that 
		\begin{equation}
		\label{eq:loss_transform}
		\sum_{i=1}^{n} l\left(y_i,W_2 \text{diag}\left(A_{\cdot, i}\right)W_1 x_i\right)=\sum_{i=1}^{n}l\left(y_i, A_{\cdot, i}^T \text{diag}(W_2)W_1x_i\right),
		\end{equation}
		where $A_{\cdot, i}$ is the $i$-th column of the following matrix
		\begin{equation}
		A = \left[\begin{matrix}
		h_{s_-,s_+}'((W_1)_{1, \cdot}x_1)&\cdots&h_{s_-,s_+}'((W_1)_{1, \cdot}x_n)\\
		\vdots&\ddots&\vdots\\
		h_{s_-,s_+}'((W_1)_{d_1, \cdot}x_1)&\cdots&h_{s_-,s_+}'((W_1)_{d_1, \cdot}x_n)
		\end{matrix}\right].
		\end{equation}
		The left-hand side (LHS) is as follows,
		\begin{align*}
		\text{LHS}&=\sum_{i=1}^{n} l\left(y_i,W_2\text{diag}\left(A_{\cdot, i}\right)W_1 x_i\right)
		\\
		&=\sum_{i=1}^{n}l \left(y_i,\left[\begin{matrix}
		(W_2)_{1, 1} A_{1, i} &\cdots&(W_2)_{1, d_1} A_{d_1, i}
		\end{matrix}\right]W_1 x_i\right).
		\end{align*}
		Meanwhile, the right-hand side (RHS) is as follows, 
		\begin{align*}
		\text{RHS}&=\sum_{i=1}^{n}l\left(y_i, A_{\cdot, i}^T\text{diag}(W_2)W_1x_i\right),\\
		&=\sum_{i=1}^{n} l\left(y_i,\left[\begin{matrix}
		(W_2)_{1, 1} A_{1, i} &\cdots&(W_2)_{1, d_1} A_{d_1, i}
		\end{matrix}\right]W_1 x_i\right).
		\end{align*}
		Apparently, $\text{LHS} = \text{RHS}$. Thereby, we proved eq. (\ref{eq:loss_transform}).

		Afterwards, we define
		\begin{equation}
		\label{eq:hatW}
		\hat{W}_1 = \text{diag}(W_2) W_1,
		\end{equation}
		and then straighten the matrix $\hat{W}_1$ to a vector $\hat{W}$,
		\begin{equation}
		\label{eq: straighthatW}
		\hat{W} =
		\left(\begin{matrix}
		(\hat{W}_1)_{1, \cdot}&\cdots&(\hat{W}_1)_{d_1, \cdot}
		\end{matrix}\right),
		\end{equation}
		Also define
		\begin{equation}
		\hat{X}=\left(\begin{matrix}
		A_{\cdot, 1}\otimes x_1&\cdots&A_{\cdot, n}\otimes x_n
		\end{matrix}\right).
		\end{equation}
		Then, we can prove that the following equations,
		\begin{align}
		\left(\begin{matrix}
		A_{\cdot, 1}^T\hat{W}_1x_1&\cdots& A_{\cdot, n}^T\hat{W}_1x_n
		\end{matrix}\right) = & \left(\begin{matrix}
		(\hat{W}_1)_{1, \cdot} & \cdots&(\hat{W}_1)_{d_1, \cdot}
		\end{matrix}\right)\left(\begin{matrix}
		A_{\cdot, 1}\otimes x_1&\cdots&A_{\cdot, n}\otimes x_n
		\end{matrix}\right) \nonumber\\
		\label{eq:local_convex_risk_app}
		= & \hat W \hat X.
		\end{align}
		
		Applying eq. (\ref{eq:local_convex_risk_app}), the empirical risk is transferred to a convex function as follows,
		\begin{align}
		\label{eq:convex_rearrange}
		\hat{\mathcal R} (W_1, W_2) = & \frac{1}{n} \sum_{i=1}^{n} l\left(y_i,\left(A_{\cdot, i}\right)^T \text{diag}(W_2) W_1 x_i\right) =  \frac{1}{n} \sum_{i=1}^{n} l\left(y_i,\left(A_{\cdot, i}\right)^T \hat W_1 x_i\right) \nonumber\\
		= & \frac{1}{n} \sum_{i=1}^n l \left( y_i, \hat{W}\hat{X}_i \right).
		\end{align}
		We can see that the empirical risk is rearranged as a convex function in terms of $\hat W$ which unite the two weight matrices $W_1$ and $W_2$ and the activation $h$ are together as $\hat W$.
		
		Applying eqs. (\ref{eq:hatW}) and (\ref{eq: straighthatW}), we have
		\begin{equation*} 
		\hat{W}=\left[\begin{matrix}
		(W_2)_1(W_1)_{1, \cdot} & \cdots&(W_2)_{d_1}(W_1)_{d_1, \cdot}
		\end{matrix}\right].
		\end{equation*}
		
		(2) We then prove that the local minima (including global minima) of the empirical risk $\hat{\mathcal R}$ with respect to the parameter $W$ is also local minima with respect to the corresponding variable $\hat W$.
		
		We first prove that for any $i\in [1:d_1d_2]$, we have
		\begin{equation*}
		e_i\hat{X}\nabla=0,
		\end{equation*}
		where $\nabla$ is defined as follows,
		\begin{equation*}
		\nabla = \left[\begin{matrix}\nabla_{\left(\hat{W}\hat{X}\right)_1}l\left(Y_1, \left(\hat{W}\hat{X}\right)_1\right)&\cdots&\nabla_{\left(\hat{W}\hat{X}\right)_n}l\left(Y_n, \left(\hat{W}\hat{X}\right)_n\right)\end{matrix}\right]^T.
		\end{equation*}
		
		To see this, we divide $i$ into two cases: $\left(W_2\right)_i\ne 0$ and $\left(W_2\right)_i= 0$.
		
		\textbf{Case 1:} $\mathbf{\left(W_2\right)_i\ne 0\text{.}}$
		
		The local minimizer of the empirical risk $\hat{\mathcal R}$ with respect to the parameter $W$ satisfies the following equation,
		\begin{equation*}
		\frac{\partial \hat{\mathcal R}}{\partial (W_1)_{i, j}} = 0.
		\end{equation*}
		Therefore,
		\begin{align}
		\label{eq:gradient_zero}
		0 &=  \frac{\partial \hat{\mathcal R}}{\partial (W_1)_{i, j}}\nonumber\\
		&= \frac{\partial \left(\sum\limits_{k=1}^n l\left(Y_{\cdot, k}, \left(\hat{W}\hat{X}\right)_{\cdot, k}\right)\right)}{\partial (W_1)_{i, j}} \nonumber\\
		& 	\begin{array}{c@{\hspace{-5pt}}l}
		=\sum\limits_{k=1}^n\left[
		\begin{array}{ccccccc}
		0  &\cdots & 0 & \left(W_2\right)_i& 0& \cdots  &0\\
		\end{array}     
		\right]\hat{X}_k\nabla_{\left(\hat{W}\hat{X}\right)_{\cdot, k}}l\left(Y_{\cdot, k}, \left(\hat{W}\hat{X}\right)_{\cdot, k}\right),
		\\[-9pt]
		\begin{array}{cccc}{{\hspace{5pt}}}
		\underbrace{\rule{12mm}{0mm}}_{d_X(i-1)+j-1}&\rule{5mm}{0mm}&
		\underbrace{\rule{12mm}{0mm}}_{d_Xd_1-d_X(i-1)-j}&\rule{35mm}{0mm}
		\end{array}
		\end{array}    
		\end{align}
		where $(W_2)$ is a vector and $(W_2)_i$ is its $i$-th component.

		Then, divid the both hand sides of eq. (\ref{eq:gradient_zero}) with $(W_2)_i$, we can get the following equation,
		\begin{align*}
		(e_{d_X(i-1)+j}\hat{X}) \nabla =  0.
		\end{align*}

		\textbf{Case 2:} $\mathbf{\left(W_2\right)_i= 0}.$ Suppose $\mathbf{u}_1 \in \mathbb{R}^{d_0}$ is a unitary vector, $u_2 \in \mathbb{R}$ is a real number, and $\varepsilon$ is a small enough positive constant. Then, define a disturbance of $W_1$ and $W_2$ as follows,
		\begin{align*}
		&	\begin{array}{c@{\hspace{-5pt}}l}
		\Delta W_1=\left[
		\begin{array}{ccccccc}
		0  &\cdots & 0 & \varepsilon \mathbf{u}_1& 0& \cdots  &0\\
		\end{array}     
		\right],
		\\[-9pt]
		\begin{array}{ccc}{{\hspace{35pt}}}
		\underbrace{\rule{12mm}{0mm}}_{d_X(i-1)}&\rule{8mm}{0mm}&
		\underbrace{\rule{12mm}{0mm}}_{d_1d_X-d_Xi}
		\end{array}
		\end{array} \\
		&\begin{array}{c@{\hspace{-5pt}}l}
		\Delta W_2=\left[
		\begin{array}{ccccccc}
		0  &\cdots & 0 & \varepsilon^2 u_2& 0& \cdots  &0\\
		\end{array}     
		\right].
		\\[-9pt]
		\begin{array}{ccc}{{\hspace{35pt}}}
		\underbrace{\rule{12mm}{0mm}}_{i-1}&\rule{8mm}{0mm}&
		\underbrace{\rule{12mm}{0mm}}_{d_1-i}
		\end{array}
		\end{array} 
		\end{align*}
		When $\varepsilon$ is sufficiently small, $\Delta W_1$ and $\Delta W_2$ are also sufficiently small. Since $(W_1,W_2)$ is a local minimum,  we have
		\begin{align}
		\label{eq:property_of_local}
		& \frac{1}{n} \sum_{k=1}^n l\left(Y_k, \left(\left(\hat{W}+\Delta\right)\hat{X}\right)_k\right) \nonumber\\
		= & \hat{\mathcal R}(W_1 + \Delta W_1, W_2 + \Delta W_2) \nonumber\\
		\ge & \hat{\mathcal R}(W_1, W_2) \nonumber\\
		= & \frac{1}{n} \sum_{k=1}^n l\left(Y_k, \left(\hat{W}\hat{X}\right)_k\right),
		\end{align}
		where $\Delta$ is defined as follows,
		\begin{align}
		\nonumber
		\Delta&=\left[\begin{matrix}
		(W_2+\Delta W_2)_1(W_1+\Delta W_1)_1&\cdots&(W_2+\Delta W_2)_{d_1}(W_1+\Delta W_1)_{d_1}
		\end{matrix}\right]
		\\\nonumber
		&\quad-\left[\begin{matrix}
		(W_2)_1(W_1)_1&\cdots&(W_2)_{d_1}(W_1)_{d_1}
		\end{matrix}\right]
		\\\label{eq:def_of_D}
		&	\begin{array}{c@{\hspace{-5pt}}l}
		\overset{(*)}{=}\left[
		\begin{array}{ccccccc}
		0  &\cdots & 0 & \varepsilon^2 u_2 \left(\varepsilon \mathbf{u}_1+(W_1)_i\right)& 0& \cdots  &0\\
		\end{array}     
		\right].
		\\[-9pt]
		\begin{array}{ccc}{{\hspace{10pt}}}
		\underbrace{\rule{12mm}{0mm}}_{d_X(i-1)}&\rule{30mm}{0mm}&
		\underbrace{\rule{12mm}{0mm}}_{d_1d_X-d_Xi}
		\end{array}
		\end{array} 
		\end{align}
		Here, eq. $(*)$ comes from $(W_2)_i=0$. Rearrange eq. (\ref{eq:property_of_local}) and apply the Taylor's Theorem, we can get that
		\begin{equation*}
		\Delta \cdot \hat{X} \nabla + \mathbf{O}\left(\Vert\Delta \cdot \hat{X}\Vert^2\right)\ge 0.
		\end{equation*} 
		Applying eq. (\ref{eq:def_of_D}), we have
		\begin{align}
		\nonumber
		&\quad\begin{array}{c@{\hspace{0pt}}l}
		\left[
		\begin{array}{ccccccc}
		0  &\cdots & 0 & \varepsilon^2 u_2 \left(\varepsilon \mathbf{u}_1+(W_1)_i\right)& 0& \cdots  &0\\
		\end{array}     
		\right]\hat{X}\nabla
		\\[-9pt]
		\begin{array}{cccc}{{\hspace{25pt}}}
		\underbrace{\rule{12mm}{0mm}}_{d_X(i-1)}&\rule{32mm}{0mm}&
		\underbrace{\rule{12mm}{0mm}}_{d_Xd_1-id_X}&\rule{10mm}{0mm}
		\end{array}
		\end{array} 
		\\\nonumber
		&\quad\begin{array}{c@{\hspace{0pt}}l}
		+\varepsilon^4 O\left(\Vert\left[
		\begin{array}{ccccccc}
		0  &\cdots & 0 &  u_2 \left(\varepsilon \mathbf{u}_1+(W_1)_i\right)& 0& \cdots  &0\\
		\end{array}     
		\right]\hat{X}\Vert^2\right)
		\\[-9pt]
		\begin{array}{cccc}{{\hspace{50pt}}}
		\underbrace{\rule{12mm}{0mm}}_{d_X(i-1)}&\rule{28mm}{0mm}&
		\underbrace{\rule{12mm}{0mm}}_{d_Xd_1-id_X}&\rule{10mm}{0mm}
		\end{array}
		\end{array} 
		\\\nonumber
		&\begin{array}{c@{\hspace{0pt}}l}
		\overset{(**)}{=}\left[
		\begin{array}{ccccccc}
		0  &\cdots & 0 & \varepsilon^3 u_2  \mathbf{u}_1& 0& \cdots  &0\\
		\end{array}     
		\right]\hat{X}\nabla
		\\[-9pt]
		\begin{array}{cccc}{{\hspace{35pt}}}
		\underbrace{\rule{12mm}{0mm}}_{d_X(i-1)}&\rule{13mm}{0mm}&
		\underbrace{\rule{12mm}{0mm}}_{d_Xd_1-id_X}&\rule{10mm}{0mm}
		\end{array}
		\end{array} 
		\\\nonumber
		&\quad\begin{array}{c@{\hspace{0pt}}l}
		+\varepsilon^4 O\left(\Vert\left[
		\begin{array}{ccccccc}
		0  &\cdots & 0 &  u_2 \left(\varepsilon \mathbf{u}_1+(W_1)_i\right)& 0& \cdots  &0\\
		\end{array}     
		\right]\hat{X}\Vert^2\right)
		\\[-9pt]
		\begin{array}{cccc}{{\hspace{50pt}}}
		\underbrace{\rule{12mm}{0mm}}_{d_X(i-1)}&\rule{28mm}{0mm}&
		\underbrace{\rule{12mm}{0mm}}_{d_Xd_1-id_X}&\rule{10mm}{0mm}
		\end{array}
		\end{array} 
		\\\label{eq:o^3}
		&\begin{array}{c}{{\hspace{10pt}}}
		=\varepsilon^3\left[
		\begin{array}{ccccccc}
		0  &\cdots & 0 &  u_2  \mathbf{u}_1& 0& \cdots  &0\\
		\end{array}     
		\right]\hat{X}\nabla	+\mathbf{o}(\varepsilon^3)
		\\[-9pt]
		\begin{array}{cccc}{{\hspace{20pt}}}
		\underbrace{\rule{12mm}{0mm}}_{d_X(i-1)}&\rule{10mm}{0mm}&
		\underbrace{\rule{12mm}{0mm}}_{d_Xd_1-id_X}&\rule{10mm}{0mm}
		\end{array}
		\end{array} 	
		\\&\begin{array}{c}{{\hspace{10pt}}}
		\ge 0
		\end{array}.
		\end{align}
		Here, eq. $(**)$ can be obtained from follows. Because $W_2$ is a local minimizer, for any component $(W_2)_i$ of $W_2$,
		\begin{align*}
		\frac{\partial \left(\sum_{k=1}^n l\left(Y_k, \left(\hat{W}\hat{X}\right)_k\right)\right)}{\partial (W_2)_i} = & 0,
		\end{align*}
		which leads to
		\begin{align*}
		\begin{array}{c@{\hspace{-5pt}}l}
		\left[
		\begin{array}{ccccccc}
		0  &\cdots & 0 &  (W_1)_i& 0& \cdots  &0\\
		\end{array}     
		\right]\hat{X}\nabla=0.
		\\[-9pt]
		\begin{array}{cccc}{{\hspace{0pt}}}
		\underbrace{\rule{12mm}{0mm}}_{d_X(i-1)}&\rule{12mm}{0mm}&
		\underbrace{\rule{12mm}{0mm}}_{d_Xd_1-id_X}&\rule{10mm}{0mm}
		\end{array}
		\end{array} 
		\end{align*}
		When $\varepsilon$ approaches $0$, eq. (\ref{eq:o^3}) leads to the following inequality, 
		\begin{equation*}
		\begin{array}{c@{\hspace{-5pt}}l}
		\left[
		\begin{array}{ccccccc}
		0  &\cdots & 0 &  u_2  \mathbf{u}_1& 0& \cdots  &0\\
		\end{array}     
		\right]\hat{X}\nabla\ge 0.
		\\[-9pt]
		\begin{array}{cccc}{{\hspace{3pt}}}
		\underbrace{\rule{12mm}{0mm}}_{d_X(i-1)}&\rule{10mm}{0mm}&
		\underbrace{\rule{12mm}{0mm}}_{d_Xd_1-id_X}&\rule{10mm}{0mm}
		\end{array}
		\end{array} 
		\end{equation*}
		Since $\mathbf{u}_1$ and $u_2$ are arbitrarily picked (while the norms equal $1$), the inequality above further leads to
		\begin{equation}
		\begin{array}{c@{\hspace{-5pt}}l}
		\left[
		\begin{array}{ccccccc}
		0  &\cdots & 0 &  e_j& 0& \cdots  &0\\
		\end{array}     
		\right]\hat{X}\nabla= 0,
		\\[-9pt]
		\begin{array}{cccc}{{\hspace{0pt}}}
		\underbrace{\rule{12mm}{0mm}}_{d_X(i-1)}&\rule{3mm}{0mm}&
		\underbrace{\rule{12mm}{0mm}}_{d_Xd_1-id_X}&\rule{10mm}{0mm}
		\end{array}
		\end{array} 
		\end{equation}
		which finishes the proof of the argument.
		
		Therefore, for any $i$ and $j$, we have proven that
		\begin{equation*}
		e_{d_0(i-1)+j}\hat{X}\nabla = 0,
		\end{equation*}
		which demonstrates that
		\begin{equation*}
		\hat{X}\nabla = 0,
		\end{equation*}
		which means $\hat{W}$ is also a local minimizer of the empirical risk $\hat{\mathcal R}$,
		\begin{equation}
		\label{eq:last_eq}
		\hat{\mathcal R}(W) = \sum_{i=1}^n l(Y_i,W\hat{X}_i).
		\end{equation}
		
		(3) Applying the property of convex function, $\hat{W}$ is a global minimizer of the empirical risk $\hat{\mathcal R}$, which leads to $(W_1,W_2)$ is a global minimum inside this cell.
		
		The proof is completed.
		\end{proof}
		
		
			
		\subsection{Equivalence classes of local minimum valleys in cells.}
		
		
		\begin{proof}[Proof of Theorem \ref{thm:quotient} and Corollary \ref{thm:quotient_space}]
		
		In the proof of Theorem \ref{thm:analogous_convexity}, we constructed a map $Q$: $(W_1, W_2)\rightarrow \hat{W}$. Further, in any fixed cell, the represented hypothesis of a neural network is uniquely determined by $\hat{W}$.
		
		\textbf{We first prove that all local minima in a cell are concentrated as a local minimum valley.}
		
		Since the loss function $l$ is strictly convex, the empirical risk has one unique local minimum (which is also a global minimum) with respect to $\hat W$ in every cell, if there exists some local minimum in the cell. Meanwhile, we have proved that all local minima with respect to $(W_1, W_2)$ are also local minima with respect to the corresponding $\hat W$. Therefore, all local minima with respect to $(W_1, W_2)$ correspond one unique $\hat W$. Within a cell, when $W_1$ expands by a positive real factor $\alpha$ to $W_1'$ and $W_2$ shrinks by the same positive real factor $\alpha$ to $W_2'$, we have $Q(W_1, W_2) = Q(W_1', W_2')$, i.e., the $\hat W$ remains invariant.
		
		
		Further, we argue that all local minima in a cell are connected with each other by a continuous path, on which the empirical risk is invariant. For every local minima pair $(W_1, W_2)$ and $(W_1', W_2')$, we have
		\begin{equation}
		\text{diag}(W_2) W_1 = \text{diag}(W_2') W_1'.
		\end{equation}
		Since $h_{s_-,s_+}'(W_1 X)=h_{s_-,s_+}'(W_1' X)$ (element-wise), for every $i\in [1,d_1]$,
		\begin{equation*}
		\text{sgn}\left(\left(W_2\right)_i\right)=\text{sgn}\left(\left(W_2'\right)_i\right).
		\end{equation*}
		Therefore, a continuous path from $(W_1, W_2)$ to $(W_1', W_2')$ can be constructed by finite moves, each of which expands a component of $W_2$ by a real constant $\alpha$ and then shrinks the corresponding line of $W_1$ by the same constant $\alpha$.
		
		\textbf{We then prove that all local minima in a cell constitute an equivalence class.}
		
		Define an operation $\sim_R$ as follows,
		\begin{equation*}
			(W_1^1, W_2^1)\sim_R (W_1^2, W_2^2),
		\end{equation*}
		if
		\begin{equation*}
			Q(W_1^1, W_2^1)=Q(W_1^2, W_2^2).
		\end{equation*}
		
		We then argue that $\sim_R$ is an equivalence relation. The three properties of equivalence relations are checked as follows.
		
		\textbf{(1) Reflexivity:}
		
		For any $(W_1, W_2)$, we have
		\begin{equation*}
		Q(W_1, W_2)=Q(W_1, W_2).
		\end{equation*} 
		Therefore,
		\begin{equation*}
		(W_1, W_2)\sim_R(W_1, W_2).
		\end{equation*}
		
		\textbf{(2) Symmetry:}
		
		For any pair $(W_1^1, W_2^1)$ and $(W_1^2, W_2^2)$, Suppose that
		\begin{equation*}
		(W_1^1, W_2^1)\sim_R (W_1^2, W_2^2).
		\end{equation*}
		Thus,
		\begin{equation*}
		Q(W_1^1, W_2^1) = Q(W_1^2, W_2^2).
		\end{equation*} 
		Apparently,
		\begin{equation*}
		Q(W_1^2, W_2^2)=Q(W_1^1, W_2^1).
		\end{equation*} 
		Therefore, 
		\begin{equation*}
		Q(W_1^2, W_2^2)\sim_R Q(W_1^1, W_2^1).
		\end{equation*}
		
		\textbf{(3) Transitivity:}
		
		For any $(W_1^1, W_2^1)$, $(W_1^2, W_2^2)$, and $(W_1^3, W_2^3)$, suppose that
		\begin{gather*}
		(W_1^1, W_2^1)\sim_R (W_1^2, W_2^2),\\
		(W_1^2, W_2^2)\sim_R (W_1^3, W_2^3).
		\end{gather*}
		Then,
		\begin{gather*}
		Q(W_1^1, W_2^1)=Q(W_1^2, W_2^2),\\
		Q(W_1^2, W_2^2)=Q(W_1^3, W_2^3).
		\end{gather*} 
		Apparently,
		\begin{equation*}
		Q(W_1^1, W_2^1) = Q(W_1^2, W_2^2) = Q(W_1^3, W_2^3).
		\end{equation*} 
		Therefore,
		\begin{equation*}
		(W_1^1, W_2^1)\sim_R (W_1^3, W_2^3).
		\end{equation*}
		
		\textbf{We then prove the mapping $Q$ is the quotient map.}
		
		Define a map as follows,
		\begin{equation*}
		T: (W_1, W_2)\rightarrow (diag(W_2)W_1, \mathbf{1}_{1\times d_1}).
		\end{equation*}
		
		We then define an operator $\oplus$ as,
		\begin{equation*}
		(W_1^1, W_2^1)\oplus (W_1^2, W_2^2)=T(W_1^1, W_2^1)+T(W_1^2, W_2^2),
		\end{equation*}
		 the inverse of $(W_1,W_2)$ is defined to be $(-W_1,W_2)$ and the zero element is defined to be $(\mathbf{0},\mathbf{1}_{1\times d_1})$.
		 
		  Obviously, the following is a linear mapping:
		  \begin{equation*}
		  Q: ((\mathbb{R}^{d_1\times d_X},\mathbb{R}^{1\times d_1}),\oplus)\rightarrow(\mathbb{R}^{1\times d_X d_1},+).
		  \end{equation*}
		  For any pair $(W_1^1, W_2^1)$ and $(W_1^2, W_2^2)$, we have
		  \begin{equation*}
		  (W_1^1, W_2^1)\sim_R (W_1^2, W_2^2),
		  \end{equation*}
		  if and only if 
		 \begin{equation*}
		 (W_1^1, W_2^1)\oplus(-W_1^2, W_2^2)\in \text{Ker}(Q).
		 \end{equation*}
		  Therefore, the quotient space $(\mathbb{R}^{d_1\times d_X},\mathbb{R}^{1\times d_1})/\text{Ker}(Q)$ is a definition of the equivalence relation $\sim_R$.
	
	The proof is completed.
	\end{proof}
		
		\subsection{Linear collapse.}
		
		
		When there is no nonlinearities in the activations, there is apparently no non-differentiable  regions on the loss surface. In other words, the loss surface is a single smooth and multilinear cell.
		
	
\end{document}